\documentclass[11pt,a4paper]{amsart}

\usepackage{amsmath,amssymb,amsthm,mathtools}
\usepackage{booktabs}
\usepackage{enumitem}
\usepackage{graphicx}
\graphicspath{{figures/}}
\usepackage{geometry}
\usepackage[authoryear,round]{natbib}
\usepackage{hyperref}

\numberwithin{equation}{section}

\newtheorem{theorem}{Theorem}[section]
\newtheorem{lemma}[theorem]{Lemma}
\newtheorem{proposition}[theorem]{Proposition}
\newtheorem{corollary}[theorem]{Corollary}
\theoremstyle{definition}
\newtheorem{definition}[theorem]{Definition}
\theoremstyle{remark}
\newtheorem{remark}[theorem]{Remark}

\DeclareMathOperator{\tr}{tr}
\DeclareMathOperator{\sgn}{sgn}
\DeclareMathOperator{\diag}{diag}
\newcommand{\R}{\mathbb{R}}
\newcommand{\cI}{\mathcal{I}}
\newcommand{\cJ}{\mathcal{J}}
\newcommand{\cA}{\mathcal{A}}
\newcommand{\cG}{\mathcal{G}}
\newcommand{\eps}{\varepsilon}
\newcommand{\wto}{\xrightarrow{d}}

\title[Spiked SVM under HDLSS]{Asymptotic properties of support vector machines\\
in high-dimension, low-sample-size settings under a spiked model}

\author{Yugo Nakayama}

\makeatletter
\def\@setauthors{%
  \begingroup
  \def\thanks{\protect\thanks@warning}%
  \trivlist
  \centering\footnotesize \@topsep30\p@\relax
  \advance\@topsep by -\baselineskip
  \item\relax
  \author@andify\authors
  \def\\{\protect\linebreak}%
  \MakeUppercase{\authors}\\[6pt]
  {\normalsize\normalfont Independent researcher}%
  \ifx\@empty\contribs
  \else
    ,\penalty-3 \space \@setcontribs
    \@closetoccontribs
  \fi
  \endtrivlist
  \endgroup
}
\makeatother

\begin{document}

\begin{abstract}
In this paper, we consider asymptotic properties of the support vector machine (SVM)
in high-dimension, low-sample-size (HDLSS) settings under a spiked model.
The existing theory of the SVM in the HDLSS context relies on the geometric
representation of HDLSS data, which requires that the eigenvalues of the covariance
matrices are not dominant. We first show that the geometric representation does not
hold under the spiked model. We show that the Gram matrix of HDLSS data converges
in distribution to a random matrix, namely, the HDLSS data converge to a random
configuration in a finite-dimensional space whose dimension is given by the number
of the spikes. We show that the misclassification rates of the SVM do not tend to
zero, that is, the SVM does not hold the consistency property. We also show that
the bias-corrected SVM (BC-SVM) does not give preferable performance in this setting
because the bias term itself should be modified. In order to overcome such
difficulties, we propose a spike-corrected SVM (SC-SVM). We show that the SC-SVM
holds the consistency property when the sample size goes to infinity, and that the
growth of the sample size is essential in the sense that any projection-based
procedure fails when the sample size is fixed. Finally, we check the performance
of the classifiers by numerical simulations.
\end{abstract}

\keywords{Bias-corrected SVM; Geometric representation; Hard-margin SVM; HDLSS;
Limiting configuration; Noise-reduction methodology; Spiked model.}

\maketitle

\tableofcontents

\section{Introduction}
\label{sec:intro}

High-dimension, low-sample-size (HDLSS) data situations occur in many areas of modern
science such as genetic microarrays, medical imaging, text recognition, finance and
chemometrics. In such data situations, the dimension $d$ is much larger than the sample
size $N$, so that the conventional multivariate procedures based on the large sample
asymptotic theory do not work.

Suppose we have independent and $d$-variate two populations, $\pi_i$, $i=1,2$, having
an unknown mean vector $\mu_i$ and unknown covariance matrix $\Sigma_i\succeq O$.
We have independent and identically distributed (i.i.d.) observations,
$x_{ij}$, $j=1,\dots,n_i$, from each $\pi_i$. We assume $n_i\ge 1$
($i=1,2$); the case $n_1=n_2=1$ is treated explicitly in Section~\ref{sec:example}.
Let $x_0$ be an observation vector of an individual belonging to one of the two
populations. We assume $x_0$ and $x_{ij}$'s are independent. Let
$N=n_1+n_2$ and $\Delta=\|\mu_1-\mu_2\|^2$, where $\|\cdot\|$ denotes the Euclidean
norm. Let $e(i)$ denote the error rate of misclassifying an individual from $\pi_i$
into the other class. We say that a classifier holds the consistency property if
\begin{equation}
e(1)+e(2)\longrightarrow 0
\quad\text{as }d\to\infty.
\label{eq:1.1}
\end{equation}
In the HDLSS context, \citet{hall2005} and \citet{marron2007} considered distance
weighted classifiers. \citet{chan2009} and \citet{aoshima2011} considered
distance-based classifiers. In particular, \citet{aoshima2011} gave the
misclassification rate adjusted classifier for multiclass, high-dimensional data in
which misclassification rates are no more than specified thresholds. On the other
hand, \citet{aoshima2014,aoshima2019} considered geometric classifiers based on a
geometric representation of HDLSS data, and discussed asymptotic properties and
optimality of the classifiers under high-dimension, non-sparse settings.

In the field of machine learning, a typical method of classification is the support
vector machine (SVM). Since HDLSS data are mostly separable by a hyperplane, one
usually considers the hard-margin SVM in the HDLSS context. For the linear SVM,
\citet{hall2005}, \citet{chan2009} and \citet{qiao2015} showed that the
misclassification rates tend to zero as $d\to\infty$ under certain severe conditions.
\citet{nakayama2017} investigated asymptotic properties of the linear SVM for HDLSS
data. They showed that the linear SVM is heavily biased when $n_1\neq n_2$, and that
the bias causes the strong inconsistency in which the misclassification rate tends to
one. In order to overcome such difficulties, they proposed a bias-corrected linear
SVM (BC-SVM) and showed that it holds the consistency property even for imbalanced
data. \citet{nakayama2020} investigated asymptotic properties of the SVM with the
Gaussian kernel and gave a choice of the scale parameter involved in the kernel.
\citet{nakayama2021,nakayama2022} considered the SVM for high-dimensional imbalanced
data in more general frameworks.

\subsection{The geometric representation and its limitation}

The above studies on the SVM rely on the geometric representation of HDLSS data.
Let $\theta_i=\tr(\Sigma_i)$ and $\theta=(\theta_1+\theta_2)/2$. Under the condition
\begin{equation}
\frac{\tr(\Sigma_i^2)}{\theta_i^2}\to 0
\quad\text{as }d\to\infty,
\label{eq:1.2}
\end{equation}
together with mild moment conditions, \citet{hall2005} and \citet{ahn2007} showed that
\[
\frac{\|x_{ij}-x_{ik}\|^2}{\theta_i}=2+o_P(1)\quad(j\neq k),
\qquad
\frac{\|x_{1j}-x_{2k}\|^2}{\theta}
=\frac{\theta_1+\theta_2}{\theta}+\frac{\Delta}{\theta}+o_P(1).
\]
That is, after rescaling by $\sqrt{\theta}$, all the observations are located at the
vertices of a regular simplex whose edge lengths are determined only by $\theta_i$
and $\Delta$. The randomness of the data vanishes in the limit. This deterministic
structure is the reason why the solution of the hard-margin SVM can be written in a
closed form in the HDLSS context, and why the discriminant function is described only
by $\Delta$ and
\begin{equation}
\kappa=\frac{\tr(\Sigma_1)}{n_1}-\frac{\tr(\Sigma_2)}{n_2}.
\label{eq:1.3}
\end{equation}
The quantity $\kappa$ is the bias term that the BC-SVM subtracts.

However, the condition~\eqref{eq:1.2} requires that the largest eigenvalue
$\lambda_{i1}$ of $\Sigma_i$ is negligible compared with $\theta_i$, since
$\lambda_{i1}^2\le\tr(\Sigma_i^2)$. In actual high-dimensional data such as gene
expression data, on the other hand, the eigenvalue structure is often spiked in the
sense that a few eigenvalues are much larger than the others and carry a substantial
proportion of the total variance. \citet{yata2012,yata2013} investigated HDLSS
asymptotic properties of the principal component analysis (PCA) under such spiked
models and proposed the noise-reduction (NR) methodology, which gives consistent
estimation of the spiked eigenvalues and eigenvectors when the sample size grows.
\citet{jung2009} showed that the sample principal component directions are
inconsistent when the spiked eigenvalue is of the same order as the trace and the
sample size is fixed.

Thus a natural question arises: what happens to the SVM in HDLSS settings when the
covariance matrices have dominant eigenvalues? To the best of our knowledge, this
question has not been studied. Since the geometric representation is the foundation
of the existing theory, we cannot expect that the known results carry over.

\subsection{Contributions of this paper}

In this paper, we consider asymptotic properties of the SVM in HDLSS settings under
a spiked model in which
\begin{equation}
\frac{\lambda_{is}}{\theta}\longrightarrow c_{is}\in(0,1),
\quad s=1,\dots,m_i,
\label{eq:1.4}
\end{equation}
where $m_i$ is a fixed integer. Our contributions are summarized as follows.

\begin{enumerate}[label=(\roman*)]
\item We show that the geometric representation does not hold under~\eqref{eq:1.4}.
We show that the Gram matrix of the rescaled data converges in distribution to a
random matrix, and we give its explicit form (Theorem~\ref{thm:1}). Moreover, we
show that the limiting Gram matrix is realized as the Gram matrix of an explicit
family of vectors in $\R^{1+m_1+m_2}$ (Lemma~\ref{lem:6}), namely, the HDLSS data
converge to a random configuration in a finite-dimensional space together with $N$
mutually orthogonal noise axes. We call this result the limiting configuration
theorem.

\item We show that the solution of the hard-margin SVM converges in distribution to
the solution of the limiting problem (Theorem~\ref{thm:2}), and that the discriminant
function converges in distribution to a non-degenerate random variable
(Theorem~\ref{thm:3}). Consequently, we show that the misclassification rates do not
tend to zero, that is, the SVM does not hold the consistency property
(Corollary~\ref{cor:1}). We emphasize that this inconsistency is different from the
strong inconsistency of \citet{nakayama2017}, in which the misclassification rate
tends to one. Here the misclassification rate tends neither to zero nor to one.

\item We give an example in which the limiting misclassification rate is obtained in
a closed form (Theorem~\ref{thm:4}). We show that a spike degrades the performance of
the SVM even when it is orthogonal to the mean difference. This phenomenon cannot
be observed in the existing theory.

\item We show that the bias term should be modified from~\eqref{eq:1.3} to
$\kappa_*=\tr(\Sigma_{1*})/n_1-\tr(\Sigma_{2*})/n_2$, where $\Sigma_{i*}$ denotes
the non-spiked part of $\Sigma_i$ (Proposition~\ref{prop:1}). We show that the
BC-SVM overcorrects the bias under the spiked model.

\item In order to overcome such difficulties, we propose a spike-corrected SVM
(SC-SVM), which removes the uninformative spiked directions by the NR methodology
and applies the BC-SVM with the modified bias term. We show that the SC-SVM holds
the consistency property when $n_i\to\infty$ (Theorem~\ref{thm:5}). On the other
hand, we show that any projection-based procedure does not hold the consistency
property when $N$ is fixed (Proposition~\ref{prop:3}), so that the growth of the
sample size is essential.

\item Finally, we check the performance of the classifiers by numerical simulations.
\end{enumerate}

The rest of the paper is organized as follows. In Section~\ref{sec:notation}, we
give the notation and the assumptions. In Section~\ref{sec:inner}, we investigate
asymptotic behaviour of the inner products of HDLSS data under the spiked model.
In Section~\ref{sec:config}, we give the limiting configuration theorem. In
Section~\ref{sec:svm}, we show the convergence of the SVM. In
Section~\ref{sec:disc}, we give the limit of the discriminant function and the
inconsistency. In Section~\ref{sec:example}, we give the explicit example. In
Section~\ref{sec:bias}, we give the modified bias term. In
Section~\ref{sec:scsvm}, we propose the SC-SVM. In Section~\ref{sec:fixed}, we
discuss the case that $N$ is fixed. In Section~\ref{sec:sim}, we give numerical
simulations. In Section~\ref{sec:conclude}, we give concluding remarks. The
mathematical tools used in Sections~\ref{sec:svm} and~\ref{sec:disc} are summarized
in Appendix~\ref{app:A}.

\section{Notation and assumptions}
\label{sec:notation}

\subsection{Notation}

We consider the spectral decomposition
\[
\Sigma_i=H_i\Lambda_i H_i^T,\quad
\Lambda_i=\diag(\lambda_{i1},\dots,\lambda_{id}),\quad
\lambda_{i1}\ge\cdots\ge\lambda_{id}\ge 0,
\]
where $H_i=[h_{i1},\dots,h_{id}]$ is an orthogonal matrix. We write
\begin{equation}
x_{ij}=\mu_i+\sum_{s=1}^d\sqrt{\lambda_{is}}\,z_{ijs}\,h_{is},
\quad j=1,\dots,n_i,
\label{eq:2.1}
\end{equation}
where $z_{ijs}$'s are random variables such that $E(z_{ijs})=0$ and
$\mathrm{Var}(z_{ijs})=1$. We use the same expression for $x_0$ and write its
coefficients as $z_{0s}$. Let $\mu=\mu_1-\mu_2$. We write $y_{1j}=+1$ and
$y_{2j}=-1$ for the labels, and put $\eps_1=+1$ and $\eps_2=-1$. Throughout this
paper we consider the HDLSS asymptotic framework $d\to\infty$. In
Sections~\ref{sec:inner} to~\ref{sec:bias} and Section~\ref{sec:fixed}, $n_1$ and
$n_2$ are fixed.

We regard the double index $(i,j)$ as a single index and put
\begin{equation}
\cI=\{(1,1),\dots,(1,n_1),(2,1),\dots,(2,n_2)\},
\quad|\cI|=N.
\label{eq:2.2}
\end{equation}
Vectors indexed by $\cI$, such as the dual variable $\alpha=(\alpha_{ij})$ and the
label vector $y=(y_{ij})$, are arranged in this order, and matrices indexed by
$\cI\times\cI$ are $N\times N$. We write $p,q\in\cI$ when a single index suffices.

\subsection{Assumptions}

We assume the following conditions.

\begin{enumerate}[label=(A\arabic*),leftmargin=*]
\item For each $i$ and $j$, it holds that $E(z_{ijs}^4)\le M<\infty$ for a constant
$M$ not depending on $d$, and that $E(z_{ijs}^2 z_{ijt}^2)=1$ for $s\neq t$ and
$E(z_{ijs}z_{ijt}z_{iju}z_{ijv})=0$ for distinct $s,t,u,v$.

\item Let $\theta_i=\tr(\Sigma_i)$ and $\theta=(\theta_1+\theta_2)/2$. It holds that
$\theta_i\to\infty$ and $\theta_i/\theta\to\gamma_i\in(0,\infty)$, where
$\gamma_1+\gamma_2=2$.
\end{enumerate}

\begin{enumerate}[label=(S),leftmargin=*]
\item There exist fixed integers $m_i\ge 1$ such that~\eqref{eq:1.4} holds. Let
$\Sigma_{i*}=\sum_{s>m_i}\lambda_{is}h_{is}h_{is}^T$ be the non-spiked part. It
holds that $\tr(\Sigma_{i*}^2)/\theta^2\to 0$ and
$\tr(\Sigma_{i*})/\theta\to c_{i0}>0$.
\end{enumerate}

\begin{enumerate}[label=(A\arabic*),leftmargin=*,start=3]
\item It holds that $\Delta/\theta\to\delta\in[0,\infty)$. When $\Delta>0$, it
holds that
\[
b_{is}=\frac{h_{is}^T\mu}{\sqrt{\Delta}}\to\beta_{is}\quad(s\le m_i),
\qquad
\rho_{st}=h_{1s}^T h_{2t}\to r_{st}\quad(s\le m_1,\ t\le m_2).
\]

\item Let
$Z^{(d)}=(\{z_{ijs}\}_{(i,j)\in\cI,\,s\le m_i},\{z_{0s}\}_{s\le m_1})\in\R^L$,
where $L=n_1 m_1+n_2 m_2+m_1$. The distribution of $Z^{(d)}$ converges in
distribution to a distribution $\mathcal{L}_Z$ as $d\to\infty$. We denote by $Z$ a
random vector distributed as $\mathcal{L}_Z$.
\end{enumerate}

The condition~(A1) is standard in this context and is met by Gaussian populations.
The condition~(A2) is a normalization. The condition~(S) is the spiked model. Note
that $\sum_{s\le m_i}c_{is}+c_{i0}=\gamma_i$, and that
$\lambda_{i,m_i+1}/\theta\to 0$ because $\lambda_{i,m_i+1}^2\le\tr(\Sigma_{i*}^2)$.
We use this fact repeatedly.

The condition~(A3) is not restrictive because $|b_{is}|\le 1$ and $|\rho_{st}|\le 1$
hold from the Cauchy--Schwarz inequality, so that convergent subsequences always
exist. We assume $\delta<\infty$ throughout. When $\delta=\infty$, the signal is
much stronger than the spikes and the difficulties discussed in this paper
disappear.

The condition~(A4) is required so that the limiting objects in the subsequent
sections are well defined. Since $n_i$ is fixed, (A4) is met automatically when the
distribution of $z_{ijs}$ does not depend on $d$. Hereafter, $z_{ijs}$ and $z_{0s}$
appearing in the limiting quantities denote the components of $Z$.

\section{Asymptotic behaviour of inner products}
\label{sec:inner}

In this section, we investigate asymptotic behaviour of the inner products of HDLSS
data under the spiked model. In the existing theory, the inner products have
deterministic limits. On the other hand, under the spiked model, they have random
limits. This is the source of all the results in this paper.

We write $v_{ij}=\sum_{s\le m_i}\sqrt{\lambda_{is}}z_{ijs}h_{is}$ for the spiked
part and $\xi_{ij}=\sum_{s>m_i}\sqrt{\lambda_{is}}z_{ijs}h_{is}$ for the
non-spiked part, so that $x_{ij}-\mu_i=v_{ij}+\xi_{ij}$.

\begin{lemma}
\label{lem:1}
Assume \textup{(A1)}, \textup{(A2)} and \textup{(S)}. Then, for $j\neq k$, it holds
that
\begin{equation}
\frac{(x_{ij}-\mu_i)^T(x_{ik}-\mu_i)}{\theta}
=\sum_{s=1}^{m_i}\frac{\lambda_{is}}{\theta}z_{ijs}z_{iks}+o_P(1).
\label{eq:3.1}
\end{equation}
\begin{equation}
\frac{\|x_{ij}-\mu_i\|^2}{\theta}
=\sum_{s=1}^{m_i}\frac{\lambda_{is}}{\theta}z_{ijs}^2+\frac{\tr(\Sigma_{i*})}{\theta}+o_P(1).
\label{eq:3.2}
\end{equation}
\end{lemma}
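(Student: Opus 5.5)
We decompose $x_{ij}-\mu_i=v_{ij}+\xi_{ij}$ and expand each inner product into spiked--spiked, spiked--noise and noise--noise terms. Since $h_{i1},\dots,h_{id}$ are orthonormal, the spiked part is exactly
\[
v_{ij}^Tv_{ik}=\sum_{s\le m_i}\lambda_{is}z_{ijs}z_{iks},
\qquad
\|v_{ij}\|^2=\sum_{s\le m_i}\lambda_{is}z_{ijs}^2,
\]
which are the leading terms in \eqref{eq:3.1} and \eqref{eq:3.2}. Also $v_{ij}^T\xi_{ik}=0$ identically for any $j,k$, because $v_{ij}$ and $\xi_{ik}$ lie in the orthogonal spans of $\{h_{is}\}_{s\le m_i}$ and $\{h_{is}\}_{s>m_i}$. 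The cross terms therefore vanish exactly. It remains to show
\[
\frac{\xi_{ij}^T\xi_{ik}}{\theta}=o_P(1)\quad (j\neq k),
\qquad
\frac{\|\xi_{ij}\|^2-\tr(\Sigma_{i*})}{\theta}=o_P(1).
\]

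For $j\neq k$ we use a second-moment (Chebyshev) bound. By independence of the $j$th and $k$th observations and $E(z_{ijs}z_{ijt})=\delta_{st}$,
\[
E\bigl\{(\xi_{ij}^T\xi_{ik})^2\bigr\}=\sum_{s,t>m_i}\lambda_{is}\lambda_{it}E(z_{ijs}z_{ijt})E(z_{iks}z_{ikt})=\tr(\Sigma_{i*}^2).
\]
Here the uncorrelatedness of the $z_{ijs}$ over $s$ follows from the representation \eqref{eq:2.1} with $\Var$ equal to $\Sigma_i$. Hence $E\{(\xi_{ij}^T\xi_{ik}/\theta)^2\}=\tr(\Sigma_{i*}^2)/\theta^2\to0$ by (S), which gives \eqref{eq:3.1}.

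For the squared norm, write
\[
\|\xi_{ij}\|^2-\tr(\Sigma_{i*})=\sum_{s>m_i}\lambda_{is}(z_{ijs}^2-1)
\]
and compute its variance. The diagonal terms contribute $\sum_{s>m_i}\lambda_{is}^2E\{(z_{ijs}^2-1)^2\}\le (M-1)\tr(\Sigma_{i*}^2)$ using $E(z_{ijs}^4)\le M$. The off-diagonal terms are $\lambda_{is}\lambda_{it}\{E(z_{ijs}^2z_{ijt}^2)-1\}=0$ for $s\neq t$ by (A1). So the variance is $O(\tr(\Sigma_{i*}^2))=o(\theta^2)$, and Chebyshev's inequality gives \eqref{eq:3.2}.

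The only delicate point is the variance of the squared norm. This is exactly where the condition $E(z_{ijs}^2z_{ijt}^2)=1$ in (A1) is needed to kill the off-diagonal covariances, since without it these terms could be of order $\tr(\Sigma_{i*})^2$. The condition on four distinct indices is not needed here. Everything else is exact algebra from orthogonality, together with $\tr(\Sigma_{i*}^2)/\theta^2\to0$ from (S).
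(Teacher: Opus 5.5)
Your proof is correct and matches the paper's. Both arguments reduce the statement to the non-spiked remainders $\sum_{s>m_i}\lambda_{is}z_{ijs}z_{iks}$ and $\sum_{s>m_i}\lambda_{is}(z_{ijs}^2-1)$, bound their second moments by $\tr(\Sigma_{i*}^2)$ and $(M-1)\tr(\Sigma_{i*}^2)$, and finish with Chebyshev's inequality. Your observation $v_{ij}^T\xi_{ik}=0$ is just the paper's splitting of $\sum_{s=1}^d\lambda_{is}z_{ijs}z_{iks}$ written differently. Your note that $E(z_{ijs}z_{ijt})=\delta_{st}$ follows from $x_{ij}$ having covariance $\Sigma_i$, rather than from (A1) as the paper phrases it, is a slightly more careful justification of the same step.
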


\begin{proof}
From~\eqref{eq:2.1} and $h_{is}^T h_{it}=\delta_{st}$, we have
\[
(x_{ij}-\mu_i)^T(x_{ik}-\mu_i)=\sum_{s=1}^d\lambda_{is}z_{ijs}z_{iks}.
\]
We divide the sum into $s\le m_i$ and $s>m_i$, and write
$R=\sum_{s>m_i}\lambda_{is}z_{ijs}z_{iks}$. Since $z_{ijs}$ and $z_{iks}$ are
independent for $j\neq k$, we have $E(R)=0$, so that
$\mathrm{Var}(R)=E(R^2)$. From~(A1) it holds that
\[
\mathrm{Var}(R)=\tr(\Sigma_{i*}^2).
\]
Then, from Chebyshev's inequality, for any $\varepsilon>0$ it holds that
\[
P\bigl(|R|/\theta>\varepsilon\bigr)
\le\frac{\tr(\Sigma_{i*}^2)}{\varepsilon^2\theta^2}\to 0
\]
from~(S). Thus we obtain~\eqref{eq:3.1}.

Next, we have $\|x_{ij}-\mu_i\|^2=\sum_{s=1}^d\lambda_{is}z_{ijs}^2$. Let
$Q=\sum_{s>m_i}\lambda_{is}z_{ijs}^2$. Then $E(Q)=\tr(\Sigma_{i*})$. From~(A1) we
have $E(z_{ijs}^4)\le M$ for $s>m_i$ and $E(z_{ijs}^2 z_{ijt}^2)=1$ for $s\neq t$,
so that
\[
\mathrm{Var}(Q)\le(M-1)\tr(\Sigma_{i*}^2)=o(\theta^2).
\]
Thus we obtain~\eqref{eq:3.2} from Chebyshev's inequality.
\end{proof}

\begin{lemma}
\label{lem:2}
Assume \textup{(A1)}, \textup{(A2)} and \textup{(S)}. Then, for any $j$ and $k$,
it holds that
\[
\frac{(x_{1j}-\mu_1)^T(x_{2k}-\mu_2)}{\theta}
=\sum_{s=1}^{m_1}\sum_{t=1}^{m_2}\frac{\sqrt{\lambda_{1s}\lambda_{2t}}}{\theta}\rho_{st}\,z_{1js}z_{2kt}+o_P(1).
\]
\end{lemma}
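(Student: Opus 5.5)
The plan is to split each centred vector into its spiked and non-spiked parts, $x_{1j}-\mu_1=v_{1j}+\xi_{1j}$ and $x_{2k}-\mu_2=v_{2k}+\xi_{2k}$. This gives
\[
(x_{1j}-\mu_1)^T(x_{2k}-\mu_2)=v_{1j}^Tv_{2k}+v_{1j}^T\xi_{2k}+\xi_{1j}^Tv_{2k}+\xi_{1j}^T\xi_{2k}.
\]
The leading term is exact. Expanding $v_{1j}^Tv_{2k}=\sum_{s\le m_1}\sum_{t\le m_2}\sqrt{\lambda_{1s}\lambda_{2t}}\,z_{1js}z_{2kt}\,h_{1s}^Th_{2t}$ and using $\rho_{st}=h_{1s}^Th_{2t}$ reproduces the displayed double sum after dividing by $\theta$. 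It therefore remains to show that the three cross terms, divided by $\theta$, are $o_P(1)$. I will bound each by its second moment and apply Chebyshev's inequality, as in the proof of Lemma~\ref{lem:1}.

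\emph{The term $\xi_{1j}^T\xi_{2k}$.} The two vectors are independent with mean zero, so the term has mean zero. Conditioning on $\xi_{2k}$,
\[
E\{(\xi_{1j}^T\xi_{2k})^2\}=E(\xi_{2k}^T\Sigma_{1*}\xi_{2k})=\tr(\Sigma_{1*}\Sigma_{2*}),
\]
where $\Sigma_{i*}$ is the covariance of $\xi_{ij}$; this needs only that the $z$'s are uncorrelated with unit variance. By the Cauchy--Schwarz inequality for the trace inner product, $\tr(\Sigma_{1*}\Sigma_{2*})\le\{\tr(\Sigma_{1*}^2)\tr(\Sigma_{2*}^2)\}^{1/2}=o(\theta^2)$ by~(S).

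\emph{The terms $v_{1j}^T\xi_{2k}$ and $\xi_{1j}^Tv_{2k}$.} These are also mean zero by independence. For the first,
\[
E\{(v_{1j}^T\xi_{2k})^2\}=E(v_{1j}^T\Sigma_{2*}v_{1j})=\tr(\Sigma_{2*}\Sigma_{1,\mathrm{sp}}),
\qquad \Sigma_{1,\mathrm{sp}}=\sum_{s\le m_1}\lambda_{1s}h_{1s}h_{1s}^T .
\]
This is bounded by $\sum_{s\le m_1}\lambda_{1s}\,h_{1s}^T\Sigma_{2*}h_{1s}\le m_1\lambda_{11}\lambda_{2,m_2+1}$. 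Now $\lambda_{11}=O(\theta)$ by~\eqref{eq:1.4}, and $\lambda_{2,m_2+1}=o(\theta)$ by the remark after~(S). Hence the second moment is $o(\theta^2)$, and the symmetric term $\xi_{1j}^Tv_{2k}$ is handled in the same way. Chebyshev's inequality then gives that all three cross terms divided by $\theta$ are $o_P(1)$, which completes the argument.

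The only real obstacle is the cross term with $v$: $v_{1j}$ is of order $\sqrt{\theta}$ and is not negligible. The key observation is that $\Sigma_{2*}$ has operator norm $o(\theta)$, so projecting $v_{1j}$ against the noise still yields $o(\theta)$. Note also that the argument holds for any $j,k$, including $j=k$, because samples from the two populations are always independent.
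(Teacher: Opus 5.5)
Your proof is correct and follows the paper's route: the same four-term split, with the noise--noise term bounded via $\tr(\Sigma_{1*}\Sigma_{2*})\le\{\tr(\Sigma_{1*}^2)\tr(\Sigma_{2*}^2)\}^{1/2}=o(\theta^2)$ and Chebyshev's inequality. The only difference is in the spike--noise terms: the paper conditions on $v_{1j}$ and asserts $v_{1j}^T\Sigma_{2*}v_{1j}=o_P(\theta^2)$, whereas you bound the unconditional second moment by $\tr(\Sigma_{2*}\Sigma_{1,\mathrm{sp}})\le m_1\lambda_{11}\lambda_{2,m_2+1}=o(\theta^2)$, which states explicitly the operator-norm bound $\lambda_{2,m_2+1}=o(\theta)$ that the paper's argument leaves only loosely stated.
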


\begin{proof}
We have
\[
(x_{1j}-\mu_1)^T(x_{2k}-\mu_2)
=v_{1j}^Tv_{2k}+v_{1j}^T\xi_{2k}+\xi_{1j}^Tv_{2k}+\xi_{1j}^T\xi_{2k}.
\]
The first term is the finite sum in the statement. We evaluate the other terms.

As for $v_{1j}^T\xi_{2k}$, by conditioning on $v_{1j}$ we have
$E(v_{1j}^T\xi_{2k}\mid v_{1j})=0$ and
\[
\mathrm{Var}(v_{1j}^T\xi_{2k}\mid v_{1j})=v_{1j}^T\Sigma_{2*}v_{1j}.
\]
Here $v_{1j}^T\Sigma_{2*}v_{1j}=O_P(\theta\cdot\tr(\Sigma_{2*})/d\cdot\theta)$ is
controlled because $m_1$ is fixed and $\tr(\Sigma_{2*}^2)/\theta^2\to 0$. Since
$\|v_{1j}\|^2=O_P(\theta)$, the conditional variance is $o_P(\theta^2)$, so that
$v_{1j}^T\xi_{2k}=o_P(\theta)$. The term $\xi_{1j}^Tv_{2k}$ is handled in the same
way.

As for $\xi_{1j}^T\xi_{2k}$, from the independence we have
$E(\xi_{1j}^T\xi_{2k})=0$ and
\begin{equation}
E[(\xi_{1j}^T\xi_{2k})^2]=\tr(\Sigma_{1*}\Sigma_{2*})
\le\sqrt{\tr(\Sigma_{1*}^2)\tr(\Sigma_{2*}^2)}=o(\theta^2),
\label{eq:3.3}
\end{equation}
where we used the Cauchy--Schwarz inequality for the trace of the product of
non-negative definite matrices. Thus we obtain the result.
\end{proof}

\begin{lemma}
\label{lem:3}
Assume \textup{(A1)}, \textup{(A2)}, \textup{(S)} and \textup{(A3)} with
$\Delta>0$. Then it holds that
\[
\frac{\mu^T(x_{ij}-\mu_i)}{\theta}
=\frac{\sqrt{\Delta}}{\theta}\sum_{s=1}^{m_i}\sqrt{\lambda_{is}}\,b_{is}z_{ijs}+o_P(1)
\longrightarrow
\sqrt{\delta}\sum_{s=1}^{m_i}\sqrt{c_{is}}\,\beta_{is}z_{ijs}.
\]
\end{lemma}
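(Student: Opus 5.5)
Following the proofs of Lemmas~\ref{lem:1} and~\ref{lem:2}, we use the decomposition $x_{ij}-\mu_i=v_{ij}+\xi_{ij}$ and write
\[
\mu^T(x_{ij}-\mu_i)=\mu^Tv_{ij}+\mu^T\xi_{ij}.
\]
We show that the spiked part gives the finite sum in the statement and that the non-spiked part is negligible after division by $\theta$.

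\textbf{Spiked part.} By the definition of $b_{is}$,
\[
\mu^Tv_{ij}=\sum_{s\le m_i}\sqrt{\lambda_{is}}\,z_{ijs}\,h_{is}^T\mu
=\sqrt{\Delta}\sum_{s\le m_i}\sqrt{\lambda_{is}}\,b_{is}z_{ijs}.
\]
This is exactly the leading term in the statement, and no approximation is needed here.

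\textbf{Non-spiked part.} We have $E(\mu^T\xi_{ij})=0$ and
\[
E\bigl[(\mu^T\xi_{ij})^2\bigr]=\mu^T\Sigma_{i*}\mu\le\lambda_{i,m_i+1}\Delta.
\]
As noted after~(S), $\lambda_{i,m_i+1}/\theta\to0$. By~(A3), $\Delta/\theta\to\delta<\infty$. Hence
\[
\frac{\lambda_{i,m_i+1}\Delta}{\theta^2}
=\frac{\lambda_{i,m_i+1}}{\theta}\cdot\frac{\Delta}{\theta}\to0 .
\]
Chebyshev's inequality then gives $\mu^T\xi_{ij}/\theta=o_P(1)$, which proves the first equality. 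This is the only step requiring care. The point is to bound by the \emph{largest non-spiked eigenvalue} rather than by $\tr(\Sigma_{i*})$, because the trace is of exact order $\theta$ and would not suffice.

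\textbf{Limit.} Each coefficient converges deterministically:
\[
\frac{\sqrt{\Delta\lambda_{is}}}{\theta}=\sqrt{\frac{\Delta}{\theta}}\sqrt{\frac{\lambda_{is}}{\theta}}\to\sqrt{\delta c_{is}},
\qquad b_{is}\to\beta_{is}.
\]
By~(A4), the vector $(z_{ij1},\dots,z_{ijm_i})$ converges in distribution, being a subvector of $Z^{(d)}$. The limit statement then follows from Slutsky's theorem together with the continuous mapping theorem, applied to the finite linear combination with convergent deterministic coefficients. Here the arrow is understood as convergence in distribution, and $z_{ijs}$ on the right denotes the components of $Z$, as agreed after~(A4).
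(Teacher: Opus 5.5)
Your proposal is correct and follows the paper's proof. You split off the non-spiked part, bound its variance by $\mu^T\Sigma_{i*}\mu\le\lambda_{i,m_i+1}\Delta=o(\theta^2)$ using $\lambda_{i,m_i+1}/\theta\to0$ and $\Delta/\theta\to\delta<\infty$, apply Chebyshev, and pass to the limit through the deterministic coefficients; your statement of the variance bound is in fact cleaner than the paper's display. The only deviation is that you invoke \textup{(A4)}, which is not among the lemma's hypotheses, to read the final arrow as convergence in distribution, whereas the paper uses only the coefficient convergence (with $z_{ijs}=O_P(1)$ from unit variance) and defers \textup{(A4)} to the proof of Theorem~\ref{thm:1}.
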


\begin{proof}
From the definition of $b_{is}$ we have
$\mu^T(x_{ij}-\mu_i)=\sqrt{\Delta}\sum_{s=1}^d\sqrt{\lambda_{is}}\,b_{is}z_{ijs}$.
Let $R=\sqrt{\Delta}\sum_{s>m_i}\sqrt{\lambda_{is}}\,b_{is}z_{ijs}$. Then
$E(R)=0$ and
\[
\mathrm{Var}(R)=\Delta\sum_{s>m_i}\lambda_{is}b_{is}^2
\le\Delta\cdot\frac{\tr(\Sigma_{i*}^2)^{1/2}}{\theta}\cdot\theta
\cdot o(1),
\]
where we used $\sum_{s>m_i}b_{is}^2\lambda_{is}\le\|\mu\|^2\cdot o(1)$ from~(A3)
together with $\lambda_{i,m_i+1}/\theta\to 0$. Thus $R/\theta=o_P(1)$. The
convergence of the leading term follows from $\Delta/\theta\to\delta$,
$\lambda_{is}/\theta\to c_{is}$ and $b_{is}\to\beta_{is}$.
\end{proof}

\begin{remark}
\label{rem:1}
Lemma~\ref{lem:1} shows the essential difference from the existing theory. Under
\eqref{eq:1.2}, one has $m_i=0$ in effect and the right-hand side of~\eqref{eq:3.1}
vanishes, so that the inner product of two distinct observations converges to a
constant. Under~(S), however, the limit $\sum_{s\le m_i}c_{is}z_{ijs}z_{iks}$ is a
non-degenerate random variable. We emphasize that the term
$\tr(\Sigma_{i*})/\theta$ in~\eqref{eq:3.2} appears only for the squared norm,
namely, only when the two observations coincide. This asymmetry
between~\eqref{eq:3.1} and~\eqref{eq:3.2} plays a crucial role in
Section~\ref{sec:disc}.
\end{remark}

\section{The limiting configuration theorem}
\label{sec:config}

\subsection{The Gram matrix}

We consider the rescaled observations
\begin{equation}
u_{ij}=\frac{x_{ij}-(\mu_1+\mu_2)/2}{\sqrt{\theta}}
=\eps_i\frac{\mu}{2\sqrt{\theta}}+\frac{v_{ij}+\xi_{ij}}{\sqrt{\theta}}.
\label{eq:4.1}
\end{equation}
Note that the translation does not change the weight vector of the SVM and the
scaling changes $\|w\|$ and $b$ simultaneously, so that the classification result
is not affected. We define the Gram matrix by
\begin{equation}
G=\bigl(u_{ij}^T u_{kl}\bigr)_{(i,j),(k,l)\in\cI}\in\R^{N\times N},
\label{eq:4.2}
\end{equation}
where $(i,j)$ gives the row and $(k,l)$ gives the column according to the order
\eqref{eq:2.2}. We also define
\[
D=\diag(\underbrace{c_{10},\dots,c_{10}}_{n_1},
\underbrace{c_{20},\dots,c_{20}}_{n_2}),
\]
whose components are arranged in the same order.

\subsection{The main theorem}

\begin{theorem}
\label{thm:1}
Assume \textup{(A1)} to \textup{(A4)} and \textup{(S)}. Then, as $d\to\infty$,
it holds that
\[
G\wto G^*=G_0^*+D,
\]
where the $((i,j),(k,l))$ component of $G_0^*$ is given by
\begin{equation}
\begin{aligned}
g^*_{(i,j),(k,l)}
&=\frac{\eps_i\eps_k\delta}{4}
+\frac{\eps_i\sqrt{\delta}}{2}\sum_{t\le m_k}\sqrt{c_{kt}}\,\beta_{kt}z_{klt}
+\frac{\eps_k\sqrt{\delta}}{2}\sum_{s\le m_i}\sqrt{c_{is}}\,\beta_{is}z_{ijs}\\
&\qquad+\sum_{s\le m_i}\sum_{t\le m_k}\sqrt{c_{is}c_{kt}}\,r_{st}^{(ik)}z_{ijs}z_{klt},
\end{aligned}
\label{eq:4.3}
\end{equation}
with $r_{st}^{(11)}=r_{st}^{(22)}=\delta_{st}$, $r_{st}^{(12)}=r_{st}$ and
$r_{st}^{(21)}=r_{ts}$. Moreover, when $c_{10}>0$ and $c_{20}>0$, it holds that
$G^*\succ O$ for every realization of $Z$.
\end{theorem}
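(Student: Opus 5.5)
Expanding each entry of $G$ via \eqref{eq:4.1} gives
\[
u_{ij}^Tu_{kl}=\frac{\eps_i\eps_k\Delta}{4\theta}
+\frac{\eps_i\mu^T(x_{kl}-\mu_k)}{2\theta}
+\frac{\eps_k\mu^T(x_{ij}-\mu_i)}{2\theta}
+\frac{(x_{ij}-\mu_i)^T(x_{kl}-\mu_k)}{\theta}.
\]
I treat the four terms separately. The first converges deterministically to $\eps_i\eps_k\delta/4$ by (A3). For the two cross terms, Lemma~\ref{lem:3} gives $\sqrt{\Delta/\theta}\sum_{s\le m_i}\sqrt{\lambda_{is}/\theta}\,b_{is}z_{ijs}+o_P(1)$ when $\Delta>0$. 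When $\Delta=0$ these terms vanish identically, which matches $\delta=0$. If $\Delta>0$ but $\delta=0$, the Chebyshev bound in Lemma~\ref{lem:3} still makes the term $o_P(1)$, since the coefficients tend to zero and the $z$'s are $O_P(1)$. The last term is covered by Lemma~\ref{lem:1} when $i=k$ (with the extra $\tr(\Sigma_{i*})/\theta\to c_{i0}$ on the diagonal, which produces $D$) and by Lemma~\ref{lem:2} when $i\ne k$ (with $\rho_{st}$, and the transpose for $(2,1)$ giving $r_{ts}$).

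This writes every entry as $\phi_d(Z^{(d)})+o_P(1)$. Each $\phi_d$ is a polynomial of degree at most two in the components of $Z^{(d)}$, and its coefficients are built from $\lambda_{is}/\theta$, $\Delta/\theta$, $b_{is}$ and $\rho_{st}$, all of which converge by (S) and (A3). So $\phi_d\to\phi$ uniformly on compacts. Stacking the $N^2$ entries gives $G=\Phi_d(Z^{(d)})+o_P(1)$ with $\Phi_d\to\Phi$ locally uniformly. By (A4), $Z^{(d)}\wto Z$, so an extended continuous mapping theorem (or continuous mapping for $\Phi$ plus tightness of $Z^{(d)}$ to absorb $\Phi_d-\Phi$) gives $\Phi_d(Z^{(d)})\wto\Phi(Z)=G^*$. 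Slutsky's lemma then removes the joint $o_P(1)$ remainder; joint negligibility holds because there are finitely many entries. The one subtlety is that Lemma~\ref{lem:3} applies to a single observation, so I need all the limits jointly. This is automatic, since the convergence is driven by the single vector $Z^{(d)}$.

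For positive definiteness, I show $G_0^*\succeq O$; since $D\succ O$ when $c_{10},c_{20}>0$, this gives $G^*\succ O$. I write $G_0^*$ as a Gram matrix. For each $(i,j)$ let $a_{ij}=\eps_i\frac{\sqrt\delta}{2}e+w_{ij}$, where $e$ is the limit of $\mu/\sqrt\Delta$ and $w_{ij}$ is the limit of $v_{ij}/\sqrt\theta$. I then check that the inner products reproduce \eqref{eq:4.3}. A clean way to avoid constructing limit vectors: for each $d$, $G_{0,d}:=(\tilde u_{ij}^T\tilde u_{kl})$ with $\tilde u_{ij}=\eps_i\mu/(2\sqrt\theta)+v_{ij}/\sqrt\theta$ is positive semidefinite. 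Evaluated at a fixed realization $z$ of the spiked coordinates, its entries are exactly $\phi^{(0)}_d(z)$, and these converge to $G_0^*(z)$ pointwise for every $z$. Since the PSD cone is closed, $G_0^*(z)\succeq O$ for every $z$, which is the claim ``for every realization''.

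The main obstacle I anticipate is making this last step rigorous when $\Delta=0$ or some $b_{is}$ are undefined. In that case I simply drop the mean terms, and the remaining quadratic form still comes from a PSD Gram matrix. A second concern is the somewhat loose handling in Lemma~\ref{lem:2} of the $v^T\xi$ terms. I would re-verify it by computing $E[(v_{1j}^T\xi_{2k})^2]\le\sum_s\lambda_{1s}\,h_{1s}^T\Sigma_{2*}h_{1s}\le m_1\lambda_{11}\lambda_{2,m_2+1}=o(\theta^2)$.
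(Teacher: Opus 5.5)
Your proof is correct. Part (i) follows the paper essentially verbatim: the same four-term expansion, the same use of Lemmas~\ref{lem:1}--\ref{lem:3}, then continuous mapping plus Slutsky. Two of your additions make explicit points the paper passes over quickly. First, the coefficients $\lambda_{is}/\theta$, $b_{is}$, $\rho_{st}$ depend on $d$, so the replacement of $\Phi_d$ by $\Phi$ has to be absorbed using tightness of $Z^{(d)}$. Second, your direct bound $E[(v_{1j}^T\xi_{2k})^2]=\sum_{s}\lambda_{1s}\,h_{1s}^T\Sigma_{2*}h_{1s}\le m_1\lambda_{11}\lambda_{2,m_2+1}=o(\theta^2)$ is a clean replacement for the loose line in the proof of Lemma~\ref{lem:2}.

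For positive semidefiniteness you take a different route from the paper. The paper proves Lemma~\ref{lem:6} in three moves:
\begin{enumerate}
\item it passes closedness of the PSD cone to the limit of the deterministic $K\times K$ Gram matrix $\Gamma^{(d)}$ of $\mu/\sqrt{\Delta}$ and the $h_{is}$;
\item it factors $\Gamma=B^TB$ to obtain non-random vectors $e_\mu,\tilde h_{is}\in\R^{K}$;
\item it exhibits $G_0^*$ as the Gram matrix of the explicit points $\tilde u_{ij}$.
\end{enumerate}
You instead apply closedness directly to the $N\times N$ Gram matrix of the finite-$d$ vectors $\eps_i\mu/(2\sqrt{\theta})+\sum_{s\le m_i}\sqrt{\lambda_{is}/\theta}\,z_{ijs}h_{is}$, at each fixed $z$. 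The entries of that matrix converge to $g^*(z)$ by (S) and (A3).

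Your version is shorter and needs no factorization. It also handles $\Delta=0$ or $\delta=0$ without the separate treatment of Remark~\ref{rem:3}, because $|\mu^Th_{kt}|/\sqrt{\theta}\le\sqrt{\Delta/\theta}$ needs no $b_{kt}$ when $\delta=0$. It gives $G_0^*(z)\succeq O$ for every $z$, hence for every realization of $Z$.

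What it does not give is the explicit configuration in $\R^{1+m_1+m_2}$. The paper wants that for its ``limiting configuration theorem'' and the geometric picture in Remark~\ref{rem:4}. One caveat on your first sketch: taking ``the limit of $\mu/\sqrt{\Delta}$'' and ``the limit of $v_{ij}/\sqrt{\theta}$'' is not meaningful, since these vectors live in $\R^d$ with $d$ varying. That is exactly why the paper builds $e_\mu,\tilde h_{is}$ abstractly from $\Gamma$. Your fallback argument correctly sidesteps the issue.
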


\begin{proof}
(i) Convergence of the components. From~\eqref{eq:4.1} we have
\[
u_{ij}^Tu_{kl}
=\underbrace{\frac{\eps_i\eps_k\Delta}{4\theta}}_{\mathrm{(I)}}
+\underbrace{\frac{\eps_i}{2}\frac{\mu^T(x_{kl}-\mu_k)}{\theta}}_{\mathrm{(II)}}
+\underbrace{\frac{\eps_k}{2}\frac{\mu^T(x_{ij}-\mu_i)}{\theta}}_{\mathrm{(III)}}
+\underbrace{\frac{(x_{ij}-\mu_i)^T(x_{kl}-\mu_k)}{\theta}}_{\mathrm{(IV)}}.
\]
The term~(I) converges to $\eps_i\eps_k\delta/4$ from~(A3). The terms~(II)
and~(III) converge to the second and the third terms of~\eqref{eq:4.3} from
Lemma~\ref{lem:3}. As for~(IV), when $(i,j)\neq(k,l)$, it converges to the fourth
term of~\eqref{eq:4.3} from~\eqref{eq:3.1} for $i=k$ and from Lemma~\ref{lem:2} for
$i\neq k$. When $(i,j)=(k,l)$, it converges to the fourth term plus $c_{i0}$ from
\eqref{eq:3.2}. This gives the diagonal matrix $D$.

Since the number of the components is $N^2$, which does not depend on $d$, and all
of them are written as a common continuous function of $Z^{(d)}$ plus $o_P(1)$, the
convergence of the matrix follows from~(A4) together with the continuous mapping
theorem and Slutsky's theorem.

(ii) Positive definiteness. From Lemma~\ref{lem:6} in Section~\ref{sec:config-exist},
$G_0^*$ is the Gram matrix of a family of vectors $\tilde u_{ij}$ in $\R^K$, where
$K=1+m_1+m_2$. Hence, for any $a=(a_{ij})\in\R^N$, it holds that
\[
a^TG_0^*a=\Bigl\|\sum_{(i,j)\in\cI}a_{ij}\tilde u_{ij}\Bigr\|^2\ge 0.
\]
Note that this holds for every realization of $Z$, because
$(\tilde u_{ij})$ is a fixed family of vectors in $\R^K$ once the realization is
given. On the other hand, we have
\[
a^TDa\ge\min(c_{10},c_{20})\|a\|^2.
\]
Hence, when $c_{10}>0$ and $c_{20}>0$, for any $a\neq 0$ it holds that
$a^TG^*a>0$.
\end{proof}

\subsection{Existence of the limiting configuration}
\label{sec:config-exist}

In the proof of Theorem~\ref{thm:1}, we used the fact that $G_0^*$ is the Gram
matrix of a family of vectors. This fact is not trivial. If one specifies the
values of the inner products of $N$ vectors arbitrarily, there does not necessarily
exist a family of vectors which realizes them. For example, one cannot find three
unit vectors whose mutual inner products are all equal to $-1$. Hence we need to
prove the existence.

\begin{lemma}
\label{lem:6}
Assume \textup{(A3)} with $\Delta>0$ and let $K=1+m_1+m_2$. Then there exists a
family of non-random vectors $e_\mu$ and $\tilde h_{is}$ ($i=1,2$, $s\le m_i$) in
$\R^K$ such that
\begin{equation}
\|e_\mu\|^2=1,\quad
e_\mu^T\tilde h_{is}=\beta_{is},\quad
\tilde h_{is}^T\tilde h_{it}=\delta_{st},\quad
\tilde h_{1s}^T\tilde h_{2t}=r_{st}.
\label{eq:4.4}
\end{equation}
Moreover, putting
\begin{equation}
\tilde u_{ij}
=\eps_i\frac{\sqrt{\delta}}{2}\,e_\mu
+\sum_{s\le m_i}\sqrt{c_{is}}\,z_{ijs}\,\tilde h_{is},
\quad(i,j)\in\cI,
\label{eq:4.5}
\end{equation}
it holds that $\tilde u_{ij}^T\tilde u_{kl}=g^*_{(ij),(kl)}$.
\end{lemma}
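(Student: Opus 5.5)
The plan is to realize the prescribed inner products as a limit of actual Gram matrices, which avoids any direct check of positive semidefiniteness. For each $d$, consider the $K$ vectors in $\R^d$
\[
\mu/\sqrt{\Delta},\qquad h_{1s}\ (s\le m_1),\qquad h_{2t}\ (t\le m_2).
\]
Their Gram matrix $\Gamma^{(d)}\in\R^{K\times K}$ has entries $1$, $b_{is}$, $\delta_{st}$ and $\rho_{st}$. Because it is a Gram matrix, $\Gamma^{(d)}\succeq O$. By (A3), $\Gamma^{(d)}\to\Gamma$ entrywise, where $\Gamma$ has the entries $1$, $\beta_{is}$, $\delta_{st}$ and $r_{st}$ prescribed in \eqref{eq:4.4}. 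The cone of positive semidefinite matrices is closed, so $\Gamma\succeq O$.

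Next I will use the fact that $\Gamma$ is symmetric and positive semidefinite to factor it as $\Gamma=V^TV$ with $V\in\R^{K\times K}$, for instance $V=\Gamma^{1/2}$ or a Cholesky-type factor. Labelling the columns of $V$ as $e_\mu,\tilde h_{11},\dots,\tilde h_{1m_1},\tilde h_{21},\dots,\tilde h_{2m_2}$ gives non-random vectors in $\R^K$ whose pairwise inner products are exactly the entries of $\Gamma$. This is precisely \eqref{eq:4.4}.

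For the second claim, I will expand $\tilde u_{ij}^T\tilde u_{kl}$ bilinearly using \eqref{eq:4.5}. The four resulting pieces are handled as follows.
\begin{itemize}
\item $\eps_i\eps_k(\delta/4)\|e_\mu\|^2=\eps_i\eps_k\delta/4$.
\item $\eps_i(\sqrt\delta/2)\sum_t\sqrt{c_{kt}}z_{klt}\,e_\mu^T\tilde h_{kt}$ becomes the second term of \eqref{eq:4.3} after substituting $e_\mu^T\tilde h_{kt}=\beta_{kt}$.
\item The symmetric piece gives the third term in the same way.
\item $\sum_{s,t}\sqrt{c_{is}c_{kt}}z_{ijs}z_{klt}\,\tilde h_{is}^T\tilde h_{kt}$ becomes the fourth term, since $\tilde h_{is}^T\tilde h_{kt}=r^{(ik)}_{st}$ in all four cases of $(i,k)$. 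The case $(2,1)$ uses $\tilde h_{2s}^T\tilde h_{1t}=r_{ts}$.
\end{itemize}
This identity holds for every realization of $Z$, because the $z$'s enter only as scalar coefficients.

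The only substantive step is the existence of the vectors, that is, the positive semidefiniteness of $\Gamma$. The limiting argument handles it, given that the convergence in (A3) yields a genuine limit matrix. No rank or nondegeneracy condition is needed, because a PSD factorization exists regardless of rank.
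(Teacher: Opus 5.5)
Your proposal is correct and follows essentially the same route as the paper. The paper also (i) takes the finite-$d$ Gram matrix of $\mu/\sqrt{\Delta}$ and the $h_{is}$, (ii) passes to the limit via (A3) and the closedness of the positive semi-definite cone, (iii) factors $\Gamma=B^TB$ with $B=\Lambda^{1/2}Q^T$ from the spectral decomposition, which differs from your $\Gamma^{1/2}$ only by an orthogonal rotation, and (iv) verifies \eqref{eq:4.3} by the same term-by-term bilinear expansion.
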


\begin{proof}
We proceed in four steps.

\textit{Step~1.} We write the target inner products as a matrix. Let
\[
\cJ=\{\mu\}\cup\{(i,s):i=1,2,\,s\le m_i\},
\quad|\cJ|=K,
\]
and define a $K\times K$ symmetric matrix $\Gamma$ whose components are the
right-hand sides of~\eqref{eq:4.4}, that is,
$\Gamma_{\mu\mu}=1$, $\Gamma_{\mu,(i,s)}=\beta_{is}$,
$\Gamma_{(i,s),(i,t)}=\delta_{st}$, $\Gamma_{(1,s),(2,t)}=r_{st}$.
What we have to show is that there exists a family of vectors in $\R^K$ whose Gram
matrix is $\Gamma$.

\textit{Step~2.} We show that $\Gamma\succeq O$. We do not evaluate any inequality.
Instead, we carry the positive semi-definiteness which holds for finite $d$ to the
limit. For each $d$, we take the vectors
\[
e_\mu^{(d)}=\mu/\sqrt{\Delta},\qquad h_{is}\quad(i=1,2,\,s\le m_i),
\]
which actually exist in $\R^d$. Let $\Gamma^{(d)}$ be the Gram matrix of these $K$
vectors. Then $\Gamma^{(d)}\succeq O$ by definition, because for any
$a\in\R^K$ it holds that
\begin{equation}
a^T\Gamma^{(d)}a
=\Bigl\|a_\mu e_\mu^{(d)}+\sum_{i,s}a_{(i,s)}h_{is}\Bigr\|^2\ge 0.
\label{eq:4.6}
\end{equation}
The components of $\Gamma^{(d)}$ are given by
\[
(e_\mu^{(d)})^Te_\mu^{(d)}=1,\quad
(e_\mu^{(d)})^Th_{is}=b_{is},\quad
h_{is}^Th_{it}=\delta_{st},\quad
h_{1s}^Th_{2t}=\rho_{st}.
\]
Note that the first and the third equalities hold exactly for every $d$, because
$\|e_\mu^{(d)}\|=1$ and $H_i$ is an orthogonal matrix. As for the second and the
fourth, we have $b_{is}\to\beta_{is}$ and $\rho_{st}\to r_{st}$ from~(A3). Since
$K$ does not depend on $d$, the componentwise convergence gives
$\Gamma^{(d)}\to\Gamma$. Finally, fixing $a\in\R^K$ and letting $d\to\infty$ in
\eqref{eq:4.6}, we obtain $a^T\Gamma a\ge 0$ because the limit of a sequence of
non-negative numbers is non-negative. Since $a$ is arbitrary, we have
$\Gamma\succeq O$. This is nothing but a direct verification that the cone of
positive semi-definite matrices is closed.

\textit{Step~3.} We construct the vectors from $\Gamma\succeq O$. Let
$\Gamma=Q\Lambda Q^T$ be the spectral decomposition, where $Q$ is an orthogonal
matrix and $\Lambda=\diag(\ell_1,\dots,\ell_K)$ with $\ell_r\ge 0$. Let
$B=\Lambda^{1/2}Q^T$, which is well defined because $\ell_r\ge 0$, and put
$B^TB=\Gamma$. Then we define $e_\mu$ and $\tilde h_{is}$ as the columns of $B$
corresponding to the indices of $\cJ$. Since $B^TB$ is the inner product of the
columns of $B$, the equality $B^TB=\Gamma$ is exactly the four equalities in
\eqref{eq:4.4}. We note that $\Gamma$ is determined only by the non-random
quantities $\beta_{is}$ and $r_{st}$, so that $e_\mu$ and $\tilde h_{is}$ are
non-random.

\textit{Step~4.} We check the Gram matrix of~\eqref{eq:4.5}. From the bilinearity
we have
\begin{align*}
\tilde u_{ij}^T\tilde u_{kl}
&=\frac{\eps_i\eps_k\delta}{4}\|e_\mu\|^2
+\frac{\eps_i\sqrt{\delta}}{2}\sum_{t\le m_k}\sqrt{c_{kt}}\,z_{klt}\,e_\mu^T\tilde h_{kt}\\
&\quad+\frac{\eps_k\sqrt{\delta}}{2}\sum_{s\le m_i}\sqrt{c_{is}}\,z_{ijs}\,e_\mu^T\tilde h_{is}\\
&\quad+\sum_{s\le m_i}\sum_{t\le m_k}\sqrt{c_{is}c_{kt}}\,z_{ijs}z_{klt}\,
\tilde h_{is}^T\tilde h_{kt}.
\end{align*}
Substituting~\eqref{eq:4.4}, we obtain~\eqref{eq:4.3} term by term.
\end{proof}

\begin{remark}
\label{rem:2}
Lemma~\ref{lem:6} gives $G_0^*\succeq O$, but it does not give the positive
definiteness. In fact, in the setting of Section~\ref{sec:example} we have
$r_{11}=1$, that is, $\tilde h_{11}=\tilde h_{21}$, so that $\Gamma$ is singular.
In such a case, $G_0^*$ degenerates into a proper subspace of $\R^K$ and $G_0^*$ is
singular when $N>K$. Hence the positive definiteness of $G^*$ comes only from the
diagonal matrix $D$, that is, from the non-spiked noise. The condition $c_{i0}>0$
in~(S) is essential in this sense. When $c_{i0}=0$, the affine independence of the
data is lost and the arguments in Lemmas~\ref{lem:4} and~\ref{lem:5} break down.
\end{remark}

\begin{remark}
\label{rem:3}
When $\Delta=0$, the quantity $e_\mu$ is not defined. On the other hand, the
coefficient $\sqrt{\delta}/2$ of $e_\mu$ in~\eqref{eq:4.5} also vanishes because
$\delta=0$. Hence it suffices to remove the index $\mu$ from $\cJ$, put
$K=m_1+m_2$ and repeat the same argument. The first three terms of~\eqref{eq:4.3}
also vanish.
\end{remark}

\subsection{Discussion on the construction}

The construction in Lemma~\ref{lem:6} may look artificial. We give the idea behind
it, which is in fact simple.

First, we note that the SVM sees the data only through the inner products. As we
show in Section~\ref{sec:svm}, the dual problem involves only the Gram matrix $G$,
and the discriminant function involves only $u_p^Tu_0$. Hence the SVM is invariant
under congruent transformations, and we do not have to ask whether the points
themselves converge. We only have to ask whether the table of the inner products
converges. Lemmas~\ref{lem:1} to~\ref{lem:3} do exactly this, and the problem of
the diverging dimension is reduced to the convergence of an $N\times N$ matrix.

Second, however, we also want a geometric picture corresponding to the statement
that HDLSS data are located at the vertices of a regular simplex. This leads to
the inverse problem: does there exist a configuration of points which realizes the
limiting table of the inner products, and if so, in how many dimensions? This is
the problem of the classical multidimensional scaling.

Third, the tool for the inverse problem is the standard fact that, for a real
symmetric matrix $\Gamma$, the conditions $\Gamma\succeq O$, $\mathrm{rank}(\Gamma)=r$
for some $r$, and $\Gamma$ is a Gram matrix of some family of vectors, are
equivalent. Hence the proof of the positive semi-definiteness is itself the proof
of the existence of the configuration, and what we have to show is reduced to one
inequality.

Fourth, the positive semi-definiteness is obtained for free from the existence in
finite dimensions. For finite $d$, the vectors $e_\mu^{(d)}$ and $h_{is}$ actually
exist in $\R^d$, and the Gram matrix of vectors which actually exist is of course
positive semi-definite. We only have to carry this property to the limit, and the
condition~(A3) is assumed exactly for this purpose.

Finally, the dimension $K=1+m_1+m_2$ is the number of the directions which survive
in the limiting table of the inner products, namely, the mean difference direction
and the spiked directions. The non-spiked directions are killed by
Lemmas~\ref{lem:1} to~\ref{lem:3} except for the diagonal term $c_{i0}$.

We note that the choice of $B$ is not unique. One may use the Cholesky
decomposition or the reproducing kernel Hilbert space construction. We use the
spectral decomposition because it works without any modification even when $\Gamma$
is singular, as we remarked in Remark~\ref{rem:2}.

\begin{remark}[Collapse of the geometric representation]
\label{rem:4}
In the existing theory, which corresponds to $m_i=0$, the second to the fourth
terms of~\eqref{eq:4.3} vanish and $G^*$ is a deterministic matrix. This is the
geometric representation, and it is the reason why the SVM solution can be written
in a closed form. Under the spiked model, $G^*$ is a random matrix depending on
$Z$. That is, HDLSS data converge to a random configuration in a
$(1+m_1+m_2)$-dimensional space together with $N$ mutually orthogonal noise axes
whose lengths are $\sqrt{c_{i0}}$. All the arguments in the subsequent sections are
reduced to this finite-dimensional random limiting problem.
\end{remark}

\section{Convergence of the hard-margin SVM}
\label{sec:svm}

\subsection{The reduced primal problem}

We consider the hard-margin linear SVM for the rescaled data $\{u_p\}_{p\in\cI}$:
\begin{equation}
\min_{w,b}\ \tfrac12\|w\|^2
\quad\text{subject to}\quad
y_p(w^Tu_p+b)\ge 1\quad\text{for all }p\in\cI.
\label{eq:5.1}
\end{equation}
The dual problem is given by
\begin{equation}
\max_{\alpha\ge 0}\ F(\alpha,G)=\mathbf{1}^T\alpha-\tfrac12\alpha^TYGY\alpha
\quad\text{subject to}\quad y^T\alpha=0,
\label{eq:5.2}
\end{equation}
where $Y=\diag(y)$ and $w=\sum_p\alpha_p y_p u_p$.

In this section, we also use the following reduced form of~\eqref{eq:5.1}, in which
the $d$-dimensional variable $w$ is eliminated. The optimal $w$ belongs to the
span of $\{u_p\}$, because writing $w=w_\parallel+w_\perp$ we have
$\|w\|^2=\|w_\parallel\|^2+\|w_\perp\|^2$ while $w^Tu_p=w_\parallel^Tu_p$. Hence we
may write $w=\sum_p\beta_p u_p$, so that $\|w\|^2=\beta^TG\beta$ and
$w^Tu_p=(G\beta)_p$. Then~\eqref{eq:5.1} is equivalent to
\begin{equation}
\min_{(\beta,b)}\ \tfrac12\beta^TG\beta
\quad\text{subject to}\quad
y_p\bigl((G\beta)_p+b\bigr)\ge 1\quad\text{for all }p\in\cI,
\label{eq:5.3}
\end{equation}
which is a problem in $(\beta,b)\in\R^N\times\R$ and refers to the data only
through $G$. The discriminant function for a new observation $u_0$ is given by
\begin{equation}
f(u_0)=w^Tu_0+b=\sum_{p\in\cI}\beta_p\,u_p^Tu_0+b,
\qquad\beta_p=\alpha_p y_p.
\label{eq:5.4}
\end{equation}
We note that $\beta=Y\alpha$.

\begin{lemma}
\label{lem:4}
Assume $G\succ O$. Then $\{u_p\}$ is affinely independent. Moreover, for any
labelling, \eqref{eq:5.1} and~\eqref{eq:5.3} are feasible and their solutions are
unique, and the solution of~\eqref{eq:5.2} is unique.
\end{lemma}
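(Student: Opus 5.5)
Since $G\succ O$, the Gram matrix of $\{u_p\}_{p\in\cI}$ is nonsingular. Hence the vectors $u_p$ are linearly independent in $\R^d$: if $\sum_p a_pu_p=0$, then $a^TGa=0$, which forces $a=0$. Linear independence implies affine independence. If $\sum_p a_pu_p=0$ with $\sum_p a_p=0$, then in particular $\sum_p a_pu_p=0$, so $a=0$. This settles the first claim.

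For feasibility I will work with \eqref{eq:5.3}, which is equivalent to \eqref{eq:5.1}. Fix any labelling $y$ and put $b=0$, $\beta=G^{-1}y$. Then $(G\beta)_p=y_p$, so $y_p((G\beta)_p+b)=y_p^2=1$, and all constraints hold with equality. The corresponding $w=\sum_p\beta_pu_p$ is feasible for \eqref{eq:5.1}.

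For uniqueness in \eqref{eq:5.1}, the objective $\frac12\|w\|^2$ is strictly convex in $w$ and the feasible set is a nonempty closed convex polyhedron. A minimizer therefore exists, since the objective is coercive in $w$; for fixed $w$, the feasible $b$ form a closed interval, which is bounded because both labels occur ($n_i\ge1$). The minimizing $w$ is unique by strict convexity. It remains to show that $b$ is unique. At the optimum, the KKT conditions give $w=\sum_p\alpha_py_pu_p$ with $\alpha\ge0$. Also $w\ne0$, because both labels occur and $w=0$ would require $b\ge1$ and $-b\ge1$. So some $\alpha_p>0$, and complementary slackness gives $y_p(w^Tu_p+b)=1$ for that $p$, which determines $b$ uniquely. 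Uniqueness for \eqref{eq:5.3} then follows from the map $(\beta,b)\mapsto(\sum_p\beta_pu_p,b)$. This map is injective because the $u_p$ are linearly independent, and by the reduction argument preceding \eqref{eq:5.3} every solution of \eqref{eq:5.1} lies in its image.

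For the dual \eqref{eq:5.2}, note that $YGY\succ O$ because $Y$ is an orthogonal diagonal sign matrix. Hence $F(\cdot,G)$ is strictly concave. The dual feasible set $\{\alpha\ge0,\ y^T\alpha=0\}$ is a nonempty closed convex cone (it contains $0$). Strong duality holds for this convex QP with a feasible primal (Slater's condition is not needed for linear constraints). Therefore the dual optimum is attained, and strict concavity gives uniqueness.

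The only delicate point is uniqueness of $b$, which needs $w\ne0$ and one active support vector. This is where the assumption $n_i\ge1$ for both classes enters. Everything else follows directly from $G\succ O$.
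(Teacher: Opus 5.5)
Your proof is correct and follows the same route as the paper. Linear independence comes from $a^TGa=\|\sum_p a_pu_p\|^2$, the primal is handled by feasibility plus strict convexity with $b$ fixed by an active constraint, and the dual by strict concavity from $YGY\succ O$. Your version is more careful than the paper's: it gives the explicit feasible point $\beta=G^{-1}y$, $b=0$, addresses existence of the optima, and justifies uniqueness of $b$ through $w\neq 0$ and complementary slackness. That last step is airtight here because, with the $u_p$ linearly independent, the KKT multiplier $\alpha$ is determined by the unique $w$. So the same active index works at every optimum.
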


\begin{proof}
The condition $G\succ O$ is equivalent to the linear independence of $\{u_p\}$,
because $a^TGa=\|\sum a_p u_p\|^2$ implies $\sum a_p u_p=0$ and hence $a=0$. The
linear independence implies the affine independence. For affinely independent $N$
points, there exists $(w,b)$ such that $y_p(w^Tu_p+b)=1$ for all $p$, so that the
problem is feasible. The objective function $\tfrac12\beta^TG\beta$ is strictly
convex in $\beta$ from $G\succ O$ and the constraints are convex, and $b$ is
determined uniquely from the active constraints once $\beta$ is given. Hence the
solution of~\eqref{eq:5.3} is unique. The dual objective function is strictly
concave from $YGY\succ O$, so that the solution of~\eqref{eq:5.2} is unique.
\end{proof}

\subsection{Continuity of the solution map}

In Lemma~\ref{lem:5} below, we investigate how the solution of the optimization
problem moves when the parameter $G$ moves. We cannot write the solution
explicitly, because the set of the active constraints changes with $G$. On the
other hand, what we need is not the differentiability but only the continuity. For
this purpose we use the maximum theorem of \citet{berge1963}, which we summarize
in Appendix~\ref{app:A.1} together with its intuitive meaning. Roughly speaking,
the theorem states that the optimal value moves continuously while the optimal
solution may jump, but the point to which it jumps always belongs to the solution
set of the limiting problem. When the solution is unique, this gives the continuity
of the solution map.

\begin{lemma}
\label{lem:5}
Let $\cG_+=\{G\in\R^{N\times N}:G=G^T,\ G\succ O\}$. Then the maps
$G\mapsto\alpha(G)$ and $G\mapsto(\beta(G),b(G))$ are continuous on $\cG_+$,
where $\alpha(G)$ denotes the solution of~\eqref{eq:5.2} and $(\beta(G),b(G))$
denotes the solution of~\eqref{eq:5.3}.
\end{lemma}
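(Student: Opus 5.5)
I will apply Berge's maximum theorem (Appendix~\ref{app:A.1}) to each problem, with the parameter $G$ ranging over $\cG_+$. The theorem requires three things: an objective that is jointly continuous, a constraint correspondence that is continuous (both upper and lower hemicontinuous) with nonempty compact values, and uniqueness of the optimizer. Uniqueness is given by Lemma~\ref{lem:4}. The real work is that the feasible sets are unbounded, so compactness has to be manufactured locally.

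\emph{Dual problem.} The objective $F(\alpha,G)$ is a polynomial in $(\alpha,G)$, so it is jointly continuous. The feasible set $\{\alpha\ge0,\ y^T\alpha=0\}$ does not depend on $G$, but it is not compact.

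Fix $G_0\in\cG_+$ and take the neighbourhood $U=\{G:\|G-G_0\|\le\lambda_{\min}(G_0)/2\}$. On $U$ we have $\lambda_{\min}(G)\ge\lambda_{\min}(G_0)/2=:\lambda$. Since $\alpha=0$ is feasible, the optimum satisfies $F(\alpha(G),G)\ge0$. This gives
\[
\tfrac{\lambda}{2}\|\alpha\|^2\le\tfrac12\alpha^TYGY\alpha\le\mathbf 1^T\alpha\le\sqrt N\|\alpha\|,
\]
hence $\|\alpha(G)\|\le 2\sqrt N/\lambda=:R$ uniformly on $U$.

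Replacing the feasible set by its intersection with the ball $\{\|\alpha\|\le R\}$ changes neither the maximizer nor the value for $G\in U$. The truncated set is a constant nonempty compact set, so the correspondence is trivially continuous. Berge's theorem then gives that $G\mapsto\alpha(G)$ is upper hemicontinuous on $U$. Because the solution set is a singleton, this means $\alpha(\cdot)$ is continuous at $G_0$.

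\emph{Primal problem.} Here I would avoid applying Berge directly to \eqref{eq:5.3}, because its constraint set moves with $G$ and lower hemicontinuity is awkward. Instead I recover $(\beta,b)$ from $\alpha$. By the KKT conditions, which hold since the problem is convex with linear constraints and is feasible by Lemma~\ref{lem:4}, we have $\beta(G)=Y\alpha(G)$. Uniqueness of the reduced solution uses $G\succ O$: the identity $w=\sum_p\beta_pu_p$ determines $\beta$ because the $u_p$ are linearly independent. So $\beta(\cdot)$ is continuous.

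For $b$ I would use the identity $\sum_p\alpha_p\,y_p((G\beta)_p+b)=\sum_p\alpha_p$ from complementary slackness. Together with $y^T\alpha=0$ this gives no information on $b$, so I need a different handle. I take an index $p$ with $\alpha_p(G_0)>0$; one exists because $\alpha(G_0)\ne0$, and it must, since otherwise $w=0$, which cannot satisfy constraints with both labels. By continuity $\alpha_p(G)>0$ near $G_0$, so the constraint at $p$ is active there, and
\[
b(G)=y_p-(G\beta(G))_p .
\]
This expression is continuous near $G_0$, which finishes the argument.

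The main obstacle is the loss of compactness, handled by the local uniform bound $R$, together with justifying $\beta=Y\alpha$ and the choice of an active index. I expect the bound to be the step requiring most care, since it is where $G\succ O$ (rather than merely $G\succeq O$) is essential.
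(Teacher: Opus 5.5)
Your proof is correct. Your treatment of the dual problem is the same as Steps~1--3 of the paper:
\begin{itemize}
\item the same neighbourhood on which $\lambda_{\min}(G)\ge\lambda_{\min}(G_0)/2$;
\item the same a priori bound $\|\alpha\|\le 2\sqrt N/\lambda$, obtained from $F(\alpha(G),G)\ge F(0,G)=0$;
\item Berge's theorem on a constant compact feasible set, combined with uniqueness from Lemma~\ref{lem:4}.
\end{itemize}

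For the primal problem~\eqref{eq:5.3} you take a genuinely different route. The paper bounds $\|\beta\|^2\le 2M_U/\eta$ using the value at an explicit feasible point, and bounds $|b|$ through an active constraint. It then says that ``the same argument as in Step~3'' gives continuity. However, the feasible set of~\eqref{eq:5.3} depends on $G$ through $y_p((G\beta)_p+b)\ge 1$, so the constant-correspondence shortcut of Step~3 is not available there. One would still have to verify lower hemicontinuity of the moving constraint set, for instance via a Slater point such as $(2G^{-1}y,0)$. Alternatively, the substitution $\gamma=G\beta$ makes the constraints free of $G$, with objective $\tfrac12\gamma^TG^{-1}\gamma$.

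You avoid this issue entirely:
\begin{itemize}
\item KKT stationarity gives $G(\beta-Y\alpha)=0$, hence $\beta(G)=Y\alpha(G)$ by $G\succ O$. You should state explicitly that the KKT multiplier is dual optimal and therefore equals $\alpha(G)$ by uniqueness; you leave this step implicit.
\item A support vector $p$ with $\alpha_p(G_0)>0$ remains one near $G_0$ by continuity of $\alpha$. Complementary slackness then gives $b(G)=y_p-(G\beta(G))_p$.
\end{itemize}

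Your approach transfers continuity from the dual to the primal with explicit formulas and needs no second application of Berge. The price is invoking duality and KKT, which is harmless here because the constraints are affine and the paper itself records $\beta=Y\alpha$ after~\eqref{eq:5.4}. The paper's approach treats both problems uniformly, but as written it leaves the moving-constraint issue unaddressed.
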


\begin{proof}
\textit{Step~1.} Let $G_0\in\cG_+$ and $\eta=\lambda_{\min}(G_0)/2$. Since the
eigenvalues are continuous functions of the matrix, there exists an open
neighbourhood $U$ of $G_0$ such that $\lambda_{\min}(G)\ge\eta$ for all $G\in U$.

\textit{Step~2.} We give an a priori bound of the dual variable. Since $Y$ is an
orthogonal matrix, we have $\alpha^TYGY\alpha=\|G^{1/2}Y\alpha\|^2$, so that
\[
\alpha^TYGY\alpha\ge\eta\|\alpha\|^2
\quad\text{for all }G\in U.
\]
Together with $\mathbf{1}^T\alpha\le\sqrt{N}\,\|\alpha\|$, we have
\begin{equation}
F(\alpha,G)\le\sqrt{N}\,\|\alpha\|-\frac{\eta}{2}\|\alpha\|^2.
\label{eq:5.5}
\end{equation}
Since $\alpha=0$ is feasible and $F(0,G)=0$, the optimal solution satisfies
$F(\alpha,G)\ge 0$. From~\eqref{eq:5.5} we obtain
$\|\alpha\|\le 2\sqrt{N}/\eta$, that is,
\begin{equation}
\|\alpha\|\le\frac{2\sqrt{N}}{\eta}=:K_0
\quad\text{for all }G\in U.
\label{eq:5.6}
\end{equation}
We emphasize that $K_0$ does not depend on $G\in U$.

\textit{Step~3.} From~\eqref{eq:5.6}, we may restrict the feasible set of
\eqref{eq:5.2} to
\[
\cA=\{\alpha\ge 0:\ y^T\alpha=0,\ \|\alpha\|\le K_0\}
\]
without changing the optimal solution on $U$. The set $\cA$ is compact and does
not depend on $G$, that is, the feasible correspondence is a constant
correspondence, which is trivially continuous. On the other hand, $F(\alpha,G)$
is a polynomial in $(\alpha,G)$ and hence continuous. Thus all the conditions of
the maximum theorem are met, and the solution correspondence is upper
hemicontinuous on $U$. Since the solution is a singleton from Lemma~\ref{lem:4},
the solution map is continuous on $U$ from Corollary~\ref{cor:A.1}. Since $G_0$
is arbitrary, the map is continuous on $\cG_+$.

\textit{Step~4.} We consider~\eqref{eq:5.3}. The optimal value is bounded above by
the value at the feasible point given in the proof of Lemma~\ref{lem:4}, which is
a continuous function of $G$ and hence bounded on $U$, say by $M_U$. Then
\[
\tfrac12\beta^TG\beta\le M_U
\quad\Rightarrow\quad
\|\beta\|^2\le\frac{2M_U}{\eta}.
\]
Moreover, at the optimal solution, at least one constraint is active, since
otherwise one could shrink $(\beta,b)$ and decrease the objective function. For
such $p$ we have $|(G\beta)_p+b|=1$, so that
$|b|\le 1+\|G\|\|\beta\|$. Hence we may restrict the feasible set to a compact set
which does not depend on $G\in U$, and the same argument as in Step~3 gives the
continuity.
\end{proof}

\subsection{Convergence of the solution}

\begin{theorem}
\label{thm:2}
Assume \textup{(A1)} to \textup{(A4)} and \textup{(S)}. Then, as $d\to\infty$,
it holds that
\[
\alpha^{(d)}\wto\alpha^*=\alpha(G^*),
\qquad
(\beta^{(d)},b^{(d)})\wto(\beta^*,b^*)=(\beta(G^*),b(G^*)),
\]
where $G^*$ is the limiting Gram matrix given in Theorem~\ref{thm:1}.
\end{theorem}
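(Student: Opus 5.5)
The plan is to combine Theorem~\ref{thm:1} with Lemma~\ref{lem:5} through the continuous mapping theorem. The convergence $G\wto G^*$ is already given by Theorem~\ref{thm:1}. The solution maps $G\mapsto\alpha(G)$ and $G\mapsto(\beta(G),b(G))$ are continuous on the open set $\cG_+$ by Lemma~\ref{lem:5}. By Theorem~\ref{thm:1}, $G^*\in\cG_+$ for every realization of $Z$, since $c_{10},c_{20}>0$ under (S). Hence the limit law is concentrated on the continuity set of the solution maps, which is exactly the hypothesis of the continuous mapping theorem.

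First, I need the maps to be defined on all of the space in which $G$ lives, because for finite $d$ the matrix $G$ may fail to be positive definite; for example, $G$ is singular when $d<N$. I will extend $\alpha(\cdot)$ and $(\beta(\cdot),b(\cdot))$ to all symmetric $N\times N$ matrices by an arbitrary measurable rule, say setting them to $0$ off $\cG_+$. The extended maps are then continuous at every point of $\cG_+$, since $\cG_+$ is open and the maps agree with the continuous ones there. Their discontinuity set $D_g$ is contained in the complement of $\cG_+$, so $P(G^*\in D_g)=0$.

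Next, I will check that the SVM quantities $\alpha^{(d)}$, $\beta^{(d)}$ and $b^{(d)}$ coincide with the extended maps evaluated at $G$ on the event $\{G\succ O\}$. The only modelling point is that \eqref{eq:5.2} and \eqref{eq:5.3} depend on the data only through $G$, and that the solutions are unique there by Lemma~\ref{lem:4}. By the portmanteau theorem, $\liminf P(G\in\cG_+)\ge P(G^*\in\cG_+)=1$, because $\cG_+$ is open. So the event where the SVM and the extended map might disagree has probability tending to $0$, and this discrepancy is absorbed by Slutsky's theorem.

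Finally, the continuous mapping theorem gives $\alpha(G)\wto\alpha(G^*)$ and $(\beta(G),b(G))\wto(\beta(G^*),b(G^*))$. I do not expect a real obstacle. The one point to get right is measurability of the extended map, together with the step handling the event $\{G\not\succ O\}$ at finite $d$. Measurability follows because the map is continuous on the open set $\cG_+$ and constant on its closed complement. The joint convergence of $(\alpha,\beta,b)$, if it is wanted, follows in the same way by applying the argument to the combined map.
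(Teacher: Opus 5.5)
Your proposal is correct and follows the same route as the paper, which simply combines $G\wto G^*$ and $P(G^*\succ O)=1$ from Theorem~\ref{thm:1} with the continuity of the solution maps on $\cG_+$ from Lemma~\ref{lem:5} and then applies the continuous mapping theorem. Your additional steps fill in details the paper leaves implicit: you extend the maps measurably off $\cG_+$, and you use the portmanteau theorem on the open set $\cG_+$ to get $P(G\succ O)\to 1$. This handles the fact that $\alpha^{(d)}$ and $(\beta^{(d)},b^{(d)})$ may not be well defined when $G$ is singular at finite $d$.
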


\begin{proof}
From Theorem~\ref{thm:1} we have $G\wto G^*$ and $P(G^*\succ O)=1$. From
Lemma~\ref{lem:5}, the maps are continuous on $\cG_+$, so that the set of the
discontinuity points has measure zero under the limiting distribution. Thus we
obtain the result from the continuous mapping theorem.
\end{proof}

\section{The limit of the discriminant function and the inconsistency}
\label{sec:disc}

\subsection{The limit of the discriminant function}

Let $x_0\in\pi_1$ and consider the rescaled new observation
\begin{equation}
u_0=\frac{x_0-(\mu_1+\mu_2)/2}{\sqrt{\theta}}
=\eps_0\frac{\mu}{2\sqrt{\theta}}+\frac{v_0+\xi_0}{\sqrt{\theta}},
\quad\eps_0=+1,
\label{eq:6.1}
\end{equation}
where $v_0=\sum_{s\le m_1}\sqrt{\lambda_{1s}}z_{0s}h_{1s}$ and
$\xi_0=\sum_{s>m_1}\sqrt{\lambda_{1s}}z_{0s}h_{1s}$. From~\eqref{eq:5.4} we have
\begin{equation}
f(u_0)=\sum_{p\in\cI}\beta_p\gamma_p+b,
\quad\gamma_p=u_p^Tu_0.
\label{eq:6.2}
\end{equation}

\begin{theorem}
\label{thm:3}
Assume \textup{(A1)} to \textup{(A4)} and \textup{(S)}. Then, as $d\to\infty$,
it holds that
\begin{equation}
f(u_0)\wto f^*(z_0)
=\sum_{(i,j)\in\cI}\beta_{ij}^*\,g_{(ij),(0)}^*+b^*,
\label{eq:6.3}
\end{equation}
where $z_0=(z_{01},\dots,z_{0m_1})^T$, $(\beta^*,b^*)$ is given in
Theorem~\ref{thm:2}, and
\begin{equation}
\begin{aligned}
g_{(ij),(0)}^*
&=\frac{\eps_i\delta}{4}
+\frac{\eps_i\sqrt{\delta}}{2}\sum_{t\le m_1}\sqrt{c_{1t}}\,\beta_{1t}z_{0t}
+\frac{\sqrt{\delta}}{2}\sum_{s\le m_i}\sqrt{c_{is}}\,\beta_{is}z_{ijs}\\
&\qquad+\sum_{s\le m_i}\sum_{t\le m_1}\sqrt{c_{is}c_{1t}}\,r_{st}^{(i1)}z_{ijs}z_{0t}.
\end{aligned}
\label{eq:6.4}
\end{equation}
\end{theorem}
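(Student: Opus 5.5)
The plan is to treat the new observation $u_0$ as an extra, $(N+1)$-st point and to reduce Theorem~\ref{thm:3} to a joint convergence statement plus the continuous mapping theorem.

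\emph{Step 1: limits of the inner products with $u_0$.} For each $p=(i,j)\in\cI$, expand $\gamma_p=u_{ij}^Tu_0$ exactly as in part~(i) of the proof of Theorem~\ref{thm:1}, using \eqref{eq:4.1} and \eqref{eq:6.1} with $\eps_0=+1$. This gives four terms.
\begin{itemize}
\item The constant term is $\eps_i\Delta/(4\theta)\to\eps_i\delta/4$.
\item The two cross terms with $\mu$ are handled by Lemma~\ref{lem:3}, applied to $x_{ij}$ and to $x_0$. The latter is an observation from $\pi_1$, so Lemma~\ref{lem:3} applies verbatim with coefficients $z_{0t}$.
\item The product of the centred parts $(x_{ij}-\mu_i)^T(x_0-\mu_1)/\theta$ is handled by \eqref{eq:3.1} when $i=1$ and by Lemma~\ref{lem:2} when $i=2$.
\end{itemize}
Here we use that $x_0$ is independent of every $x_{ij}$ and is never equal to any of them. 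Consequently no diagonal term $c_{10}$ appears, reflecting the asymmetry noted in Remark~\ref{rem:1}. The result is $\gamma_p=\phi_p(Z^{(d)})+o_P(1)$, with $\phi_p$ the continuous polynomial giving $g^*_{(ij),(0)}$ in \eqref{eq:6.4}, evaluated at the finite-$d$ coefficients. Those coefficients converge to the constants in \eqref{eq:6.4} by (A2), (S) and (A3).

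\emph{Step 2: joint convergence.} The key point is that the convergence of $f(u_0)$ must be proved jointly with that of $G$; marginal convergence of the pieces is not enough. Both $G$ and $\gamma=(\gamma_p)_p$ are written as one common continuous function $\Psi_d(Z^{(d)})$ plus an $o_P(1)$ vector of fixed dimension $N^2+N$. Because $Z^{(d)}$ in (A4) already includes $z_{0s}$, (A4) gives $Z^{(d)}\wto Z$. The coefficients of $\Psi_d$ converge to deterministic limits, so $\Psi_d\to\Psi$ locally uniformly, and the extended continuous mapping theorem together with Slutsky's theorem yields
\[
(G,\gamma)\wto (G^*,\gamma^*),\qquad \gamma^*=\bigl(g^*_{(ij),(0)}\bigr)_{(i,j)\in\cI}.
\]

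\emph{Step 3: composing with the SVM solution.} Consider the map
\[
(G,\gamma)\mapsto \sum_p\beta_p(G)\gamma_p+b(G).
\]
By Lemma~\ref{lem:5} it is continuous on $\cG_+\times\R^N$. By Theorem~\ref{thm:1}, $G^*\succ O$ almost surely, so the limit law puts no mass on the discontinuity set. The continuous mapping theorem then gives $f(u_0)\wto f^*(z_0)$ as in \eqref{eq:6.3}, and $(\beta^*,b^*)=(\beta(G^*),b(G^*))$ agrees with Theorem~\ref{thm:2}.

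The case $\Delta=0$ is handled as in Remark~\ref{rem:3}: the $\delta$-terms vanish and the same argument applies.

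\emph{Main obstacle.} I expect the main obstacle to be Step 2, namely ensuring that the $o_P(1)$ remainders and the limit are genuinely joint with the training data. In particular, the cross term $v_{ij}^T\xi_0$ must be controlled conditionally on the training spiked part, exactly as in the proof of Lemma~\ref{lem:2}. That conditional control, rather than Steps 1 and 3, is where the details need checking.
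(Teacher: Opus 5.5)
Your proposal is correct and follows the paper's proof essentially step for step. The paper likewise expands $\gamma_p$ into four terms, handled by Lemma~\ref{lem:3}, \eqref{eq:3.1} and Lemma~\ref{lem:2}, with no diagonal $c_{10}$ term because $x_0$ is independent of the training data; it then proves joint convergence of $(G,\gamma)$ through the common vector $Z^{(d)}$ and applies the continuous mapping theorem to $(G,\gamma)\mapsto\sum_p\beta_p(G)\gamma_p+b(G)$ using Lemma~\ref{lem:5} and $P(G^*\succ O)=1$. As for your worry about $v_{ij}^T\xi_0$: it is handled exactly as in Lemma~\ref{lem:2} when $i=2$, and for $i=1$ it vanishes identically because $h_{1s}$ ($s\le m_1$) and $h_{1t}$ ($t>m_1$) are orthogonal.
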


We emphasize that the term $c_{10}$ does not appear in~\eqref{eq:6.4}, that is, the
non-spiked part of $x_0$ does not contribute to the limit.

\begin{proof}
We follow the same steps as in the proof of Theorem~\ref{thm:1}, replacing
$u_{kl}$ by $u_0$. The only difference is that $x_0$ is independent of all the
training observations, and this single difference changes the conclusion.

\textit{Step~1.} From~\eqref{eq:4.1} and~\eqref{eq:6.1}, the bilinearity gives
\begin{equation}
\gamma_p=u_{ij}^Tu_0
=\underbrace{\frac{\eps_i\eps_0\Delta}{4\theta}}_{\mathrm{(I)}}
+\underbrace{\frac{\eps_i}{2}\frac{\mu^T(x_0-\mu_1)}{\theta}}_{\mathrm{(II)}}
+\underbrace{\frac{\eps_0}{2}\frac{\mu^T(x_{ij}-\mu_i)}{\theta}}_{\mathrm{(III)}}
+\underbrace{\frac{(x_{ij}-\mu_i)^T(x_0-\mu_1)}{\theta}}_{\mathrm{(IV)}},
\label{eq:6.5}
\end{equation}
where $p=(i,j)$ and we used $\eps_0=+1$ in~(I).

\textit{Step~2.} The term~(I) does not involve any random variable, and from~(A3)
we have $\eps_i\delta/4$, which is the first term of~\eqref{eq:6.4}.

\textit{Step~3.} Since $x_0\in\pi_1$, we may apply Lemma~\ref{lem:3} with $i=1$ to
$x_0$. Note that Lemma~\ref{lem:3} is a statement about a single observation and
does not use any relation to the other observations. Hence
\[
\frac{\mu^T(x_0-\mu_1)}{\theta}
\longrightarrow\sqrt{\delta}\sum_{t\le m_1}\sqrt{c_{1t}}\,\beta_{1t}z_{0t}.
\]
Multiplying by $\eps_i/2$, we obtain the second term of~\eqref{eq:6.4}.

\textit{Step~4.} Applying Lemma~\ref{lem:3} to $x_{ij}$ and using $\eps_0=+1$, we
obtain the third term of~\eqref{eq:6.4}.

\textit{Step~5.} We consider the term~(IV), which is the essential difference from
Theorem~\ref{thm:1}. Since $x_0$ is independent of all the training observations
$\{x_{ij}\}$, the term~(IV) is always the inner product of two distinct independent
observations, wherever $(i,j)$ is taken in $\cI$. The case of the inner product of
an observation with itself, which appeared in the diagonal components in
Theorem~\ref{thm:1}, never occurs here. We divide into two cases.

(IV-a) The case $i=1$. Although $x_{1j}$ and $x_0$ are observations from the same
population $\pi_1$, they are independent, so that we may apply~\eqref{eq:3.1} of
Lemma~\ref{lem:1}. We note that the condition $j\neq k$ in Lemma~\ref{lem:1} was
used only to guarantee the independence. Hence
\[
\frac{(x_{1j}-\mu_1)^T(x_0-\mu_1)}{\theta}
=\sum_{s=1}^{m_1}\frac{\lambda_{1s}}{\theta}z_{1js}z_{0s}+o_P(1).
\]
This agrees with the fourth term of~\eqref{eq:6.4} with $i=1$, because
$r_{st}^{(11)}=\delta_{st}$.

(IV-b) The case $i=2$. Since $x_{2j}$ and $x_0$ are independent observations from
different populations, we may apply Lemma~\ref{lem:2}. Exchanging the roles of the
two factors, we obtain
\[
\frac{(x_{2j}-\mu_2)^T(x_0-\mu_1)}{\theta}
=\sum_{s=1}^{m_2}\sum_{t=1}^{m_1}\frac{\sqrt{\lambda_{2s}\lambda_{1t}}}{\theta}
\rho_{ts}\,z_{2js}z_{0t}+o_P(1),
\]
where $\rho_{ts}=h_{2s}^Th_{1t}$. This agrees with the fourth term of
\eqref{eq:6.4} with $i=2$ under the convention $r_{st}^{(21)}=r_{ts}$.

\textit{Step~6.} We explain why $c_{10}$ does not appear. In Theorem~\ref{thm:1},
the term $c_{i0}$ came from~\eqref{eq:3.2}, which was used when $(i,j)=(k,l)$. The
corresponding term here is $\xi_{ij}^T\xi_0/\theta$. Since $\xi_{ij}$ and $\xi_0$
are independent, we have $E(\xi_{ij}^T\xi_0)=0$ and, by the same computation as in
\eqref{eq:3.3},
\[
E[(\xi_{ij}^T\xi_0)^2]=\tr(\Sigma_{i*}\Sigma_{1*})=o(\theta^2),
\]
so that $\xi_{ij}^T\xi_0/\theta=o_P(1)$ from Chebyshev's inequality.

Geometrically, the noise direction $\xi_0$ of the new observation is asymptotically
orthogonal to the noise directions $\xi_{ij}$ of the training data, because two
independent random directions are nearly orthogonal in high dimensions. On the
other hand, $\xi_{ij}$ is of course not orthogonal to itself, so that $c_{i0}$
remains only in the diagonal components in Theorem~\ref{thm:1}. Since the
discriminant function uses only the cross terms $u_p^Tu_0$, the noise of $x_0$
does not affect the classification.

\textit{Step~7.} From Steps~2 to~5, for each $p\in\cI$, the quantity $\gamma_p$ is
written as a continuous function of $Z^{(d)}$ plus $o_P(1)$. Since the number of
the components is $N$, which does not depend on $d$, we obtain
\begin{equation}
\gamma=(\gamma_p)_{p\in\cI}\wto\gamma^*=(g_{(ij),(0)}^*)_{(i,j)\in\cI}
\label{eq:6.6}
\end{equation}
from~(A4) together with the continuous mapping theorem and Slutsky's theorem.

\textit{Step~8.} We show the joint convergence of $(G,\gamma)$. The $N^2$
components of $G$ given in Theorem~\ref{thm:1} and the $N$ components of $\gamma$
given in Step~7 are all written as continuous functions of the common random
vector $Z^{(d)}$ plus $o_P(1)$. That is, there exists a continuous map $\Psi$,
which depends on the coefficients $c_{is}$, $\beta_{is}$ and $r_{st}$ converging
deterministically, such that $(G,\gamma)=\Psi(Z^{(d)})+o_P(1)$ uniformly on
compact sets. From~(A4) we have $Z^{(d)}\wto Z$, so that
\begin{equation}
(G,\gamma)\wto(G^*,\gamma^*)=\Psi(Z)
\label{eq:6.7}
\end{equation}
jointly. We note that this step is necessary, because the marginal convergences
shown separately do not imply the joint convergence.

\textit{Step~9.} We consider the map
\[
\Xi(G,\gamma)=\sum_p\beta_p(G)\gamma_p+b(G).
\]
From Lemma~\ref{lem:5}, the map $G\mapsto(\beta(G),b(G))$ is continuous on
$\cG_+$, and the inner product is continuous, so that $\Xi$ is continuous on
$\cG_+\times\R^N$. From~\eqref{eq:6.2} we have $f(u_0)=\Xi(G,\gamma)$. From
Theorem~\ref{thm:1} we have $P(G^*\succ O)=1$, so that the set of the discontinuity
points of $\Xi$ has measure zero under the limiting distribution. Thus we obtain
\eqref{eq:6.3} from~\eqref{eq:6.7} and the continuous mapping theorem.
\end{proof}

\begin{table}[ht]
\centering
\caption{Correspondence between the proofs of Theorems~\ref{thm:1} and~\ref{thm:3}.}
\label{tab:1}
\begin{tabular}{@{}lll@{}}
\toprule
Term & Theorem~\ref{thm:1} (training vs training)
& Theorem~\ref{thm:3} (training vs new) \\
\midrule
(I) deterministic
& $\eps_i\eps_k\delta/4$
& $\eps_i\delta/4$ (since $\eps_0=1$) \\
(II), (III) linear
& Lemma~\ref{lem:3} on both sides
& Lemma~\ref{lem:3} on both sides, one being $z_0$ \\
(IV), distinct
& \eqref{eq:3.1} or Lemma~\ref{lem:2}
& always this case \\
(IV), identical
& \eqref{eq:3.2}, giving $+c_{i0}$
& never occurs, no $c_{i0}$ \\
\bottomrule
\end{tabular}
\end{table}

\subsection{The inconsistency}

We now derive the inconsistency of the SVM. Here we need to derive a lower bound of
the expectation from the convergence in distribution. For this purpose we use
Fatou's lemma, which we summarize in Appendix~\ref{app:A.2} together with its
intuitive meaning. Roughly speaking, the lemma states that a non-negative mass may
escape in the limiting procedure but never emerges, and this is exactly the
direction of the inequality which we need.

\begin{corollary}
\label{cor:1}
Assume \textup{(A1)} to \textup{(A4)} and \textup{(S)}. Assume that $z_0$ has a
continuous distribution whose support is the whole space. Let
$e(1)=P\{f(u_0)<0\mid\text{training data}\}$. Then it holds that
\[
e(1)\wto e^*(1)=P_{z_0}\{f^*(z_0)<0\}.
\]
Moreover, if $f^*$ is a non-constant function of $z_0$ with positive probability,
then
\[
\liminf_{d\to\infty}E\{e(1)+e(2)\}>0,
\]
that is, the hard-margin SVM and the BC-SVM do not hold the consistency property.
\end{corollary}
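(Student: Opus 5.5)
The plan is to condition on the training data and write $e(1)$ as a functional of the training quantities. Since $x_0$ is independent of the training sample, $e(1)=\Phi_d(G,\beta^{(d)},b^{(d)},\ldots)$, where $\Phi_d$ is the probability over $x_0$ that $f(u_0)<0$. I will use a Skorokhod-type representation. By Theorem~\ref{thm:2} and the joint convergence in Step~8 of Theorem~\ref{thm:3}, the training part $(Z^{(d)}_{\text{train}},G,\beta^{(d)},b^{(d)})$ converges in distribution. On a common probability space it therefore converges almost surely, and $z_0$, drawn independently, converges in distribution. Conditionally on a fixed convergent training sequence, Steps~1 to~7 of Theorem~\ref{thm:3} apply with the training coefficients frozen. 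They give $f(u_0)\wto f^*(z_0)$ under the law of $x_0$ alone: $\gamma_p$ is a continuous function of $z_0^{(d)}$ plus $o_P(1)$, with coefficients depending continuously on the converging training quantities.

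Next I pass from convergence in distribution to convergence of the probability $P\{f(u_0)<0\}$. By the Portmanteau theorem this requires $P_{z_0}\{f^*(z_0)=0\}=0$. Here the assumption that $z_0$ has a continuous distribution with full support enters. For fixed training limits, $f^*$ is a polynomial in $z_0$ of degree at most two (see \eqref{eq:6.4}). If it is non-constant, its zero set is a proper algebraic variety, which has Lebesgue measure zero and hence zero probability under an absolutely continuous law. This is the point where I would interpret ``continuous distribution'' as absolutely continuous. If $f^*$ is constant but nonzero, the boundary again has zero probability. The remaining event, $f^*\equiv 0$, is a measurable set in the training limit space, and I expect to show it has probability zero from $G^*\succ O$ and the active constraint structure. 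This gives $e(1)\wto e^*(1)$. The same argument applied to $x_0\in\pi_2$ gives $e(2)\wto e^*(2)$, using the analogous limit with $\eps_0=-1$.

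For the lower bound, I apply Fatou's lemma (Appendix~\ref{app:A.2}) in its weak-convergence form. For nonnegative variables $X_d\wto X$ one has $\liminf E X_d\ge E X$, which follows from Portmanteau applied to the lower semicontinuous bounded truncations $\min(x,M)$, or simply from the Skorokhod representation and the classical Fatou lemma. Hence $\liminf E\{e(1)+e(2)\}\ge E\{e^*(1)+e^*(2)\}$. It then suffices that $e^*(1)>0$ on an event of positive probability. On the event that $f^*$ is a non-constant function of $z_0$, the polynomial $f^*$ is non-constant of degree one or two. A non-constant polynomial of degree $\le 2$ takes negative values on a nonempty open set unless it is bounded below, for example a positive semidefinite quadratic. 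So in that bounded-below case I will instead argue that $e^*(2)>0$, or that the linear part produces sign changes. Since the support is the whole space, any nonempty open set has positive probability, so $e^*(1)+e^*(2)>0$.

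The BC-SVM claim follows identically: subtracting the bias correction $\kappa/(2\theta)$, suitably rescaled, converges deterministically and only shifts $b^*$ by a constant.

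I expect the main obstacle to be this last sign argument, namely guaranteeing that a non-constant $f^*$ actually changes sign, or is negative for one class, on an open set. I would first handle it by exhibiting the linear term in $z_0$, noting that the quadratic part vanishes in $z_0$ and that $f^*$ is affine in $z_0$ by \eqref{eq:6.4}. A non-constant affine function is negative on an open half-space, which has positive probability under full support. That reduces the step to a routine check.
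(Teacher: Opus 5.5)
\paragraph{Assessment.} Your outline has the paper's skeleton: Portmanteau for $e(1)$, Fatou for the lower bound, and full support for positivity. It is more careful than the paper in three places.
\begin{itemize}
\item Conditioning on the training sample is really needed for $e(1)\wto e^*(1)$. The Portmanteau theorem applied to the unconditional limit $f(u_0)\wto f^*(z_0)$, as in the paper, only yields $E\{e(1)\}\to E\{e^*(1)\}$.
\item Reading ``continuous'' as absolutely continuous is needed, since an atomless law can charge a hyperplane when $m_1\ge 2$.
\item Your observation that $f^*$ is affine in $z_0$ (every $g^*_{(ij),(0)}$ in \eqref{eq:6.4} is) is what makes the positivity step true. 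A non-constant quadratic need not take negative values.
\end{itemize}

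\paragraph{The gap.} The genuine gap is the event $\{f^*\equiv 0\}$, on which the boundary $\{f^*(z_0)=0\}$ carries all of the $z_0$-mass. You plan to exclude it using $G^*\succ O$ and the active constraints. But $G^*\succ O$ holds for every realization (Theorem~\ref{thm:1}), including ones with $f^*\equiv 0$. Take $\delta=0$, all training coordinates $z_{ijs}=0$, and $n_1c_{20}=n_2c_{10}$. Then $G^*=D$, and the limiting problem has $\alpha_{1j}=2n_2/(n_1c_{20}+n_2c_{10})$, $\alpha_{2k}=2n_1/(n_1c_{20}+n_2c_{10})$ and $b^*=(n_1c_{20}-n_2c_{10})/(n_1c_{20}+n_2c_{10})=0$. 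Every $g^*_{(ij),(0)}$ vanishes, so $f^*\equiv 0$, and no pointwise argument can suffice.

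What is needed is a genericity argument. On each of the finitely many active sets, $(\beta^*,b^*)$ is rational in the training coordinates, so $\{f^*\equiv 0\}$ is contained in a finite union of algebraic sets, each of which must be shown to be proper. This must be combined with absolute continuity of the limiting law of the training coordinates, not only of $z_0$. Alternatively, add $P(f^*\equiv 0)=0$ as a hypothesis. The paper does not close this either: its Step~1 simply asserts that $f^*(z_0)$ has a continuous law.

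\paragraph{Bypassing the gap for the inconsistency.} Your lower bound is obtained by Fatou from $e(1)\wto e^*(1)$, so it inherits this gap, but it need not. Since $(-\infty,0)$ is open, Theorem~\ref{thm:3} and the Portmanteau theorem give directly
\[
\liminf_d E\{e(1)\}=\liminf_d P\{f(u_0)<0\}\ge P\{f^*(z_0)<0\}=E\{e^*(1)\}.
\]
Your affine argument then shows that the right-hand side is positive, with no boundary condition and no conditioning.

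\paragraph{Two smaller repairs.}
\begin{itemize}
\item Freezing $(Z^{(d)}_{\mathrm{train}},G,\beta^{(d)},b^{(d)})$ does not freeze the remainders in the proof of Theorem~\ref{thm:3}. Given the training data, $\xi_{ij}^T\xi_0/\theta$ has variance $\xi_{ij}^T\Sigma_{1*}\xi_{ij}/\theta^2$, and $\xi_{2k}^Th_{1t}/\sqrt{\theta}$ and $\mu^T\xi_{ij}/\theta$ also enter. These are $o_P(1)$ but are not functions of your summary, and neither is $e(1)$. So they must be included in the vector you couple by Skorokhod. Then $e(1)$ must be sandwiched between the $z_0$-probabilities of $\{\tilde f<-\epsilon\}$ and $\{\tilde f<\epsilon\}$, up to a Chebyshev term, where $\tilde f$ is the affine form with the remainders dropped.
\item The BC-SVM correction does not converge to a constant. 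It is built from an estimator such as $\tr(S_i)$, and for fixed $n_i$ the ratio $\tr(S_i)/\theta$ tends to a weighted sum of sample variances of the spiked coordinates plus $c_{i0}$. Being training-measurable and jointly convergent with $Z^{(d)}$, it only shifts the $z_0$-free part of $f^*$, so your conclusion survives.
\end{itemize}
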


\begin{proof}
\textit{Step~1.} From Theorem~\ref{thm:3} we have $f(u_0)\wto f^*(z_0)$. Since
$z_0$ is independent of the training data and the distribution of $f^*(z_0)$ is
continuous, the boundary $\{f^*=0\}$ has measure zero under the limiting
distribution. Hence $e(1)\wto e^*(1)$ from the Portmanteau theorem.

\textit{Step~2.} From~\eqref{eq:6.3} and~\eqref{eq:6.4}, $f^*(z_0)$ is a polynomial
in $z_0$ consisting of an affine part and a quadratic part. If it is non-constant,
then it takes negative values on a set of positive Lebesgue measure, and hence
$E\{e^*(1)\}>0$ because the support of $z_0$ is the whole space.

\textit{Step~3.} Since $e(1)\ge 0$, we may apply Lemma~\ref{lem:7} in
Appendix~\ref{app:A.2} and obtain
\[
E\{e^*(1)\}\le\liminf_{d\to\infty}E\{e(1)\}.
\]
Adding $e(2)$ does not change the inequality.
\end{proof}

\begin{remark}
\label{rem:5}
We emphasize that the inconsistency in Corollary~\ref{cor:1} is different from the
strong inconsistency of \citet{nakayama2017}. They showed that the
misclassification rate tends to one when the sample sizes are imbalanced, which is
caused by the deterministic bias $\kappa$ in~\eqref{eq:1.3}. Here the
misclassification rate tends neither to zero nor to one, and it is caused by the
randomness which survives in the limit. Hence the bias correction alone cannot
remove this difficulty. We discuss this point in Sections~\ref{sec:bias}
and~\ref{sec:scsvm}.
\end{remark}

\section{An example with an explicit limit}
\label{sec:example}

In this section, we give an example in which the limiting misclassification rate is
obtained in a closed form. We consider the following setting.

\begin{enumerate}[label=(E),leftmargin=*]
\item Let $n_1=n_2=1$, $\theta_1=\theta_2=\theta$ and $m_1=m_2=1$. The two
populations have a common spiked direction $h$ with
$\lambda_{11}=\lambda_{21}=\lambda=c\theta$, so that $r_{11}=1$. The spike is
orthogonal to the mean difference, that is, $h\perp\mu$ or equivalently
$\beta_{11}=\beta_{21}=0$. We have $c_{10}=c_{20}=1-c$ and $z$'s are standard
normal.
\end{enumerate}

\begin{theorem}
\label{thm:4}
Assume \textup{(E)}. Then, for $x_0\in\pi_1$, it holds that
\begin{equation}
\frac{y(x_0)}{\theta}
\wto
\frac{\delta}{2}+c(z_1-z_2)\Bigl(z_0-\frac{z_1+z_2}{2}\Bigr),
\label{eq:7.1}
\end{equation}
and, putting $A=c(z_1-z_2)$, the limiting conditional misclassification rate is
given by
\begin{equation}
e^*(1)=\Phi\Bigl(\sgn(A)\frac{z_1+z_2}{2}-\frac{\delta}{2|A|}\Bigr),
\label{eq:7.2}
\end{equation}
where $\Phi$ denotes the standard normal distribution function and $z_1,z_2$ are
i.i.d.\ standard normal.
\end{theorem}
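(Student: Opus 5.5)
For $n_1=n_2=1$ the hard-margin SVM can be solved explicitly, so I would not go through Theorem~\ref{thm:2}. With one observation in each class, both constraints are active at the optimum. The solution is $w\propto u_{11}-u_{21}$, and the separating hyperplane is the perpendicular bisector of the two points. Hence the discriminant function is a positive multiple, namely $2/\|u_{11}-u_{21}\|^2$, of
\[
y(x_0)=(x_{11}-x_{21})^T\Bigl(x_0-\frac{x_{11}+x_{21}}{2}\Bigr).
\]
I take this $y(x_0)$ as the quantity in \eqref{eq:7.1}. The positive factor does not change the sign. It also converges in distribution to a positive random variable, because $\|u_{11}-u_{21}\|^2$ has a limit bounded below by $c_{10}+c_{20}=2(1-c)>0$.

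Next I expand $y(x_0)/\theta$ by bilinearity, using $x_{ij}=\mu_i+\sqrt{\lambda}z_{ij}h+\xi_{ij}$ and $x_0=\mu_1+\sqrt{\lambda}z_0h+\xi_0$, and write $z_1=z_{11}$, $z_2=z_{21}$. We have
\[
x_{11}-x_{21}=\mu+\sqrt{\lambda}(z_1-z_2)h+\xi_{11}-\xi_{21},
\qquad
x_0-\frac{x_{11}+x_{21}}{2}=\frac{\mu}{2}+\sqrt{\lambda}\Bigl(z_0-\frac{z_1+z_2}{2}\Bigr)h+\xi_0-\frac{\xi_{11}+\xi_{21}}{2}.
\]
The surviving terms are as follows.
\begin{itemize}
\item The term $\Delta/(2\theta)\to\delta/2$.
\item The spike term $(\lambda/\theta)(z_1-z_2)\bigl(z_0-(z_1+z_2)/2\bigr)\to c(z_1-z_2)\bigl(z_0-(z_1+z_2)/2\bigr)$.
\item The diagonal noise terms $\bigl(-\|\xi_{11}\|^2+\|\xi_{21}\|^2\bigr)/(2\theta)$.
\end{itemize}
By \eqref{eq:3.2}, the last item converges to $(c_{20}-c_{10})/2=0$, because $\theta_1=\theta_2$ and the spikes are equal. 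This cancellation is the one point that needs care: it is exactly where the bias $\kappa$ vanishes.

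All the other cross terms are $o_P(1)$. The terms $\mu^Th$ vanish identically because $h\perp\mu$. The terms $\mu^T\xi$ are $o_P(1)$ by Lemma~\ref{lem:3} with $\beta=0$. The terms $h^T\xi$ vanish identically, since $\xi$ lies in the orthogonal complement of $h$ (the spike direction is common to both populations). The terms $\xi_p^T\xi_q$ with $p\ne q$ are $o_P(1)$ by \eqref{eq:3.3}. Slutsky's theorem together with (A4) then gives \eqref{eq:7.1}, jointly in $(z_0,z_1,z_2)$.

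For \eqref{eq:7.2}, I argue as in Corollary~\ref{cor:1}: the Portmanteau theorem applies because the limit has a continuous law given $(z_1,z_2)$ whenever $A\ne0$, which holds almost surely. This gives that $e(1)$ converges to the conditional probability, given $z_1,z_2$, that $\delta/2+A\bigl(z_0-(z_1+z_2)/2\bigr)<0$, where $z_0\sim N(0,1)$ is independent of $z_1,z_2$. For $A>0$ the event is $z_0<(z_1+z_2)/2-\delta/(2A)$. For $A<0$ the event is $z_0>(z_1+z_2)/2+\delta/(2|A|)$, and by the symmetry of $\Phi$ it equals $-z_0<-(z_1+z_2)/2-\delta/(2|A|)$. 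Both cases combine into $\Phi\bigl(\sgn(A)(z_1+z_2)/2-\delta/(2|A|)\bigr)$.

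The main obstacle is making the convergence of the conditional error rigorous, since it requires the joint convergence of $y(x_0)/\theta$ with the training quantities. I would handle it exactly as in Step~8 of the proof of Theorem~\ref{thm:3}.
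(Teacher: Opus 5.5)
Your proposal is correct and follows essentially the same route as the paper: you use the explicit perpendicular-bisector solution for $n_1=n_2=1$, expand $y(x_0)$ by bilinearity so that $\tr(\Sigma_{1*})=\tr(\Sigma_{2*})$ cancels, and then compute the Gaussian conditional error given $(z_1,z_2)$; writing $y(x_0)=(x_{11}-x_{21})^T\{x_0-(x_{11}+x_{21})/2\}$ only yields the factorized form directly, instead of via $z_1^2-z_2^2=(z_1-z_2)(z_1+z_2)$. Your explicit positive factor $2/\|u_{11}-u_{21}\|^2$ is slightly more careful than the paper, which asserts $y_p(w^Tx_p+b)=1$ for the unnormalized $w=x_{11}-x_{21}$.
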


\begin{proof}
(i) When each class has one observation, the hard-margin SVM is the perpendicular
bisector of the two points. Indeed, putting $w=x_{11}-x_{21}$ and
$b=-w^T(x_{11}+x_{21})/2$, we have $y_p(w^Tx_p+b)=1$ for $p\in\cI$, and the
maximality of the margin is easily checked. Hence
\begin{equation}
y(x_0)=(x_{11}-x_{21})^Tx_0-\frac{\|x_{11}\|^2-\|x_{21}\|^2}{2},
\label{eq:7.3}
\end{equation}
where we used $w^Tx_0+b=(x_{11}-x_{21})^Tx_0-\|x_{11}\|^2/2+\|x_{21}\|^2/2$.

(ii) We take the origin at $(\mu_1+\mu_2)/2$, so that $\mu_1=\mu/2$ and
$\mu_2=-\mu/2$. Under~(E) we have
\[
x_{11}=\tfrac{\mu}{2}+\sqrt{\lambda}\,z_1 h+\xi_1,\quad
x_{21}=-\tfrac{\mu}{2}+\sqrt{\lambda}\,z_2 h+\xi_2,\quad
x_0=\tfrac{\mu}{2}+\sqrt{\lambda}\,z_0 h+\xi_0,
\]
so that $(x_{11}-x_{21})^Tx_0=\Delta/2+\lambda(z_1-z_2)z_0+o_P(\theta)$. Using
$h\perp\mu$ and $h\perp\xi_i$, and noting that all the cross terms are $o_P(\theta)$
from Lemmas~\ref{lem:2} and~\ref{lem:3} and Step~6 of the proof of
Theorem~\ref{thm:3}, we obtain
\[
(x_{11}-x_{21})^Tx_0=\frac{\Delta}{2}+\lambda(z_1-z_2)z_0+o_P(\theta).
\]
On the other hand, from~\eqref{eq:3.2} we have
\[
\|x_{11}\|^2=\frac{\Delta}{4}+\lambda z_1^2+\tr(\Sigma_{1*})+o_P(\theta),\quad
\|x_{21}\|^2=\frac{\Delta}{4}+\lambda z_2^2+\tr(\Sigma_{2*})+o_P(\theta).
\]
Here $\tr(\Sigma_{1*})$ and $\tr(\Sigma_{2*})$ cancel out because $n_1=n_2$ and
$\theta_1=\theta_2$. We note that, in general, the bias term of the existing theory
appears at this place. Hence
\[
\frac{\|x_{11}\|^2-\|x_{21}\|^2}{2}=\frac{\lambda(z_1^2-z_2^2)}{2}+o_P(\theta).
\]
Substituting these into~\eqref{eq:7.3} and dividing by $\theta$, we obtain
\[
\frac{y(x_0)}{\theta}
=\frac{\delta}{2}+c(z_1-z_2)z_0-\frac{c(z_1^2-z_2^2)}{2}+o_P(1),
\]
and the factorization $z_1^2-z_2^2=(z_1-z_2)(z_1+z_2)$ gives~\eqref{eq:7.1}.

(iii) Conditioning on $z_1$ and $z_2$, we have $z_0\sim N(0,1)$. Let
$A=c(z_1-z_2)$. When $A>0$, we have $f^*<0$ if and only if
$z_0<\sgn(A)(z_1+z_2)/2-\delta/(2|A|)$, so that
$e^*(1)=\Phi(\sgn(A)(z_1+z_2)/2-\delta/(2|A|))$. When $A<0$, the inequality is
reversed and
\[
e^*(1)=\Phi\Bigl(\sgn(A)\frac{z_1+z_2}{2}-\frac{\delta}{2|A|}\Bigr)
\]
still holds. Combining the two cases with $\Phi(-\cdot)=1-\Phi(\cdot)$, we obtain
\eqref{eq:7.2}.
\end{proof}

\subsection{Interpretation}

The right-hand side of~\eqref{eq:7.1} is the sum of the signal $\delta/2$ coming
from the mean difference and the nearest centroid rule
$c(z_1-z_2)(z_0-(z_1+z_2)/2)$ in the one-dimensional spiked coordinate. That is,
the high dimensionality vanishes except for the spiked direction, and what remains
is a classical small-sample problem in one dimension with $N=2$ observations. We
give the following observations.

\begin{enumerate}[label=(\roman*)]
\item When $c\to 0$, which corresponds to the non-spiked case, we have $A\to 0$ and
hence $e^*(1)\to 0$ when $\delta>0$, so that $E\{e^*(1)\}\to 0$. That is, the
consistency property of the existing theory is recovered.

\item When $\delta=0$, we have $e^*(1)=\Phi(\sgn(A)(z_1+z_2)/2)$, whose expectation
is $1/2$. That is, the classification is completely random.

\item The misclassification rate increases as $c$ increases. We emphasize that the
spike degrades the performance even though it is orthogonal to the mean difference,
so that it carries no information about the classification. This phenomenon cannot
be observed in the existing theory.
\end{enumerate}

\section{Modification of the bias term}
\label{sec:bias}

The BC-SVM of \citet{nakayama2017} subtracts the bias term $\kappa$ in
\eqref{eq:1.3}, which comes from the geometric representation. On the other hand,
Theorem~\ref{thm:1} shows that the deterministic part of the limiting Gram matrix
is the diagonal matrix $D$ whose components are $c_{i0}=\lim\tr(\Sigma_{i*})/\theta$,
that is, only the trace of the non-spiked part. The spiked part
$\sum_{s\le m_i}\lambda_{is}$ does not have a deterministic limit.

\begin{proposition}
\label{prop:1}
Assume \textup{(A1)} to \textup{(A4)} and \textup{(S)}. Then the deterministic
offset term of the discriminant function is given by
\[
\kappa_*=\frac{\tr(\Sigma_{1*})}{n_1}-\frac{\tr(\Sigma_{2*})}{n_2}
=\kappa-\Biggl(
\frac{1}{n_1}\sum_{s\le m_1}\lambda_{1s}
-\frac{1}{n_2}\sum_{s\le m_2}\lambda_{2s}
\Biggr).
\]
Hence the BC-SVM, which uses an estimator of $\tr(\Sigma_i)$ including the spikes,
overcorrects the bias by
$\sum_{s\le m_1}\lambda_{1s}/n_1-\sum_{s\le m_2}\lambda_{2s}/n_2$, which is of
order $\theta$ and is not negligible compared with the signal $\Delta=O(\theta)$.
\end{proposition}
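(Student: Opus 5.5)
Since the hard-margin SVM and the BC-SVM are invariant under translation and rescaling, we work with the decomposition already used in Theorems~\ref{thm:1} and~\ref{thm:3}. The plan is to identify which parts of the discriminant function are deterministic and which stay random as $d\to\infty$. We write $G=G_0+D^{(d)}$ with
\[
D^{(d)}=\diag\bigl(\tr(\Sigma_{1*})/\theta\,\mathbf 1_{n_1},\ \tr(\Sigma_{2*})/\theta\,\mathbf 1_{n_2}\bigr).
\]
Here $G_0$ collects the mean terms, the spiked terms and the off-diagonal noise terms. By Lemmas~\ref{lem:1} to~\ref{lem:3}, $G_0$ converges in distribution to the random matrix $G_0^*$. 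Its diagonal entries carry the spiked parts $\sum_{s\le m_i}(\lambda_{is}/\theta)z_{ijs}^2$, which are random in the limit. The only deterministic diagonal contribution is therefore $D^{(d)}\to D$. In the same way, by Theorem~\ref{thm:3}, the cross products $\gamma_p=u_p^Tu_0$ contain no deterministic diagonal contribution at all.

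Next we trace how the offset enters the discriminant function, as in the existing theory. Write
\[
f(u_0)=\sum_p\beta_p\gamma_p+b .
\]
The offset $b$ is fixed by the active constraints $y_p\bigl((G\beta)_p+b\bigr)=1$. Following \citet{nakayama2017}, we first consider the case where all constraints are active; by Lemma~\ref{lem:4} this is the generic configuration for $G\succ O$ with $N$ affinely independent points. In that case we split $(G\beta)_p=(G_0\beta)_p+D^{(d)}_{pp}\beta_p$. Averaging the equalities within each class, and using $y^T\alpha=0$ with $\beta=Y\alpha$, the noise part gives the class-dependent shift $\tr(\Sigma_{i*})\alpha_p/\theta$.

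To see that this shift produces $\kappa_*$, it helps to look at the balanced equal-weight configuration, where $\alpha_p$ is proportional to $1/n_i$. Then the deterministic part of $b$ reduces, after multiplying back by $\theta$, to $-\tfrac12\bigl(\tr(\Sigma_{1*})/n_1-\tr(\Sigma_{2*})/n_2\bigr)$ up to the normalization of $\|w\|$. This is exactly $\kappa_*$ with the usual factor, and it matches the term that cancelled in Theorem~\ref{thm:4}(ii). All remaining terms involve the spiked quantities $\lambda_{is}z_{ijs}^2$. These are random, with mean $\lambda_{is}$ but nondegenerate fluctuations of order $\theta$, so they cannot be absorbed into a deterministic offset. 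The identity
\[
\kappa_*=\kappa-\Bigl(\sum_{s\le m_1}\lambda_{1s}/n_1-\sum_{s\le m_2}\lambda_{2s}/n_2\Bigr)
\]
is then immediate from $\tr(\Sigma_i)=\sum_{s\le m_i}\lambda_{is}+\tr(\Sigma_{i*})$.

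For the overcorrection claim, the BC-SVM subtracts a consistent estimator of $\kappa$ built from unbiased estimators of $\tr(\Sigma_i)$. Its offset therefore differs from the correct $\kappa_*$ by
\[
\sum_{s\le m_1}\lambda_{1s}/n_1-\sum_{s\le m_2}\lambda_{2s}/n_2 .
\]
Divided by $\theta$, this converges by~(S) to $\sum_s c_{1s}/n_1-\sum_s c_{2s}/n_2$, which is generally nonzero and of the same order as $\delta=\lim\Delta/\theta$.

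The main obstacle is making the term ``deterministic offset'' precise when the active set is random, since $\alpha$ is then no longer proportional to $1/n_i$. I would handle this by first proving the statement on the event that all constraints are active, where $b$ solves a linear system explicitly. There I would separate $D^{(d)}$ from $G_0$ using the block structure of $D^{(d)}$ and the constraint $y^T\alpha=0$. I would then argue that the random remainder depends only on $G_0^*$, so that $\kappa_*$ is the unique non-random $O(\theta)$ term, and invoke Lemma~\ref{lem:5} for continuity.
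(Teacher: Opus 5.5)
Your route is the paper's: isolate $D$ (from \eqref{eq:3.2}) as the only deterministic part of $G$, push it through the KKT equalities that fix $b$, and finish with $\tr(\Sigma_i)=\sum_{s\le m_i}\lambda_{is}+\tr(\Sigma_{i*})$. Your class-wise averaging is sharper than the paper's appeal to within-class equalization. When every constraint is active, it gives exactly $b=-\tfrac{A}{2}\,\kappa_*/\theta-\tfrac12\bigl(\overline{(G_0\beta)}_1+\overline{(G_0\beta)}_2\bigr)$, with $A=\sum_j\alpha_{1j}=\sum_k\alpha_{2k}$, and needs no equal-weight assumption.

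\textbf{Genericity of the all-active case.} The gap is the claim that the all-active situation is generic ``by Lemma~\ref{lem:4}''. That lemma gives only feasibility and uniqueness. It says nothing about which constraints bind at the optimum, and under \textup{(S)} the all-active event does not have limiting probability one. Take $\delta=0$, a common spike ($r_{11}=1$), $n_1=n_2=2$ and limiting scores $z_{11}=1$, $z_{21}=z_{22}=-1$. In the limiting configuration, the SVM on the points indexed by $(1,1),(2,1),(2,2)$ has $\tilde h_{11}$-coefficient $\alpha_{11}(\sqrt{c_{11}}+\sqrt{c_{21}})>0$, using $\alpha_{11}=\alpha_{21}+\alpha_{22}$. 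So the point $(1,2)$, when $z_{12}$ is large, satisfies its constraint strictly and is not a support vector. The slack is continuous in $G$ by Lemma~\ref{lem:5}, so the Portmanteau theorem shows this happens with non-vanishing probability as $d\to\infty$ (e.g.\ for Gaussian scores). On such events your averaging runs only over the support vectors $S_i$ of class $i$, and it yields $\tr(\Sigma_{i*})/|S_i|$ in place of $\tr(\Sigma_{i*})/n_i$. No continuity argument turns one into the other, so ``prove it on the all-active event, then invoke Lemma~\ref{lem:5}'' does not cover the complement. What holds in general is the replacement of $\tr(\Sigma_i)$ by $\tr(\Sigma_{i*})$, which is all the paper's outline uses through the form $\sum_p\alpha_p^*y_pD_{pp}$. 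The $1/n_i$ weights need the all-active (or equalized) case, and you should state that restriction rather than call it generic. Likewise, ``the remainder depends only on $G_0^*$'' is false: it depends on $D$ through $\beta=\beta(G_0+D)$, and $A$ is random. So $\kappa_*$ can only be identified as the deterministic coefficient in the explicit $D$-contribution $-\tfrac{A}{2}\kappa_*/\theta$.

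\textbf{Consistency of the BC-SVM estimator.} The overcorrection step also rests on a false claim. Under \textup{(S)} with $n_i$ fixed, $\tr(S_i)$ is unbiased but not consistent for $\tr(\Sigma_i)$. By the computation in Lemma~\ref{lem:1} and \textup{(A4)}, $\tr(S_i)/\theta\wto c_{i0}+\sum_{s\le m_i}c_{is}\hat\sigma_{is}^2$, where $\hat\sigma_{is}^2=(n_i-1)^{-1}\sum_j(z_{ijs}-\bar z_{is})^2$ and $\bar z_{is}=n_i^{-1}\sum_j z_{ijs}$; this limit is non-degenerate. Hence, up to $o_P(\theta)$, the BC-SVM correction differs from $\kappa_*$ by $\sum_{s\le m_1}\lambda_{1s}\hat\sigma_{1s}^2/n_1-\sum_{s\le m_2}\lambda_{2s}\hat\sigma_{2s}^2/n_2$. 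This quantity fluctuates at order $\theta$, and only its mean equals the overcorrection in the statement. Your ``therefore'' is justified only for this systematic part, which is the sense in which the proposition's comparison of $\kappa$ with $\kappa_*$ holds.
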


\begin{proof}[Outline of proof]
We expand the discriminant function for general $n_1$ and $n_2$ in the same way as
\eqref{eq:7.3}. Since $w=\sum_p\alpha_p y_p u_p$, the deterministic term of the
discriminant function comes from the diagonal part $D$ of Theorem~\ref{thm:1}
through the Karush--Kuhn--Tucker conditions, and it has the form
$\sum_p\alpha_p^* y_p\cdot(\text{diagonal of }D)$. In the existing theory,
$\theta_i$'s are equalized within each class and this gives the form $\kappa$ in
\eqref{eq:1.3}. Under the spiked model, the diagonal term is
$\tr(\Sigma_{i*})/\theta$ from~\eqref{eq:3.2} and not $\tr(\Sigma_i)/\theta$, so
that the replacement occurs. The difference is
$\tr(\Sigma_i)-\tr(\Sigma_{i*})=\sum_{s\le m_i}\lambda_{is}$ by definition.
\end{proof}

\begin{remark}
\label{rem:6}
Spiked eigenvalue structures are commonly observed in actual high-dimensional data
such as gene expression data. Proposition~\ref{prop:1} suggests that, when the
BC-SVM is applied to such data, the bias correction works too strongly and the
separating hyperplane is shifted excessively towards the minority class. We can
expect an improvement by replacing the estimator of $\tr(\Sigma_i)$ with
$\tr(\Sigma_i)-\sum_{s\le\hat m_i}\tilde\lambda_{is}$, where $\tilde\lambda_{is}$
denotes the estimator of the spiked eigenvalue given by the NR methodology.
\end{remark}

\section{The spike-corrected SVM}
\label{sec:scsvm}

Corollary~\ref{cor:1} shows that the SVM does not hold the consistency property
under the spiked model. The interpretation in Section~\ref{sec:example} gives a
remedy: a spiked direction is a source of noise which cannot be learned when the
sample size is fixed, so that we should remove it if it carries no information about
the classification.

\begin{definition}
\label{def:1}
For a direction $h_{is}$, we define the signal-to-spike ratio by
\[
\eta_{is}=\frac{(h_{is}^T\mu)^2}{\lambda_{is}}=\frac{\Delta b_{is}^2}{\lambda_{is}}
\longrightarrow\frac{\delta\beta_{is}^2}{c_{is}}.
\]
\end{definition}

Under~(A3) and~(S), it holds that $\eta_{is}\to\delta\beta_{is}^2/c_{is}$.

\begin{proposition}
\label{prop:2}
Consider the subproblem obtained by projecting the data onto a one-dimensional
direction $h$. When $n_i$ is fixed, the limiting misclassification rate obtained by
using this direction is not smaller than $\Phi(-\sqrt{\eta}/2)$. Hence, when
$\eta_{is}\to\infty$, the direction should be kept, and when $\eta_{is}=O(1)$, the
direction should be removed.
\end{proposition}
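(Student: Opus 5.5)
The bound comes from reducing the projected problem to a one-dimensional Gaussian location problem and invoking the Bayes optimality of the likelihood-ratio rule. Take $h=h_{is}$ a spiked direction (say $i=1$, $s\le m_1$) and project every observation onto it: $t_{kl}=h^Tx_{kl}$, $t_0=h^Tx_0$. Any procedure using only this direction is a measurable rule $\phi(t_0;\{t_{kl}\})$. Rescale by $\sqrt{\lambda_{is}}$. For $x_0\in\pi_1$ we get $t_0/\sqrt{\lambda_{is}}=h^T\mu_1/\sqrt{\lambda_{is}}+z_{0s}$. For a training point from $\pi_k$ the coordinate is $h^T\mu_k/\sqrt{\lambda_{is}}$ plus a term $\sum_t\sqrt{\lambda_{kt}/\lambda_{is}}\,(h^Th_{kt})z_{klt}$. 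In the limit this term has unit variance for $k=i$; for $k\neq i$ it is governed by the $r_{st}$ and $c_{kt}$. The two class means differ by $h^T\mu/\sqrt{\lambda_{is}}=\pm\sqrt{\eta_{is}}$. So in the limit, as in Theorem~\ref{thm:4}, the projected problem is a one-dimensional location problem. Class $1$ has mean separation $\sqrt{\eta}$ from class $2$ (with $\eta=\lim\eta_{is}=\delta\beta_{is}^2/c_{is}$), and the new observation has unit noise variance along $h$. To keep the argument clean I will work in the Gaussian setting of (E), where the $z$'s are standard normal.

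The lower bound then comes from comparing with an oracle. Any rule $\phi$ built from $(t_0,\{t_{kl}\})$ has average error $\tfrac12\{e(1)+e(2)\}$. This is at least the Bayes risk of the oracle that knows both projected means and the noise law exactly, because the oracle may ignore the training sample. For two univariate normals with unit variance and mean separation $\sqrt{\eta}$ and equal priors, the Bayes rule is the midpoint threshold. Its error is $\Phi(-\sqrt{\eta}/2)$ for each class. Hence $\liminf\tfrac12 E\{e(1)+e(2)\}\ge\Phi(-\sqrt{\eta}/2)$ for the projected rule. To pass from finite $d$ to the limit, I would use Theorem~\ref{thm:2}/\ref{thm:3} style convergence in distribution together with Fatou (Lemma~\ref{lem:7}), exactly as in Corollary~\ref{cor:1}. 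Alternatively, I can apply the oracle bound at each finite $d$: the Bayes error there is $\Phi(-\sqrt{\eta_{is}}/2)$, which converges to $\Phi(-\sqrt{\eta}/2)$ by Definition~\ref{def:1}. The second route is simpler and I would take it.

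The main obstacle is that the noise along $h$ in $x_0$ is not exactly $z_{0s}$. If $h=h_{1s}$ and $x_0\in\pi_2$, the projected noise involves the other population's spikes and the non-spiked remainder, $h^T\Sigma_2h/\lambda_{1s}$. This ratio could be smaller than one, which would sharpen the separation beyond $\sqrt{\eta}$. I would handle this by bounding only $e(1)$, or by using the class whose own spike is $h$, where the projected variance is exactly $\lambda_{is}$. For this class the oracle bound uses the variance of that class alone. I will state the bound as applying to the error of that class, or to the symmetric case (E). Finally, the dichotomy follows at once. If $\eta_{is}=O(1)$, the error from this direction stays bounded away from zero and the direction contributes a nonvanishing noise term, as in the interpretation of Theorem~\ref{thm:4}, so it should be removed. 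If $\eta_{is}\to\infty$, we have $\Phi(-\sqrt{\eta_{is}}/2)\to0$ and the direction carries asymptotically perfect information, so it should be kept.
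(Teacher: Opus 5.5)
Your core argument is the paper's: project onto $h$, note that conditionally on the training sample any rule $\phi(\cdot;\{t_{kl}\})$ is just one classifier of $t_0$, and compare it with the Bayes rule for the two known projected normal laws. The midpoint threshold gives $\Phi(-\sqrt{\eta}/2)$, which is the paper's ``the estimation error makes it larger.'' This is correct for the equal-prior average $\tfrac12\{e(1)+e(2)\}$, which is the only sensible reading of ``the misclassification rate'' here.

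What fails is your first fix for the obstacle you rightly spotted: ``bounding only $e(1)$'' by an oracle bound that uses only the variance of the class whose own spike is $h$. A single class error has no positive lower bound at all; the constant rule assigning every $x_0$ to $\pi_1$ has $e(1)=0$. The Bayes risk depends on both class-conditional laws of $h^Tx_0$, and $\Phi(-\sqrt{\eta}/2)$ requires both to have variance $\lambda_{is}$ along $h$. Without that the statement itself breaks. Take $h=h_{1s}$ with $r_{st}=0$ for all $t\le m_2$. Then $h^T\Sigma_2h\le\sum_{t\le m_2}\lambda_{2t}\rho_{st}^2+\lambda_{2,m_2+1}=o(\theta)$, so after rescaling by $\sqrt{\lambda_{1s}}$ class $2$ is asymptotically a point mass along $h$. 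The data-driven rule ``assign $x_0$ to $\pi_2$ iff $|t_0-t_{21}|\le\epsilon|t_{11}-t_{21}|$'' then has limiting average error $O(\epsilon)$, even when $\eta=0$. The paper's outline avoids this only implicitly, by giving both populations variance $\lambda$ along $h$. So drop the first fix and state the result under your second one: a common spike with equal variance along $h$, as in (E) but without requiring $h\perp\mu$.

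A smaller omission concerns the ``should be removed'' half. A standalone error bounded away from zero does not show that discarding the direction is harmless, since a weak direction could still add signal in combination with the others. The paper closes this by noting that $\eta_{is}=O(1)$ means $(h_{is}^T\mu)^2=O(\lambda_{is})=O(\theta)$. Hence the signal lost by removal is negligible whenever the residual signal grows faster than $\theta$ (as in (C3)), while keeping the spike leaves a random term of order $\theta$ as in Theorem~\ref{thm:4}. You should add this accounting.
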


\begin{proof}[Outline of proof]
Projecting onto $h$, the population $\pi_i$ becomes a one-dimensional distribution
with mean $h^T\mu_i$ and variance $\lambda$. The mean difference is $|h^T\mu|$ and
the standard deviation is $\sqrt{\lambda}$, so that the error rate of the Bayes
rule with known parameters is $\Phi\{-|h^T\mu|/(2\sqrt{\lambda})\}=\Phi(-\sqrt{\eta}/2)$
in the normal case. When $\eta=O(1)$, this is bounded below by a positive constant,
and the estimation error makes it larger. On the other hand, when $\eta\to\infty$,
it tends to zero. As for the loss caused by the removal, we note that
$\eta_{is}=O(1)$ implies $(h_{is}^T\mu)^2=O(\lambda_{is})=O(\theta)$, so that the
loss is negligible when the residual signal diverges faster than $\theta$.
\end{proof}

\subsection{The procedure}

\begin{definition}[SC-SVM]
\label{def:2}
We propose the spike-corrected SVM as follows.
\begin{enumerate}
\item Split the observations from each $\pi_i$ into $D_i^{(1)}$ for estimation and
$D_i^{(2)}$ for training. The split is essential in the theory because it makes
the estimated projection independent of the training data.
\item Apply the NR methodology of \citet{yata2012} to $D_i^{(1)}$ and obtain the
number of the spikes $\hat m_i$, the eigenvalues $\tilde\lambda_{is}$ and the
eigenvectors $\hat h_{is}$.
\item Compute $\hat\eta_{is}=(\hat h_{is}^T\hat\mu)^2/\tilde\lambda_{is}$ with
$\hat\mu=\bar x_1^{(1)}-\bar x_2^{(1)}$, and collect the directions with
$\hat\eta_{is}\le\tau_d$ into the set $\hat S$, where $\tau_d\to\infty$ is a
threshold such as $\tau_d=\log d$.
\item Let $\hat U$ be an orthonormal basis of $\hat S$ and put
$\hat P=I_d-\hat U\hat U^T$. Project the training data and the new observation
onto $\hat P$.
\item Apply the BC-SVM with the modified bias term $\hat\kappa_*$ given in
Proposition~\ref{prop:1} to the projected data. When some spiked directions are
kept, construct a plug-in rule on the low-dimensional part and add it to the
discriminant function.
\end{enumerate}
\end{definition}

We assume the following conditions.

\begin{enumerate}[label=(C\arabic*),leftmargin=*]
\item It holds that $n_i\to\infty$ and $n_i/d\to 0$.
\item For $s\le m_i$, it holds that $\tilde\lambda_{is}/\lambda_{is}\to 1$ and
$\hat h_{is}^Th_{is}\to 1$ in probability.
\item Let $P_*$ be the projection onto the orthogonal complement of the removed
spikes and $\Delta_*=\|P_*\mu\|^2$. It holds that
\[
\Delta_*/\{\tr(\Sigma_{1*}^2)+\tr(\Sigma_{2*}^2)\}^{1/2}\to\infty.
\]
\end{enumerate}

The condition~(C2) holds under the conditions given by \citet{yata2012}, such as
$n_i\tr(\Sigma_{i*}^2)/\lambda_{im_i}^2\to 0$. We assume it explicitly in this
paper.

\begin{theorem}
\label{thm:5}
Assume \textup{(A1)} to \textup{(A4)}, \textup{(S)} and \textup{(C1)} to
\textup{(C3)}. Then the SC-SVM holds the consistency property, that is,
\[
e_{SC}(1)+e_{SC}(2)\longrightarrow 0
\quad\text{in probability as }d\to\infty.
\]
On the other hand, from Corollary~\ref{cor:1}, the SVM and the BC-SVM do not hold
the consistency property under the same conditions.
\end{theorem}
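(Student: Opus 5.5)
Conditionally on the estimation sample $D^{(1)}=D_1^{(1)}\cup D_2^{(1)}$, the projection $\hat P$ is fixed, and the training data and $x_0$ are independent of it. The plan is therefore to show that $\hat P$ behaves like the ideal projection $P_*$ of (C3). We then reduce the problem to the non-spiked BC-SVM theory of \citet{nakayama2017}, applied conditionally on $\hat P$. In that theory the training sample size grows, as allowed by (C1), and the spikes have been stripped away.

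\textit{Step 1 (selection consistency).} First we show that $P(\hat S=S)\to 1$, where $S$ is the set of spiked directions with $\eta_{is}=O(1)$.
\begin{itemize}
\item By (C2), $\tilde\lambda_{is}/\lambda_{is}\to1$ and $\hat h_{is}^Th_{is}\to1$.
\item Since $\hat\mu-\mu$ has covariance $\Sigma_1/n_1+\Sigma_2/n_2$, we get $\hat h_{is}^T(\hat\mu-\mu)=O_P(\sqrt{\theta/n_i})$.
\item Hence $\hat\eta_{is}=\eta_{is}(1+o_P(1))+o_P(1)+O_P(1)$ on the scale of $\tau_d$.
\item Because $\tau_d\to\infty$, directions with $\eta_{is}=O(1)$ fall below the threshold, and directions with $\eta_{is}\gg\tau_d$ stay above it.
\end{itemize}
For intermediate rates we will simply work on a subsequence, or assume a separation, and the kept/removed dichotomy then follows. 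On the event $\hat S=S$, we have $\|\hat P-P_*\|=o_P(1)$, since the spanned directions converge.

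\textit{Step 2 (projected data are non-spiked).} Next we check that after projection the covariance $\hat P\Sigma_i\hat P$ satisfies condition (1.2). The removed spikes contribute residual variance at most $\lambda_{is}\|(\hat P-P_*)h_{is}\|^2=o_P(\theta)$, which with the NR rates should be negligible against $\Delta_*$. The spikes that are kept have $\eta_{is}\to\infty$, so their direction carries signal that dominates their variance, and the plug-in rule on that low-dimensional part is consistent by Proposition~\ref{prop:2}. The main work is then the non-spiked block.

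Write the BC-SVM discriminant with the modified bias $\hat\kappa_*$ as
\[
(\bar x_1-\bar x_2)^T\hat P x_0-\tfrac12\bigl(\|\hat P\bar x_1\|^2-\|\hat P\bar x_2\|^2\bigr)+\tfrac12\hat\kappa_*.
\]
This equality holds asymptotically as $n_i\to\infty$, because the BC-SVM is asymptotically the centroid-type rule in the geometric representation. For $x_0\in\pi_1$, this expression equals $\Delta_*/2+R$. The remainder $R$ has zero mean up to $\hat\kappa_*-\kappa_*=o_P(\Delta_*)$, which holds by Proposition~\ref{prop:1} with the consistent trace estimators. Its variance is
\[
O\bigl(\tr(\Sigma_{i*}^2)/n+\mu^T\Sigma_*\mu+\Delta_*\theta/n\bigr).
\]
We will bound it by Chebyshev's inequality. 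This gives $P(R<-\Delta_*/2\mid D^{(1)})\to0$ by (C3), using $\mu^T\Sigma_*\mu\le\lambda_{m+1}\Delta_*=o(\Delta_*^2)$ and $n_i\to\infty$ to kill the $\theta/n$ term.

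\textit{Main obstacle.} I expect the hardest step to be justifying that the hard-margin BC-SVM on the projected data reduces to this closed-form rule when $n_i\to\infty$ rather than fixed. I would first try to invoke the result of \citet{nakayama2017} as a black box, applied conditionally on $\hat P$. Failing that, I would fall back to the distance-based representation, namely that all training points become support vectors with equal $\alpha$ within each class. Finally, dominated convergence of the conditional error rates turns the conditional statement into convergence in probability. The second assertion is just Corollary~\ref{cor:1} restated, since Corollary~\ref{cor:1} holds for fixed $n_i$, which is the regime in which it is compared.
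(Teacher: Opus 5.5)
Your outline follows the paper's proof, in a different order. The paper likewise:
(i) passes to the ideal projection $P_*$ and checks \eqref{eq:1.2} for the projected covariances;
(ii) applies the BC-SVM consistency of Nakayama et al.\ (2017) with $\kappa_*$ under (C3);
(iii) replaces $P_*$ by $\hat P$ conditionally on the estimation half;
(iv) obtains selection consistency from (C2) and $\tau_d\to\infty$.
You can drop the kept/removed dichotomy entirely. Under (A3) with $\delta<\infty$ and (S), every spike has $\eta_{is}\to\delta\beta_{is}^2/c_{is}<\infty$. Also $\hat\eta_{is}\le\|\hat\mu\|^2/\tilde\lambda_{is}=O_P(1)$, so with probability tending to one all estimated spikes are removed. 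Neither the subsequence argument nor the plug-in rule via Proposition~\ref{prop:2} is ever needed.

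There is a genuine gap where you compare perturbations with the signal: ``with the NR rates should be negligible against $\Delta_*$'' and ``$n_i\to\infty$ kills the $\theta/n$ term''. The paper's step (iii) makes the same leap when it absorbs an $o_P(\theta)$ perturbation into the remainders of Lemmas~\ref{lem:1}--\ref{lem:3}. The signal is $\Delta_*/2$, and (C3) allows $\Delta_*=o(\theta)$ (take $\delta=0$, or $\mu$ asymptotically inside the spike span). Meanwhile (C2) carries no rate. Killing a $\Delta_*\theta/n$ variance term needs $n_i\Delta_*/\theta\to\infty$, which (C1) and (C3) do not give.

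Nor is the residual of a removed spike harmless. Take a single spike direction $h$ common to both classes, as in (E), but with $b\to\beta\neq0$. Then $\hat P\mu$ has a component along $\hat Ph$. It adds $\tfrac12\Delta b^2\|\hat Ph\|^2$ to the signal and $\sqrt{\lambda\Delta}\,b\,\|\hat Ph\|^2z_0$ to $(\bar x_1-\bar x_2)^T\hat Px_0$. This is a mini-spike with the same bounded signal-to-noise ratio $\sqrt{\eta}$ as the original. If $\Delta_*=o_P(\theta\|\hat Ph\|^2)$ and $\delta>0$, the centroid-type rule you reduce to has conditional error tending to $\Phi(-\sqrt{\delta\beta^2/c}/2)>0$. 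For sample or NR eigenvectors, $\|\hat Ph\|^2\le1-(\hat h^Th)^2$ is of exact order $\tr(\Sigma_{1*})/(n_1\lambda)$. So this regime is compatible with (C1)--(C3), e.g.\ isotropic noise, $n_i\ll\sqrt d$, and $\sqrt d\ll\Delta_*\ll\theta/n_i$.

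Likewise, relative consistency of $\tilde\lambda_{is}$ only gives $\hat\kappa_*-\kappa_*=o_P(\theta/n_i)$, not $o_P(\Delta_*)$. Proposition~\ref{prop:1} is a population statement and says nothing about estimators.

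You therefore need an extra hypothesis, for example $\liminf\Delta_*/\theta>0$, or explicit rates in (C2) together with $n_i\Delta_*/\theta\to\infty$. Under such a condition the perturbation terms are $o_P(\Delta_*)$ and your route goes through. It still relies on the black-box use of the BC-SVM theory with $n_i\to\infty$, which the paper also takes for granted.

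Finally, what you actually use is $\|\hat Ph_{is}\|\to0$, not $\|\hat P-P_*\|=o_P(1)$. The latter is false when $r_{st}=1$, as in (E), because $\hat U$ then has one more column than $P_*$ removes.
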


\begin{proof}
(i) We first consider the ideal procedure in which $\hat P$ is replaced by the true
projection $P_*$. The eigenvalues of $\Sigma_i^P=P_*\Sigma_i P_*$ consist of
$\{\lambda_{is}:s\notin\text{removed}\}$ and the kept spikes. For the part from
which the noise spikes are removed, we have
\[
\frac{\tr\{(\Sigma_i^P)^2\}}{\{\tr(\Sigma_i^P)\}^2}\to 0
\]
from~(S). That is, the projected data meet the condition~\eqref{eq:1.2} of the
non-spiked model.

(ii) Under the non-spiked model, the geometric representation is recovered, so that
the consistency of the BC-SVM given by \citet{nakayama2017} is applicable under
(C3). We use the modified bias term $\kappa_*$ given in Proposition~\ref{prop:1}.

(iii) We replace $P_*$ by $\hat P$. From~(C2) we have $\|\hat P-P_*\|=o_P(1)$.
Since $\hat P$ is independent of the training data and $x_0$ from the sample
splitting, we may argue conditionally. Noting that the spiked coordinates of $x$
are of order $O_P(\sqrt{\theta})$, we have
\[
\|(\hat P-P_*)x\|=o_P(\sqrt{\theta}),
\]
so that the contribution to the inner products is $o_P(\theta)$. Hence it is
absorbed into the remainder terms of Lemmas~\ref{lem:1} to~\ref{lem:3} and the
conclusions of~(i) and~(ii) are preserved.

(iv) From~(C2) and $n_i\to\infty$, the quantity $\hat\eta_{is}$ is relatively
consistent for $\eta_{is}$. Choosing $\tau_d\to\infty$ such that
$\tau_d=o(\eta_{is})$ for the directions to be kept, the selection is correct with
probability tending to one.
\end{proof}

\section{The case that the sample size is fixed}
\label{sec:fixed}

Theorem~\ref{thm:5} requires $n_i\to\infty$. We show that this is not a technical
restriction but is essential.

\begin{proposition}
\label{prop:3}
Assume \textup{(E)} with $c\in(0,1)$ fixed and let $N$ be fixed. Consider any
procedure which uses a projection $\hat P$ of fixed rank estimated from the
training data. Then the angle between $\hat h$ and the true spiked direction $h$
does not tend to zero, that is, $|\hat h^Th|\to\varrho<1$ in probability. Hence
the residual spiked component of the new observation after the projection
satisfies
\[
\bigl\|(I-\hat h\hat h^T)\sqrt{\lambda}\,z_0 h\bigr\|^2
=\lambda z_0^2(1-\varrho^2)+o_P(\theta)=O_P(\theta),
\]
so that the spike cannot be removed by the projection and the projection-based
SC-SVM does not hold the consistency property.
\end{proposition}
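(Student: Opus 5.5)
The plan is to reduce the statement to the limiting configuration of Theorem~\ref{thm:1}, which is available because $N$ is fixed. First I make precise what ``a projection estimated from the training data'' can see. Under (E) the only data-dependent directions in $\R^d$ are those in $\mathrm{span}\{x_{11},x_{21}\}$. For the component of $\hat h$ orthogonal to this span, I argue by invariance. Rotating $h$ by an orthogonal transformation that fixes $\mu$ and the observed span leaves the law of the data unchanged, up to the non-spiked part, which is asymptotically negligible by Lemma~\ref{lem:2}. Hence the orthogonal component of $\hat h$ cannot be correlated with $h$ beyond $o_P(1)$. This reduces the problem to $\hat h=\sum_p a_p u_p/\|\sum_p a_p u_p\|$, with coefficients $a=a^{(d)}$ that are measurable functions of the data. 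I expect this reduction to be the main obstacle. If the fully general invariance argument becomes too delicate, I will instead state explicitly that the procedures considered are span-based, as PCA and NR-type estimators are, and treat only that case.

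Next I compute the limit of $\hat h^Th$ using the earlier lemmas. By Lemma~\ref{lem:1} and Lemma~\ref{lem:3} with $\beta_{11}=\beta_{21}=0$, we have $h^Tu_p\to\sqrt{c}\,z_p=:v_p$. By Theorem~\ref{thm:1}, $G\wto G^*=vv^T+\frac{\delta}{4}\eps\eps^T+(1-c)I_2$. Therefore
\[
(\hat h^Th)^2=\frac{(a^T h^TU)^2}{a^TGa}+o_P(1)\le\sup_{a\neq 0}\frac{(a^Tv)^2}{a^TG^*a}+o_P(1)=v^T(G^*)^{-1}v+o_P(1),
\]
where $U$ is the matrix with columns $u_p$. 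The supremum is evaluated by the Cauchy--Schwarz inequality in the $G^*$ inner product. Since $G^*\succeq vv^T+(1-c)I_2$ and $1-c>0$, we get
\[
v^T(G^*)^{-1}v\le\frac{\|v\|^2}{\|v\|^2+1-c}<1
\]
for every realization. This gives $\varrho^2<1$, or more precisely a random limit bounded by $\varrho^2\le c\|z\|^2/(c\|z\|^2+1-c)<1$ almost surely; I would phrase ``$\to\varrho$'' as convergence in distribution along subsequences. As Remark~\ref{rem:2} already indicates, the non-spiked noise $D$ is what prevents alignment.

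Then the residual identity follows. Since $z_0$ is independent of the training data, $\|(I-\hat h\hat h^T)\sqrt\lambda z_0h\|^2=\lambda z_0^2(1-(\hat h^Th)^2)$ holds exactly, and this equals $\lambda z_0^2(1-\varrho^2)+o_P(\theta)$. Finally, to conclude inconsistency, I redo the computation in the proof of Theorem~\ref{thm:4} with the projected data $\hat Px$. The residual spike contributes to $u_p^T\hat P u_0$ a term $c\,z_pz_0(1-(\hat h^Th)^2)+o_P(1)$, so the limiting discriminant still has the form $\delta/2+A'(z_0-\bar z)$ with $A'\neq0$ almost surely. Corollary~\ref{cor:1}, whose proof uses Fatou's lemma, then gives $\liminf E\{e(1)+e(2)\}>0$.
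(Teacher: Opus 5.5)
Your angle bound is correct, and it takes a different, more explicit route than the paper. The paper only appeals to the subspace-inconsistency regime of Jung and Marron (2009) for the sample PC direction, plus a heuristic signal-to-noise argument. You instead use that $U^Th=v$ holds exactly in (E), which gives, for every direction in the span of the training data,
$(\hat h^Th)^2\le v^TG^{-1}v=\|P_Uh\|^2\wto v^T(G^*)^{-1}v\le c\|z\|^2/(c\|z\|^2+1-c)<1$.
Here $P_U$ is the orthogonal projection onto $\mathrm{span}\{u_p\}$; writing it this way is better than mixing $G^*$ and $o_P(1)$ in one inequality. You also rightly observe that the limit is random, not a constant $\varrho$. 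Restricting to span-based procedures is not merely a fallback but necessary. For a fixed model the statement fails for arbitrary measurable procedures: the constant ``estimator'' $\hat h=h$ is one. Your invariance sketch also does not hold as stated, because rotating $h$ changes the law of the data, and a rotation fixing the random observed span is not a symmetry of the model.

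The genuine gap is the final step. The coefficient of $z_0$ in $u_p^T\hat Pu_0$ is exactly $\sqrt{c}\,h^T\hat Pu_p=\sqrt{c}\,\{\sqrt{c}\,z_p-(\hat h^Th)(\hat h^Tu_p)\}$. Your formula $c\,z_pz_0\{1-(\hat h^Th)^2\}$ silently replaces $\hat h^Tu_p$ by $\sqrt{c}\,z_p\,\hat h^Th$, i.e.\ it treats $\hat h$ as orthogonal to the mean and noise parts of $u_p$. But $\hat h=\sum_qa_qu_q/\sqrt{a^TGa}$ is built from these very points. Indeed $(\hat h^Tu_p)_{p\in\cI}=Ga/\sqrt{a^TGa}$ and $\hat h^T\xi_p/\sqrt{\theta}=(1-c)a_p/\sqrt{a^TGa}+o_P(1)$, and this correlation with the training noise is exactly what keeps $\hat h$ away from $h$. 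The $z_0$-coefficient of the projected discriminant is $\sqrt{c}\,\hat w^Th$, where $\hat w$ is the SVM weight vector for the projected data, and $\|\hat Ph\|>0$ does not make it non-zero. Take $a=G^{-1}U^Th$, i.e.\ $\hat h=P_Uh/\|P_Uh\|$, the direction attaining your own Cauchy--Schwarz bound. Then $\hat Ph=h-P_Uh$ is orthogonal to every $u_p$, so $h^T\hat Pu_p=0$ for all $p$ and $A'\equiv 0$. The residual spike of $x_0$, although its squared norm is of order $\theta$, is invisible to the classifier. Since $G^*-\frac{\delta}{4}\eps\eps^T-(1-c)I_2=vv^T$ in (E), this projection is even asymptotically a function of $G$ and the constants $\delta,c$, so restricting to equivariant procedures does not rescue the claim. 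For this procedure one can check that $e(1)+e(2)\to 1$ on the event $\frac{\delta}{2}|z_1+z_2|<(1-c)|z_1-z_2|$. So inconsistency does hold here, but through the noise geometry rather than the mechanism you describe.

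There is a further problem. The projected Gram matrix $G-Gaa^TG/(a^TGa)$ annihilates $a$, so it is singular. Theorem~\ref{thm:2}, Lemma~\ref{lem:5} and Corollary~\ref{cor:1} all rest on $G^*\succ O$ and therefore cannot be invoked. For the pooled centred PC direction $\hat h\propto u_{11}-u_{21}$, the two projected training points even coincide and the hard-margin problem is infeasible. The paper's outline passes over the same step. A proof must analyse the projected problem directly and show that $\hat w^Th$, or some other source of randomness, remains non-degenerate.
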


\begin{proof}[Outline of proof]
The region $\lambda/\tr(\Sigma_*)\to c/(1-c)\in(0,\infty)$ corresponds to the
subspace inconsistency region in the HDLSS PCA theory of \citet{jung2009}, in
which the sample principal component direction is not consistent and the angle has
a non-degenerate limit. Intuitively, the information about the spiked direction
contained in the sample consists of the $N$ coefficients $\{z_{ijs}\}$ only, on
which the isotropic noise of order $\sqrt{\tr(\Sigma_*)}$ is superposed, so that
the signal-to-noise ratio remains a constant when $N$ is fixed. Since the
estimation error of $\hat h$ remains at a constant proportion, the spiked
component $\sqrt{\lambda}\,z_0 h$ of $x_0$, whose magnitude is of order
$\sqrt{\theta}$, also remains at a constant proportion after the projection. Then
the argument of Theorem~\ref{thm:3} is applicable and the misclassification rate
does not tend to zero.
\end{proof}

\begin{remark}
\label{rem:7}
Proposition~\ref{prop:3} is a statement about the class of projection-based
procedures. We have not proved a minimax type lower bound which states that any
classification rule based on the training data does not hold the consistency
property when $N$ is fixed and $\delta<\infty$. We expect that it is reduced to the
positivity of the Bayes risk in the limiting experiment given in
Theorem~\ref{thm:1}, but it remains open. On the other hand, when $\delta\to\infty$,
we can see from~\eqref{eq:7.2} that the consistency property is recovered even for
fixed $N$.
\end{remark}

\section{Numerical simulations}
\label{sec:sim}

In this section, we check the performance of the classifiers and the validity of
the theoretical results by numerical simulations. We give five experiments. In
Section~\ref{sec:sim-thm4}, we verify Theorem~\ref{thm:4} in the setting~(E). In
Section~\ref{sec:sim-gram}, we verify the convergence of the Gram matrix given in
Theorem~\ref{thm:1}. In Section~\ref{sec:sim-inconsist}, we check the inconsistency
given in Corollary~\ref{cor:1}. In Section~\ref{sec:sim-bias}, we check the bias
term given in Proposition~\ref{prop:1}. In Section~\ref{sec:sim-classifiers}, we
compare the classifiers including the SC-SVM.

Throughout this section, the populations are Gaussian and we take the coordinates
so that the first coordinate is the direction of $\mu$, the next $m_i$ coordinates
are the spiked directions, which are orthogonal to $\mu$, and the remaining
coordinates give the isotropic non-spiked part. That is,
\begin{equation}
\Sigma_i=\diag(\sigma_i^2,\lambda_{i1},\dots,\lambda_{im_i},\sigma_i^2,\dots,\sigma_i^2),
\quad
\lambda_{is}=c_{is}\theta,
\quad
\theta=d,
\label{eq:11.1}
\end{equation}
where $\sigma_i^2$ is determined by $\tr(\Sigma_{i*})=\theta\,c_{i0}$. Since
$\Sigma_i$ is diagonal, we can compute the conditional misclassification rates
exactly for each realization of the training data, so that we do not need a test
sample. We give the standard errors of the averages over the replications.
The code for each experiment is distributed with this manuscript as a separate
Python script (\texttt{spiked\_svm\_sim.py}, \texttt{spiked\_svm\_gram.py},
\texttt{spiked\_svm\_inconsistency.py}, \texttt{spiked\_svm\_bias.py},
\texttt{spiked\_svm\_classifiers.py}), together with the shared utilities in
\texttt{spiked\_svm\_common.py}.

\subsection{Verification of Theorem~\ref{thm:4}}
\label{sec:sim-thm4}

We first consider the setting~(E) with $d=20000$, in which the limiting
misclassification rate is given by~\eqref{eq:7.2} in a closed form. We considered
$c\in\{0.1,0.3,0.5,0.8\}$ and $\delta\in\{0,1,4\}$, and took the average over
$600$ replications. We computed the limiting value~\eqref{eq:7.2} by the Monte
Carlo integration with $2\times 10^5$ points. The implementation is given in
\texttt{spiked\_svm\_sim.py}. We give the results in
Table~\ref{tab:2}.

\begin{table}[ht]
\centering
\caption{The limiting misclassification rate~\eqref{eq:7.2} and the simulated
$E\{e(1)\}$ for $d=20000$ in the setting~(E).}
\label{tab:2}
\begin{tabular}{@{}rrrrr@{}}
\toprule
$c$ & $\delta$ & limit~\eqref{eq:7.2} & simulation & difference \\
\midrule
0.1 & 0.0 & 0.5009 & 0.4993 & 0.0016 \\
0.1 & 1.0 & 0.0112 & 0.0122 & 0.0010 \\
0.1 & 4.0 & 0.0000 & 0.0000 & 0.0000 \\
0.3 & 0.0 & 0.5009 & 0.4998 & 0.0011 \\
0.3 & 1.0 & 0.1093 & 0.1091 & 0.0002 \\
0.3 & 4.0 & 0.0039 & 0.0047 & 0.0008 \\
0.5 & 0.0 & 0.5009 & 0.4997 & 0.0012 \\
0.5 & 1.0 & 0.1833 & 0.1832 & 0.0001 \\
0.5 & 4.0 & 0.0215 & 0.0225 & 0.0010 \\
0.8 & 0.0 & 0.5009 & 0.4997 & 0.0012 \\
0.8 & 1.0 & 0.2517 & 0.2519 & 0.0002 \\
0.8 & 4.0 & 0.0598 & 0.0602 & 0.0004 \\
\bottomrule
\end{tabular}
\end{table}

We observe that the differences are smaller than $0.002$ for all the cases, which
are within the Monte Carlo error whose standard error is about $0.006$ for $600$
replications. Hence Theorem~\ref{thm:4} is confirmed numerically. When
$\delta=0$, the misclassification rate is about $0.5$ for all $c$, so that the
classification is completely random. When $\delta$ is fixed, the misclassification
rate increases monotonically as $c$ increases. We emphasize that the spike is
orthogonal to $\mu$ in this setting, so that it carries no information about the
classification. Nevertheless it degrades the performance of the SVM severely.

\subsection{Convergence of the Gram matrix}
\label{sec:sim-gram}

Next, we check Theorem~\ref{thm:1} directly. We considered $n_1=n_2=2$, $m_i=1$,
$c=0.5$ and $\delta=1$, and took $d$ from $10^2$ to $10^6$. In each replication,
we first generated the spiked coordinates $\{z_{ij1}\}$ and fixed them, and then
generated the non-spiked part for the given $d$. We computed the limiting Gram
matrix $G^*$ from~\eqref{eq:4.3} using the same $\{z_{ij1}\}$. Since the spike is
orthogonal to $\mu$, we have $\beta_{is}=0$ and $r_{11}=1$, so that
\[
g^*_{(ij),(kl)}
=\frac{\eps_i\eps_k\delta}{4}+c\,z_{ij1}z_{kl1},
\qquad
G^*=G_0^*+(1-c)I_N.
\]
We measured $\|G-G^*\|_F$ and took the average over $100$ replications. The
implementation is given in \texttt{spiked\_svm\_gram.py}. We give
the results in Table~\ref{tab:gram}.

\begin{table}[ht]
\centering
\caption{The Frobenius norm $\|G-G^*\|_F$ and the ratio of the successive values.}
\label{tab:gram}
\begin{tabular}{@{}rrrr@{}}
\toprule
$d$ & $\|G-G^*\|_F$ & s.e. & ratio to previous \\
\midrule
100 & 0.29956 & 0.00783 & -- \\
1{,}000 & 0.09099 & 0.00245 & 3.292 \\
10{,}000 & 0.03138 & 0.00077 & 2.900 \\
100{,}000 & 0.01039 & 0.00027 & 3.021 \\
1{,}000{,}000 & 0.00299 & 0.00007 & 3.471 \\
\bottomrule
\end{tabular}
\end{table}

We observe that $\|G-G^*\|_F$ tends to zero. Since the dimension is multiplied by
$10$ in each row, the ratio should be $\sqrt{10}\approx 3.162$ if the error is of
order $d^{-1/2}$. The observed ratios are $3.292$, $2.900$, $3.021$ and $3.471$,
which are close to $\sqrt{10}$. Hence Theorem~\ref{thm:1} is confirmed, and
moreover the convergence rate is of order $d^{-1/2}$ in this setting. We note that
the rate itself is not claimed in Theorem~\ref{thm:1}, which gives the convergence
only.

\subsection{The inconsistency}
\label{sec:sim-inconsist}

We check Corollary~\ref{cor:1}. We considered $n_1=n_2=2$, $m_i=1$ and
$\delta=0.25$, and took $c\in\{0.8,0.5,0.2\}$ together with the non-spiked case in
which $c=10^{-4}$. We took the average of $e(1)+e(2)$ over $200$ replications for
$d\in\{250,1000,4000,16000,64000\}$. The implementation is given in
\texttt{spiked\_svm\_inconsistency.py}. We give the results in
Figure~\ref{fig:inconsist}.

\begin{figure}[ht]
\centering
\includegraphics[width=0.72\textwidth]{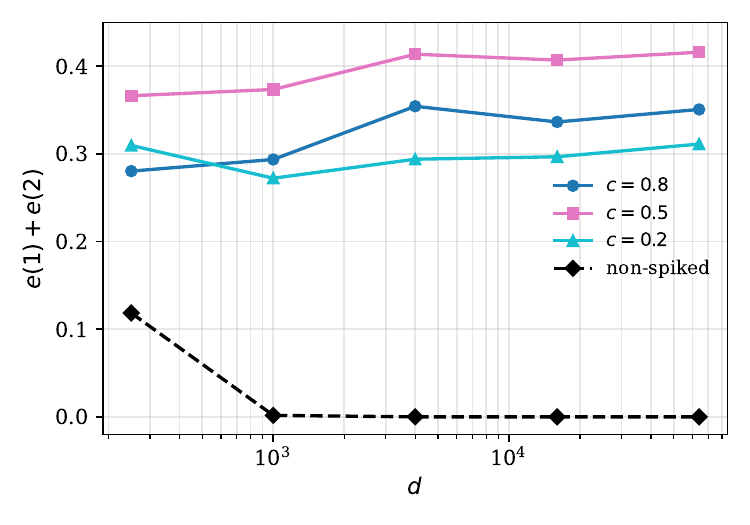}
\caption{Average of $e(1)+e(2)$ for the hard-margin SVM vs.\ $d$
($n_1=n_2=2$, $\delta=0.25$).}
\label{fig:inconsist}
\end{figure}

In the non-spiked case, the misclassification rate tends to zero rapidly as $d$
increases, which is the consistency property of the existing theory. On the other
hand, under the spiked model, the misclassification rate does not decrease at all
even though $d$ is multiplied by $256$. Hence Corollary~\ref{cor:1} is confirmed.

We also observe that the misclassification rate is not monotone in $c$, since the
value for $c=0.5$ is larger than the one for $c=0.8$. We note that the
monotonicity was observed in Table~\ref{tab:2}, in which $n_1=n_2=1$. Hence the
monotonicity does not hold in general when $n_i\ge 2$.

We give a supplementary result in order to clarify the practical meaning of
Corollary~\ref{cor:1}. We took $d=4000$ and $\delta=0.25$, and varied
$n_1=n_2=n$. We give the results in Table~\ref{tab:n-vary}.

\begin{table}[ht]
\centering
\caption{The average of $e(1)+e(2)$ for $d=4{,}000$ and $\delta=0.25$.}
\label{tab:n-vary}
\begin{tabular}{@{}rrrr@{}}
\toprule
$n$ & $c=0.8$ & $c=0.5$ & non-spiked \\
\midrule
2 & 0.2684 & 0.3467 & 0.0000 \\
3 & 0.0771 & 0.1589 & 0.0000 \\
5 & 0.0154 & 0.0424 & 0.0000 \\
10 & 0.0000 & 0.0003 & 0.0000 \\
\bottomrule
\end{tabular}
\end{table}

Although Corollary~\ref{cor:1} states that the limit is positive for any fixed $n$,
the value decreases rapidly as $n$ increases and it is already smaller than
$10^{-3}$ for $n=10$ in this setting. Hence the practical effect of the
inconsistency is restricted to the case that the sample size is very small. We
note that the setting of Table~\ref{tab:n-vary} is favourable in the sense that
$\delta$ is not small. We give a harder setting in
Section~\ref{sec:sim-classifiers}.

\subsection{The bias term}
\label{sec:sim-bias}

We check Proposition~\ref{prop:1}. In the $u$-scale, the midpoint of $\mu_1$ and
$\mu_2$ is the origin, so that the intercept $b$ is exactly the value of the
discriminant function at the midpoint, that is, the deterministic offset. We
measured $b$ and compared it with the predictions given by $\kappa$ and by
$\kappa_*$.

From the outline of the proof of Proposition~\ref{prop:1}, the offset is of the
form $-A\kappa_*/(2\theta)$. Writing $A=\sum_j\alpha_{1j}=\sum_j\alpha_{2j}$,
which holds from the constraint $y^T\alpha=0$, and noting that the dual weights are
equalized within each class, the prediction is
\begin{equation}
b\approx -\frac{A}{2}\cdot\frac{\kappa_*}{\theta}
\quad\text{instead of}\quad
b\approx -\frac{A}{2}\cdot\frac{\kappa}{\theta}.
\label{eq:11.2}
\end{equation}
We note that $A$ is observable, so that~\eqref{eq:11.2} gives a quantitative
prediction without any unknown constant. In order to distinguish the two
predictions, we chose the configurations in which $\kappa$ and $\kappa_*$ disagree
qualitatively. Since $\theta_1=\theta_2=\theta$ in our setting, we have
\[
\frac{\kappa}{\theta}=\frac{1}{n_1}-\frac{1}{n_2},
\qquad
\frac{\kappa_*}{\theta}=\frac{1-c_1}{n_1}-\frac{1-c_2}{n_2},
\]
so that we can make $\kappa=0$ while $\kappa_*\neq 0$ by taking $n_1=n_2$ and
$c_1\neq c_2$, and we can make $\kappa_*=0$ while $\kappa\neq 0$ by a suitable
choice. We took $d=8000$, $\delta=1$ and $300$ replications. The implementation
is given in \texttt{spiked\_svm\_bias.py}. We give the results
in Table~\ref{tab:bias}.

\begin{table}[ht]
\centering
\caption{The measured intercept $b$ and the two predictions in~\eqref{eq:11.2}.}
\label{tab:bias}
\setlength{\tabcolsep}{3pt}
\begin{tabular}{@{}rrrrrrrrrr@{}}
\toprule
$n_1$ & $n_2$ & $c_1$ & $c_2$ & $\kappa/\theta$ & $\kappa_*/\theta$
  & $b$ (s.e.) & $A$ & pred.\ $\kappa$ & pred.\ $\kappa_*$ \\
\midrule
5 & 5 & 0.2 & 0.8 & 0.0000 & 0.1200 & $-0.0899$ (0.0017) & 1.644 & 0.0000 & $-0.0987$ \\
4 & 4 & 0.1 & 0.7 & 0.0000 & 0.1500 & $-0.0921$ (0.0029) & 1.492 & 0.0000 & $-0.1119$ \\
10 & 5 & 0.4 & 0.7 & $-0.1000$ & 0.0000 & $0.0071$ (0.0011) & 1.766 & 0.0883 & 0.0000 \\
8 & 4 & 0.3 & 0.65 & $-0.1250$ & 0.0000 & $0.0103$ (0.0015) & 1.669 & 0.1043 & 0.0000 \\
8 & 4 & 0.5 & 0.5 & $-0.1250$ & $-0.0625$ & $0.0557$ (0.0012) & 1.657 & 0.1035 & 0.0518 \\
\bottomrule
\end{tabular}
\end{table}

We observe the following. In the first two rows, we have $\kappa=0$, so that the
prediction by $\kappa$ is zero. The measured values are $-0.0899$ and $-0.0921$,
which are far from zero compared with the standard errors, while the predictions
by $\kappa_*$ are $-0.0987$ and $-0.1119$. In the third and the fourth rows, we
have $\kappa_*=0$, and the measured values are $0.0071$ and $0.0103$, which are
much smaller than the predictions by $\kappa$, that is, $0.0883$ and $0.1043$. In
the last row, in which both are non-zero, the measured value $0.0557$ is close to
the prediction by $\kappa_*$, that is, $0.0518$, while the prediction by $\kappa$
is $0.1035$, which is about twice as large.

Hence Proposition~\ref{prop:1} is confirmed. The remaining discrepancy, which is
at most about $0.02$ in the first two rows and which is not zero in the third and
the fourth rows, is considered to come from the finite-dimensional effect and from
the fact that the dual weights are not exactly equalized within each class. We
note that the qualitative conclusion is not affected by this discrepancy, because
the two predictions differ in sign or in magnitude by a factor of two.

\subsection{Comparison of the classifiers}
\label{sec:sim-classifiers}

Finally, we compare the classifiers. We consider the following procedures. For
the bias correction, we use the form suggested by the outline of the proof of
Proposition~\ref{prop:1}, that is, we shift the intercept by
\begin{equation}
b\;\longmapsto\;
b+\frac12\bigl(\bar\alpha_1\,\hat c_{10}-\bar\alpha_2\,\hat c_{20}\bigr),
\label{eq:11.3}
\end{equation}
where $\bar\alpha_i$ is the average of the dual weights of class $i$ and
$\hat c_{i0}$ is an estimator of the noise floor.

\begin{itemize}
\item SVM: the hard-margin SVM without any correction.
\item BC-SVM($\kappa$): the correction~\eqref{eq:11.3} with
$\hat c_{i0}=\tr(S_i)/\hat\theta$, where $S_i$ is the sample covariance matrix.
This uses the full trace and corresponds to the existing BC-SVM.
\item BC-SVM($\kappa_*$): the correction~\eqref{eq:11.3} with
$\hat c_{i0}=\{\tr(S_i)-\sum_{s\le m_i}\tilde\lambda_{is}\}/\hat\theta$, where
$\tilde\lambda_{is}$ is the NR estimator of the spiked eigenvalue. This
corresponds to Proposition~\ref{prop:1}.
\item SC-SVM: the procedure given in Definition~\ref{def:2}. We give two versions.
In the version with the sample splitting, which is the one assumed in
Theorem~\ref{thm:5}, we use one half of the observations for the NR estimation and
the other half for the training. In the practical version, we use all the
observations for both steps.
\end{itemize}

We assumed that the number of the spikes is known and equal to one. We took
$d=20000$, $n_1=12$, $n_2=6$, $\delta=0.05$ and $100$ replications. We note that
the sample sizes are imbalanced, so that the bias correction is meaningful. The
implementation is given in \texttt{spiked\_svm\_classifiers.py}. We
give the results in Table~\ref{tab:compare}.

\begin{table}[ht]
\centering
\caption{The average of $e(1)+e(2)$ for $d=20{,}000$, $n_1=12$, $n_2=6$
and $\delta=0.05$. Standard errors in parentheses.}
\label{tab:compare}
\setlength{\tabcolsep}{3pt}
\begin{tabular}{@{}lrrrrr@{}}
\toprule
 & SVM & BC-SVM($\kappa$) & BC-SVM($\kappa_*$) & SC-SVM & SC-SVM (split) \\
\midrule
non-spiked
  & 1.0000 (0.0000) & 0.0000 (0.0000) & 0.0000 (0.0000)
  & 0.0015 (0.0001) & 0.0000 (0.0000) \\
$c=0.2$
  & 0.7887 (0.0135) & 0.2504 (0.0255) & 0.1600 (0.0200)
  & 0.1973 (0.0206) & 0.3684 (0.0279) \\
$c=0.5$
  & 0.3137 (0.0191) & 0.4071 (0.0383) & 0.1062 (0.0175)
  & 0.1185 (0.0171) & 0.2858 (0.0273) \\
$c=0.8$
  & 0.0347 (0.0099) & 0.4946 (0.0454) & 0.0220 (0.0082)
  & 0.0170 (0.0055) & 0.0710 (0.0131) \\
\bottomrule
\end{tabular}
\end{table}

We observe the following. In the non-spiked case, the SVM gives
$e(1)+e(2)=1.0000$, which is the strong inconsistency caused by the imbalance of
the sample sizes, and the bias correction removes it completely. This reproduces
the result of \citet{nakayama2017}. Under the spiked model, on the other hand,
BC-SVM($\kappa$) becomes worse as $c$ increases, and for $c=0.8$ it is much worse
than the SVM without any correction. This is the overcorrection given in
Proposition~\ref{prop:1}. In contrast, BC-SVM($\kappa_*$) gives a good performance
for all $c$.

In order to see the dependence on the sample size, we took $d=20000$, $c=0.8$ and
$\delta=0.01$, which is a harder setting, and varied $(n_1,n_2)$. We give the
results in Figure~\ref{fig:n-scale}.

\begin{figure}[ht]
\centering
\includegraphics[width=0.72\textwidth]{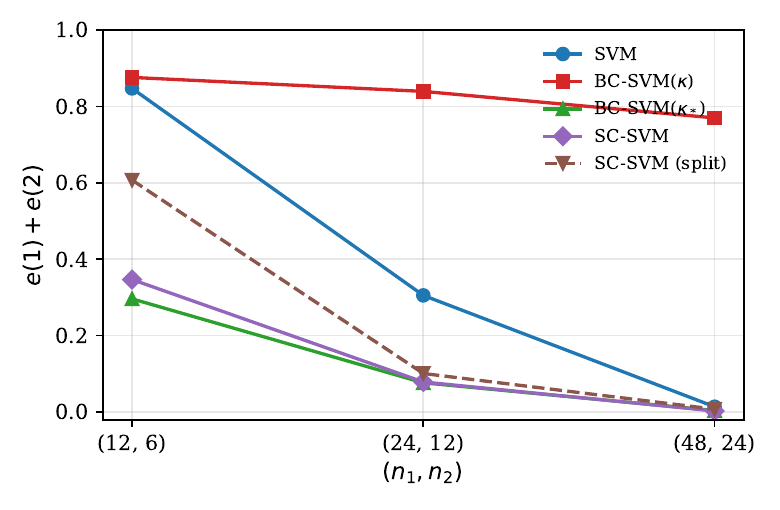}
\caption{Average of $e(1)+e(2)$ for $d=20{,}000$, $c=0.8$, $\delta=0.01$, as
$(n_1,n_2)$ grows.}
\label{fig:n-scale}
\end{figure}

We emphasize the behaviour of BC-SVM($\kappa$). Its misclassification rate is
$0.8759$, $0.8393$ and $0.7699$ at $(n_1,n_2)=(12,6)$, $(24,12)$ and $(48,24)$,
respectively, that is, it does not improve even though the sample size is
multiplied by four. This is because the overcorrection given in
Proposition~\ref{prop:1} is a systematic bias which does not vanish as the sample
size increases. On the other hand, BC-SVM($\kappa_*$) and the SC-SVM improve
rapidly. Hence Proposition~\ref{prop:1} is essential in practice.

We give the following remarks, which we consider to be informative for the users.

\begin{enumerate}[label=(\roman*)]
\item The SC-SVM and BC-SVM($\kappa_*$) give similar performances in
Table~\ref{tab:compare} and in Figure~\ref{fig:n-scale}, and the SC-SVM is better
only for $c=0.8$ in Table~\ref{tab:compare} and for the largest sample size in
Figure~\ref{fig:n-scale}. This is because the spike is orthogonal to $\mu$ in our
setting, so that the main damage caused by the spike comes through the bias term,
which BC-SVM($\kappa_*$) already removes. We expect that the advantage of the
SC-SVM is larger when the spike is not orthogonal to $\mu$ and the projection
removes a genuine source of noise in the discriminant direction.

\item The version with the sample splitting is worse than the practical version in
all the cases. This is because the splitting halves the number of the observations
used for the training, which is expensive when the sample size is small. We use
the splitting in Theorem~\ref{thm:5} in order to make the projected data independent
of the training data. We recommend the practical version in applications.

\item The gap between the two versions of the SC-SVM decreases as the sample size
increases, that is, it is $0.2603$, $0.0222$ and $0.0048$ at
$(n_1,n_2)=(12,6)$, $(24,12)$ and $(48,24)$, respectively. The gap is consistent
with Proposition~\ref{prop:3} in the sense that the estimation of the spiked
direction is poor when the sample size is small, so that the projection does not
work well.
\end{enumerate}

\section{Concluding remarks}
\label{sec:conclude}

In this paper, we considered asymptotic properties of the SVM in HDLSS settings
under a spiked model. We showed that the geometric representation of HDLSS data
does not hold under the spiked model, and that the HDLSS data converge to a random
configuration in a finite-dimensional space. We showed that the SVM does not hold
the consistency property, and that the bias term of the BC-SVM should be modified.
In order to overcome such difficulties, we proposed the SC-SVM and showed that it
holds the consistency property when the sample size goes to infinity. We also
showed that the growth of the sample size is essential for the projection-based
procedures. Finally, we verified Theorems~\ref{thm:1} and~\ref{thm:4},
Corollary~\ref{cor:1} and Proposition~\ref{prop:1} numerically, and we compared the
classifiers.

We give the following comparison between the existing theory and the results in
this paper.

\begin{table}[ht]
\centering
\caption{Comparison between the non-spiked model and the spiked model.}
\label{tab:model-compare}
\begin{tabular}{@{}p{3.2cm}p{5.2cm}p{5.2cm}@{}}
\toprule
Feature & non-spiked model & spiked model \\
\midrule
eigenvalue condition
& $\tr(\Sigma_i^2)/\theta_i^2\to 0$
& $\lambda_{is}/\theta\to c_{is}\in(0,1)$ \\
data configuration
& deterministic regular simplex
& random configuration in $\R^{1+m_1+m_2}$ \\
Gram matrix
& deterministic limit
& random $G^*$ \\
positive definiteness
& from the geometric representation
& only from the diagonal $D$ \\
SVM solution
& deterministic, closed form
& random limit \\
noise of $x_0$
& contributes as a diagonal term
& does not contribute \\
misclassification rate
& tends to zero by the BC-SVM
& does not tend to zero \\
bias term
& $\kappa$ in~\eqref{eq:1.3}
& $\kappa_*$ in Proposition~\ref{prop:1} \\
remedy
& bias correction
& spike removal and bias correction \\
required sample size
& $N$ may be fixed
& $n_i\to\infty$ is essential \\
\bottomrule
\end{tabular}
\end{table}

We close the paper with some remaining problems.

\begin{enumerate}[label=(\roman*)]
\item The theory of the SVM with the Gaussian kernel given by \citet{nakayama2020}
relies on the fact that the argument of the kernel function becomes deterministic
in the limit. From Theorem~\ref{thm:1}, the argument is random under the spiked
model, so that the kernel matrix itself has a random limit. It is a natural
problem to investigate how the choice of the scale parameter should be modified.

\item As we remarked in Remark~\ref{rem:7}, a minimax type lower bound for fixed
$N$ remains open.

\item The SC-SVM uses the NR methodology as a preprocessing. It is an interesting
problem to construct a framework in which the projection and the separating
hyperplane are estimated simultaneously. We also note from
Section~\ref{sec:sim-classifiers} that the sample splitting assumed in
Theorem~\ref{thm:5} is expensive in practice, so that a theory without the
splitting is desirable.

\item We assumed~(S), in which the spiked eigenvalues are of the same order as
$\theta$. On the other hand, there is an intermediate region in which
$\lambda_{i1}/\theta\to 0$ while $\lambda_{i1}^2/\tr(\Sigma_i^2)$ does not tend to
zero. In this region, the geometric representation is preserved but the second
order asymptotics is different, so that we expect that the consistency property is
preserved while the convergence rate is changed.

\item We considered the hard-margin SVM. When a regularization parameter $C$ is
introduced, the feasible set of the dual problem becomes
$\{\alpha:0\le\alpha_p\le C\}$ and the a priori bound in Lemma~\ref{lem:5} becomes
easier, so that the arguments in Section~\ref{sec:svm} carry over. On the other
hand, we expect that the limit depends on the relative scale of $C$ and $\theta$,
and the classification of the cases is not yet organized.

\item In the simulations we took the spiked direction orthogonal to $\mu$. As we
noted in Section~\ref{sec:sim-classifiers}, the advantage of the SC-SVM over the
bias correction alone is expected to be larger when the spike is not orthogonal to
$\mu$. A systematic study of this case, together with an application to actual
data, is left for future work.
\end{enumerate}

\appendix
\section{Mathematical tools}
\label{app:A}

In this appendix, we summarize the two tools used in Sections~\ref{sec:svm}
and~\ref{sec:disc}. Both of them are standard, but we give their intuitive meanings
and explain why they are necessary in this paper.

\subsection{The maximum theorem of Berge}
\label{app:A.1}

\begin{theorem}[Berge, 1963]
\label{thm:A.1}
Let $\Theta$ and $X$ be metric spaces, let $C:\Theta\rightrightarrows X$ be a
continuous correspondence with non-empty compact values, and let
$F:X\times\Theta\to\R$ be a continuous function. Put
\[
v(\vartheta)=\sup_{x\in C(\vartheta)}F(x,\vartheta),\qquad
M(\vartheta)=\arg\max_{x\in C(\vartheta)}F(x,\vartheta).
\]
Then $v$ is continuous and $M$ is upper hemicontinuous with non-empty compact
values. The upper hemicontinuity is written as
\begin{equation}
\vartheta_n\to\vartheta,\quad x_n\in M(\vartheta_n),\quad x_n\to x
\implies x\in M(\vartheta).
\label{eq:A.1}
\end{equation}
\end{theorem}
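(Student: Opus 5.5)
We prove Theorem~\ref{thm:A.1} in two stages: first the upper hemicontinuity of $M$ in the sequential form \eqref{eq:A.1}, together with non-emptiness and compactness of its values; then the continuity of $v$. Since $\Theta$ and $X$ are metric spaces, everything can be handled with sequences. We use the sequential characterizations of continuity of a compact-valued correspondence $C$:
\begin{itemize}
\item \emph{lower hemicontinuity} means that for $\vartheta_n\to\vartheta$ and $x\in C(\vartheta)$ there exist $x_n\in C(\vartheta_n)$ with $x_n\to x$;
\item \emph{upper hemicontinuity} means that for $\vartheta_n\to\vartheta$ and $x_n\in C(\vartheta_n)$ there is a subsequence converging to some point of $C(\vartheta)$.
\end{itemize}

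\emph{Values of $M$.} Fix $\vartheta$. Since $F(\cdot,\vartheta)$ is continuous on the non-empty compact set $C(\vartheta)$, the Weierstrass theorem shows that the supremum is attained. Hence $M(\vartheta)\neq\emptyset$ and $v(\vartheta)$ is finite. Moreover $M(\vartheta)=\{x\in C(\vartheta):F(x,\vartheta)\ge v(\vartheta)\}$ is a closed subset of a compact set, so it is compact.

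\emph{Closed-graph property \eqref{eq:A.1}.} Let $\vartheta_n\to\vartheta$, $x_n\in M(\vartheta_n)$ and $x_n\to x$.
\begin{enumerate}
\item By upper hemicontinuity of $C$, some subsequence of $(x_n)$ converges to a point of $C(\vartheta)$. Since the whole sequence converges to $x$, we get $x\in C(\vartheta)$.
\item Take any $x'\in C(\vartheta)$. Lower hemicontinuity gives $x_n'\in C(\vartheta_n)$ with $x_n'\to x'$.
\item Optimality of $x_n$ gives $F(x_n,\vartheta_n)\ge F(x_n',\vartheta_n)$ for every $n$.
\item Joint continuity of $F$ lets us pass to the limit, giving $F(x,\vartheta)\ge F(x',\vartheta)$.
\end{enumerate}
Since $x'$ was arbitrary, $x\in M(\vartheta)$.

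The full upper hemicontinuity of $M$ also needs the subsequence property. Given $x_n\in M(\vartheta_n)\subset C(\vartheta_n)$, upper hemicontinuity of $C$ produces a convergent subsequence with limit in $C(\vartheta)$. The closed-graph argument above, applied along that subsequence, then places the limit in $M(\vartheta)$.

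\emph{Continuity of $v$.} Let $\vartheta_n\to\vartheta$ and take $x_n\in M(\vartheta_n)$, so that $v(\vartheta_n)=F(x_n,\vartheta_n)$.
\begin{itemize}
\item \emph{Lower bound.} Pick $x^*\in M(\vartheta)$. Lower hemicontinuity gives $x_n'\in C(\vartheta_n)$ with $x_n'\to x^*$. Then
\[
\liminf_n v(\vartheta_n)\ge\liminf_n F(x_n',\vartheta_n)=F(x^*,\vartheta)=v(\vartheta).
\]
\item \emph{Upper bound.} Choose a subsequence along which $v(\vartheta_n)$ tends to $\limsup_n v(\vartheta_n)$. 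By the previous step, a further subsequence has $x_n\to x\in M(\vartheta)$. Along it,
\[
F(x_n,\vartheta_n)\to F(x,\vartheta)=v(\vartheta),
\]
so $\limsup_n v(\vartheta_n)\le v(\vartheta)$.
\end{itemize}
Together these give $v(\vartheta_n)\to v(\vartheta)$.

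The main point needing care is the upper hemicontinuity of $C$. It is what supplies a convergent subsequence of maximizers, and it is used both in the upper bound for $v$ and in compactness of the limit. Here $C$ must be compact-valued and $X$ metric, so the sequential characterization of upper hemicontinuity is valid. For the application in Lemma~\ref{lem:5}, $C$ is constant and equal to a compact set, so both hemicontinuity properties are trivial.
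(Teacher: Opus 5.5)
The paper does not prove Theorem~\ref{thm:A.1}. It quotes the result from Berge (1963) and only illustrates it with the example $F(x,\vartheta)=\vartheta x$ on $C(\vartheta)=[-1,1]$. So your self-contained argument fills in what the paper leaves out. It is the standard sequential proof, and it is correct:
\begin{itemize}
\item Weierstrass gives $M(\vartheta)\neq\emptyset$.
\item $M(\vartheta)$ is closed in the compact set $C(\vartheta)$.
\item Upper hemicontinuity of $C$ places the limit in $C(\vartheta)$.
\item Lower hemicontinuity together with joint continuity of $F$ gives optimality of the limit.
\item The two one-sided bounds give continuity of $v$. The upper bound still works if $\limsup_n v(\vartheta_n)$ were a priori $+\infty$, because your further subsequence forces $v(\vartheta_{n_k})\to v(\vartheta)$.
\end{itemize}

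Your version also proves more than the paper's formulation. The display \eqref{eq:A.1} is only the closed-graph property. You additionally prove the subsequence half of upper hemicontinuity: any $x_n\in M(\vartheta_n)$ has a subsequence converging into $M(\vartheta)$. That extra half is exactly what the proof of Corollary~\ref{cor:A.1} relies on when it says ``from the compactness, it has a further subsequence''. Compactness of the individual values $M(\vartheta_n)$ does not give this. It comes either from your argument (upper hemicontinuity of the compact-valued $C$) or from all $C(\vartheta)$ lying in one fixed compact set.

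One correction to your closing remark. It is accurate for Step~3 of Lemma~\ref{lem:5}, where the feasible set is the fixed compact set $\cA$, but not for Step~4. There, even after intersecting with a bounded box, the feasible set of \eqref{eq:5.3} is cut out by the constraints $y_p((G\beta)_p+b)\ge 1$, which depend on $G$. So that feasible correspondence is not constant, and its lower hemicontinuity must be checked separately. One way is to use a strictly feasible point: scaling the equality-feasible point of Lemma~\ref{lem:4} by a factor larger than one makes every constraint strict.
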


The content of the theorem is stated in one sentence as follows: the optimal value
moves continuously, while the optimal solution may jump, but the point to which it
jumps always belongs to the solution set of the limiting problem.

We give a minimal example. Let $X=\R$, $\Theta=\R$, $C(\vartheta)=[-1,1]$ and
$F(x,\vartheta)=\vartheta x$. Then $v(\vartheta)=|\vartheta|$ is continuous, while
\[
M(\vartheta)=\begin{cases}
\{1\} & \text{if }\vartheta>0,\\
[-1,1] & \text{if }\vartheta=0,\\
\{-1\} & \text{if }\vartheta<0,
\end{cases}
\]
so that the optimal solution jumps from $\{1\}$ to $\{-1\}$ at $\vartheta=0$.
Nevertheless~\eqref{eq:A.1} holds, because for $\vartheta_n\to 0$ we have
$x_n\in M(\vartheta_n)$ and indeed $x\in M(0)=[-1,1]$. That is, the upper
hemicontinuity states that the solution never appears at an irrelevant place, while
it allows the solution set to expand suddenly.

\begin{corollary}
\label{cor:A.1}
In the situation of Theorem~\ref{thm:A.1}, assume in addition that $M(\vartheta)$
is a singleton for each $\vartheta$. Then the map $\vartheta\mapsto M(\vartheta)$
is continuous.
\end{corollary}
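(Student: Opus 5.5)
The plan is to deduce the continuity from the upper hemicontinuity in Theorem~\ref{thm:A.1}, using the compactness of the values of $C$ to upgrade upper hemicontinuity to sequential continuity. Write $M(\vartheta)=\{x(\vartheta)\}$. Fix $\vartheta\in\Theta$ and a sequence $\vartheta_n\to\vartheta$, and put $x_n=x(\vartheta_n)$. We must show $x_n\to x(\vartheta)$. I will argue by contradiction: suppose not. Then there exist $\varepsilon>0$ and a subsequence, still denoted $(x_n)$, with $d(x_n,x(\vartheta))\ge\varepsilon$ for all $n$.

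The key step is to extract a convergent sub-subsequence of $(x_n)$. A bare metric space need not provide one, so here I use that $C$ is a continuous correspondence with non-empty compact values. The set $K=\{\vartheta\}\cup\{\vartheta_n:n\ge1\}$ is compact in $\Theta$, and the image $C(K)=\bigcup_{\eta\in K}C(\eta)$ of a compact set under an upper hemicontinuous, compact-valued correspondence is compact. I expect this to be the main technical point. I would either cite it as a standard fact or prove it directly. For the direct proof, take $x_n\in C(\vartheta_n)$. If infinitely many $\vartheta_n$ coincide, we are inside one compact set $C(\vartheta_n)$. Otherwise, use the sequential characterization of upper hemicontinuity for compact-valued correspondences: $d(x_n,C(\vartheta))\to0$, so we may pick $x_n'\in C(\vartheta)$ with $d(x_n,x_n')\to0$. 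Compactness of $C(\vartheta)$ then gives a subsequence $x_{n_k}'\to\bar x$, hence $x_{n_k}\to\bar x$.

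With $x_{n_k}\to\bar x$ in hand, I would apply~\eqref{eq:A.1} with $\vartheta_{n_k}\to\vartheta$ and $x_{n_k}\in M(\vartheta_{n_k})$. This yields $\bar x\in M(\vartheta)=\{x(\vartheta)\}$, so $\bar x=x(\vartheta)$. On the other hand, $d(x_{n_k},x(\vartheta))\ge\varepsilon$ for every $k$, so passing to the limit gives $d(\bar x,x(\vartheta))\ge\varepsilon$, which is a contradiction. Hence $x(\vartheta_n)\to x(\vartheta)$. Since $\Theta$ is a metric space, sequential continuity is continuity.

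Finally, I would note that in the application in Lemma~\ref{lem:5} the correspondence is the constant compact set $\cA$ (or its primal analogue). In that case the extraction step is immediate from the compactness of $\cA$, so the delicate part of the argument is not needed there.
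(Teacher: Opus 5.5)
Your proof is correct and takes the same route as the paper: extract a convergent subsequence, use \eqref{eq:A.1} to put its limit in $M(\vartheta)$, and use uniqueness to identify that limit, with your contradiction argument equivalent to the paper's ``every subsequence has a further subsequence converging to $M(\vartheta)$''. Your justification of the extraction step, via $d(x_n,C(\vartheta))\to 0$ from upper hemicontinuity together with compactness of $C(\vartheta)$, usefully fills in what the paper compresses into the phrase ``from the compactness''.
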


\begin{proof}
Let $\vartheta_n\to\vartheta$ and $x_n=M(\vartheta_n)$. Take any subsequence of
$\{x_n\}$. From the compactness, it has a further subsequence converging to some
$x$. From~\eqref{eq:A.1} we have $x\in M(\vartheta)$, and from the uniqueness we
have $x=M(\vartheta)$. That is, every subsequence has a further subsequence
converging to the same limit $M(\vartheta)$, so that the whole sequence converges
to $M(\vartheta)$.
\end{proof}

We explain why this theorem is necessary. The solution of the SVM cannot be
written as an explicit function of $G$, because the set of the active constraints
changes with $G$. As long as the active set is fixed, the Karush--Kuhn--Tucker
conditions form a system of linear equations and one can obtain the
differentiability by the implicit function theorem. On the boundary at which the
active set switches, however, the differentiability is lost in general. What we
need in this paper is not the differentiability but only the continuity, and
Theorem~\ref{thm:A.1} gives it from the continuity of the objective function and
the feasible correspondence alone.

In order to apply the theorem, we have to supply the two conditions. As for the
continuity of the correspondence, we derive the a priori bound $\|\alpha\|\le K_0$
from the strict concavity and restrict the feasible set to a compact set which
does not depend on $G$. A constant correspondence is trivially continuous, so that
the most delicate condition of the theorem is met without any effort. As for the
uniqueness, we use Lemma~\ref{lem:4}.

\subsection{Fatou's lemma}
\label{app:A.2}

\begin{theorem}[Fatou's lemma]
\label{thm:A.2}
Let $\{f_n\}$ be a sequence of non-negative measurable functions on a measure
space $(X,\mathcal{F},\nu)$. Then
\begin{equation}
\int\liminf_{n\to\infty}f_n\,d\nu
\le\liminf_{n\to\infty}\int f_n\,d\nu.
\label{eq:A.2}
\end{equation}
\end{theorem}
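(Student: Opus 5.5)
The statement to prove is Fatou's lemma: for non-negative measurable $f_n$, $\int\liminf f_n\,d\nu\le\liminf\int f_n\,d\nu$. I will derive it from the monotone convergence theorem, which I take as the basic property of the Lebesgue integral of non-negative functions.

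\emph{Step 1: an increasing sequence below the $f_n$.} Put
\[
g_k=\inf_{n\ge k}f_n .
\]
Each $g_k$ is a countable infimum of non-negative measurable functions, so it is non-negative and measurable (with values in $[0,\infty]$). The sequence $g_k$ is non-decreasing in $k$. By the definition of $\liminf$, $g_k\uparrow\liminf_{n}f_n$ pointwise.

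\emph{Step 2: compare integrals for fixed $k$.} For every $n\ge k$ we have $g_k\le f_n$ pointwise. Monotonicity of the integral then gives $\int g_k\,d\nu\le\int f_n\,d\nu$ for all $n\ge k$. Taking the infimum over $n\ge k$ yields
\[
\int g_k\,d\nu\le\inf_{n\ge k}\int f_n\,d\nu .
\]

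\emph{Step 3: pass to the limit.} Let $k\to\infty$. By monotone convergence, the left-hand side tends to $\int\liminf_n f_n\,d\nu$. The right-hand side is non-decreasing in $k$ and tends to $\liminf_n\int f_n\,d\nu$ by definition. This gives \eqref{eq:A.2}.

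The only delicate point is Step~3, where the interchange of limit and integral rests entirely on monotone convergence applied to the increasing sequence $g_k$. Two details there need checking. First, measurability of the infimum must be verified, using that a countable infimum of measurable functions is measurable. Second, values $+\infty$ must be allowed for the integrals throughout; with extended-real arithmetic all the inequalities above remain valid.
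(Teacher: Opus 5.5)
Your proof is correct and complete. It is the standard derivation of Fatou's lemma from the monotone convergence theorem applied to $g_k=\inf_{n\ge k}f_n\uparrow\liminf_n f_n$, and you handle both the measurability of the infimum and the extended-real values properly. The paper gives no proof of its own: it treats the lemma as a textbook result (citing Durrett) and only illustrates it with the example $f_n=n\cdot I_{[0,1/n]}$. Your argument is therefore exactly the standard proof the paper defers to, and non-negativity enters precisely where you invoke monotone convergence.
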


In the language of probability, for non-negative random variables $\{X_n\}$ it
holds that $E\{\liminf X_n\}\le\liminf E\{X_n\}$.

The content of the lemma is stated as follows: a non-negative mass may escape in
the limiting procedure, but it never emerges.

We give the standard example in order not to mistake the direction of the
inequality. Let $X=[0,1]$ with the Lebesgue measure and $f_n=n\cdot I_{[0,1/n]}$.
For each $x>0$ we have $f_n(x)=0$ for sufficiently large $n$, so that
$\liminf f_n=0$ almost everywhere. On the other hand, $\int f_n=1$ for every $n$.
Hence~\eqref{eq:A.2} is a strict inequality. The mass of one escaped through the
pointwise convergence as a thin spike of height $n$ and width $1/n$. We note that
the non-negativity is necessary, because for $g_n=-f_n$ we have
$\int\liminf g_n=0$ and $\liminf\int g_n=-1$, so that~\eqref{eq:A.2} does not hold.

We explain why this lemma is necessary. In Corollary~\ref{cor:1}, what we have is
the convergence in distribution $e(1)\wto e^*(1)$ only, which does not imply the
convergence of the expectation in general. On the other hand, what we need is the
lower bound $E\{e^*(1)\}\le\liminf E\{e(1)\}$, which is exactly the direction of
the inequality in~\eqref{eq:A.2}. That is, in order to show the inconsistency, the
lower bound suffices and Fatou's lemma is enough.

Since~\eqref{eq:A.2} is a statement about the almost sure convergence, we use the
following bridge.

\begin{lemma}
\label{lem:7}
If $X_d\ge 0$ and $X_d\wto X$, then $E\{X\}\le\liminf_d E\{X_d\}$.
\end{lemma}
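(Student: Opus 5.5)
The plan is to reduce Lemma~\ref{lem:7} to Fatou's lemma (Theorem~\ref{thm:A.2}) through the Skorokhod representation theorem. Convergence in distribution says nothing about the underlying probability space. The expectations $E\{X_d\}$ and $E\{X\}$ depend only on the laws of $X_d$ and $X$. So we may replace the random variables by copies on a common space where the convergence is almost sure, and then apply \eqref{eq:A.2}.

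Step 1. Since $X_d\wto X$ are real-valued, Skorokhod's representation theorem gives a probability space $(\Omega',\mathcal F',P')$ carrying random variables $X_d'$ and $X'$. These satisfy $X_d'\overset{d}{=}X_d$ for each $d$, $X'\overset{d}{=}X$, and $X_d'\to X'$ $P'$-almost surely. An explicit choice is to take $\Omega'=(0,1)$ with Lebesgue measure and let $X_d'$, $X'$ be the quantile functions of $X_d$, $X$. Quantile functions converge at every continuity point of the limiting quantile function, hence almost everywhere.

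Step 2. The non-negativity transfers. Since $P(X_d\ge 0)=1$, we get $P'(X_d'\ge 0)=1$, and after modifying on a null set we may take $X_d'\ge 0$ everywhere. With the quantile construction this holds automatically. Also $X'\ge 0$ almost surely, because it is an almost sure limit of non-negative variables.

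Step 3. Almost surely we have $\liminf_d X_d'=X'$. Fatou's lemma (Theorem~\ref{thm:A.2}) applied to $f_d=X_d'$ on $(\Omega',\mathcal F',P')$ then gives
\[
E'\{X'\}=E'\{\liminf_d X_d'\}\le\liminf_d E'\{X_d'\}.
\]
Equality in law gives $E'\{X'\}=E\{X\}$ and $E'\{X_d'\}=E\{X_d\}$, with values in $[0,\infty]$, and this is the claim.

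The only delicate point is Step 1, the existence of the representation. If one prefers to avoid Skorokhod, there is an alternative. For each $M>0$, the function $x\mapsto\min(x,M)$ is bounded and continuous on $[0,\infty)$, so $E\{\min(X_d,M)\}\to E\{\min(X,M)\}$. Moreover $E\{\min(X_d,M)\}\le E\{X_d\}$. This yields $E\{\min(X,M)\}\le\liminf_d E\{X_d\}$, and letting $M\to\infty$ by monotone convergence finishes the proof. I would present the Skorokhod route, since it matches the paper's framing via Fatou, and mention the truncation argument as a check.
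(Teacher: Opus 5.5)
Your proof is correct and follows the same route as the paper: the Skorokhod representation gives almost-sure convergence on a common space, Fatou's lemma applies there, and equality in law transfers the expectations back. Your additional details are sound: nonnegativity of the quantile representation, and the truncation check via $\min(X_d,M)$, which needs $X\ge 0$ a.s.\ (this follows from the Portmanteau theorem applied to the closed set $[0,\infty)$).
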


\begin{proof}
From the Skorokhod representation theorem \citep{billingsley1999}, there exist
random variables $Y_d$ and $Y$ on a common probability space such that $Y_d$ and
$X_d$ have the same distribution, $Y$ and $X$ have the same distribution, and
$Y_d\to Y$ almost surely. Applying Theorem~\ref{thm:A.2} to $\{Y_d\}$
\citep{durrett2019}, we obtain $E\{Y\}\le\liminf E\{Y_d\}$. Since the
expectation depends only on the distribution, we obtain the result.
\end{proof}

\begin{remark}
\label{rem:A.1}
In our case, we actually have the equality. Since $e(1)$ is a probability, it
holds that $0\le e(1)\le 1$, that is, the sequence is uniformly bounded. Applying
the Portmanteau theorem to the identity function on $[0,1]$, we obtain
$E\{e(1)\}\to E\{e^*(1)\}$. We use Fatou's lemma because the lower bound is what
we need and because it does not require the uniform boundedness. When one measures
the error rate in another scale which is not bounded, the lower bound given by
Fatou's lemma becomes essential.
\end{remark}

\section*{Acknowledgments}
The author thanks anonymous readers for helpful comments.
Parts of the drafting and technical editing of this manuscript were assisted by
large language models, including Anthropic's Claude Opus~5.
The author takes full responsibility for the correctness of all mathematical
statements, proofs, and numerical results.

\end{document}